\documentclass[10pt]{article} 
\usepackage[preprint]{tmlr}


\usepackage{amsmath,amsfonts,bm}









\def\eqref#1{equation~\ref{#1}}









\def\1{\bm{1}}










\DeclareMathAlphabet{\mathsfit}{\encodingdefault}{\sfdefault}{m}{sl}
\SetMathAlphabet{\mathsfit}{bold}{\encodingdefault}{\sfdefault}{bx}{n}











\newcommand{\Var}{\mathrm{Var}}


\def\bbR{\mathbb{R}}

\def\bfO{\mathbf{O}}

\def\pa{\mathsf{pa}}
\def\ch{\mathsf{ch}}

\def\hat{\widehat}
\def\tilde{\widetilde}


\usepackage{hyperref}
\usepackage{url}
\usepackage{algorithm}
\usepackage{algorithmic}
\usepackage{mathtools}
\usepackage{amsmath}
\usepackage{amssymb}
\usepackage{amsthm}
\usepackage{url}
\usepackage{graphicx}
\usepackage{newtxmath}
\usepackage{bm}
\usepackage{enumitem}
\usepackage{tablefootnote}
\usepackage{bbding}  
\usepackage{pifont}

\usepackage{booktabs}

\usepackage{float}
\usepackage{makecell}
\usepackage{xcolor}
\usepackage{subfig}
\newtheorem{theorem}{Theorem}
\newtheorem{lemma}{Lemma}

\newtheorem{assumption}{Assumption}
\newtheorem{proposition}{Proposition}
\newtheorem{corollary}{Corollary}
\def\bbP{\mathbb{P}}
\def\sfp{\mathsf{f}}
\def\sfs{\mathsf{s}}

\title{Score-matching-based Structure Learning for Temporal Data on Networks}


\author{\name Hao Chen \email chen\_hao1@sjtu.edu.cn \\
      \addr School of Mathematical Sciences, Shanghai Jiao Tong University
      \and
      \name Kai Yi \email kyi@mrclmb.cam.ac.uk \\
      \addr University of Cambridge}



\begin{document}

\maketitle

\begin{abstract}
Causal discovery is a crucial first step in establishing causality from empirical data and background knowledge. Numerous algorithms have been developed for this purpose. Among them, the score-matching method has demonstrated superior performance across various evaluation metrics, particularly for the commonly encountered Additive Nonlinear Causal Models. However, current score-matching-based algorithms are primarily designed to analyze independent and identically distributed (i.i.d.) data. More importantly, they suffer from high computational complexity due to the pruning step required for handling dense Directed Acyclic Graphs (DAGs). To enhance the scalability of score matching, we have developed a new parent-finding subroutine for leaf nodes in DAGs, significantly accelerating the most time-consuming part of the process: the pruning step. This improvement results in an efficiency-lifted score matching algorithm, termed Parent Identification-based Causal structure learning for both i.i.d. and temporal data on networKs, or PICK. The new score-matching algorithm extends the scope of existing algorithms and can handle static and temporal data on networks with weak network interference. Our proposed algorithm can efficiently cope with increasingly complex datasets that exhibit spatial and temporal dependencies, commonly encountered in academia and industry. The proposed algorithm can accelerate score-matching-based methods while maintaining high accuracy in real-world applications.
\end{abstract}

\section{Introduction}
\label{sec:intro}

Causal discovery is the problem of identifying causal relationship from empirical data, with diverse applications spanning fields such as transcription regulation in genomics \citep{maathuis2010predicting}, systems biology \citep{sachs2005causal}, and e-commerce \citep{sharma2015estimating}.  
Statistically, the main objective of causal discovery is to infer the underlying Directed Acyclic Graph (DAG) structure from empirical data, whether observational, interventional, or a combination of both \citep{li2023causal, spirtes2000causation, spirtes2001anytime, jaber2020causal}. Classical causal discovery methods can be roughly divided into constraint-based and score-based methods. Recently, a method based on the idea of ``score matching'' \citep{score-matching} was developed. A dense DAG is first generated by score-matching and then pruned by regularized regression techniques \citep{CAM-method}. The pruning step needs to detect parents of leaf nodes in a causal DAG, but this step can be computationally expensive. In this paper, we develop a novel algorithm based on the idea of variance comparison, which will be made clear later. It turns out that the method can significantly reduce the computational cost while maintaining high accuracy, in particular when the structural equation model associated with the causal DAG is nonlinear.

 
To discover the DAG structure using score-matching, one first estimates the score function by using Stein's lemma (aka Gaussian integration-by-parts) \cite{stein1972bound}, and then produces a dense DAG whose leaf nodes are identified by comparing variances of the corresponding scores. The resulting dense graph is then pruned by CAM pruning developed in \citet{CAM-method}.
The pruning step, however, is a computational bottleneck as its time complexity is of the cubic order in the number of nodes \citep{montagna2023scalable}.

To improve the computational efficiency of state-of-the-art score-matching methods, we develop a new algorithm, called PICK, for Parent Identification-based Causal structure learning for both i.i.d. and temporal data on networKs. 
In PICK, we develop a parent identifying subroutine that can significantly alleviate the computational burden of many existing causal discovery algorithms. More specifically, PICK can learn causal DAGs within non-linear additive noise models (ANM), without the linearity assumption which can be overly restrictive in real-world scenarios. Moreover, the PICK is provably consistent under mild assumptions.

The motivation of the subroutine lies in the observation that the variance of the score function\footnote{In our method, the score function of any node $i$ is defined as the $i$-th element of the gradient of the log-likelihood function with respect to the sample point.} with respect to a certain node is comprised of both the variability of the node itself and that of its children, see Figure~\ref{fig:intuition}. Removing a leaf node leads to the reduction of the variance of the score function with respect to its parent, resulting in decreased variance, while the score function of other non-parent nodes remains unaffected. 
This then provides us with a method to identify the parent nodes of a specific leaf node. A brief example of the main algorithm procedure is provided in Figure~\ref{fig:framework}. 

Besides, temporal network data have gain much attention in many fields. The main challenge of handling such data structure is that the exogeneity assumption of the root nodes (with respect to the causal DAG) no longer holds in time series settings. This is because root node can be influenced by nodes from previous time steps, necessitating the design of a new method for identifying leaf nodes. 



In practice, data can manifest as either \emph{static} or \emph{temporal}, depending on the tasks and objectives. 
In both cases, different sample points may exhibit dependencies, most commonly in the form of network interference \citep{relational-data-causal-discovery,relational-data-causal-discovery2,fan2023directed}. Therefore, we evaluate our algorithm on both synthetic and real datasets. In static i.i.d. data, PICK is approximately 10 times faster than the existing score-matching-based methods, while still maintaining high statistical accuracy. In the case of temporal data with network interference, the recovered adjacency matrices encoding causal relationship by PICK achieve superior accuracy within acceptable running time, compared to baselines.

The main contributions of the paper are summarized as follows.
\begin{enumerate}
    \item 
    We present a novel parent-finding subroutine PICK for DAG structure learning, with the goal of improving the computational efficiency compared to traditional score-matching-based methods for both static and temporal data.
    \item We prove that PICK consistently identifies the underlying DAG structure; in particular, our assumption for consistency is generally weak and can be applicable to a broad range of scenarios in practice.
    \item PICK is shown, in both numerical experiments and real data analysis, to be a computationally-efficient algorithm with high accuracy.

\end{enumerate}




\begin{figure}[htbp] 
  \centering           
  \subfloat[Algorithm Framework]   
  {      \label{fig:framework}\includegraphics[width=0.47\linewidth]{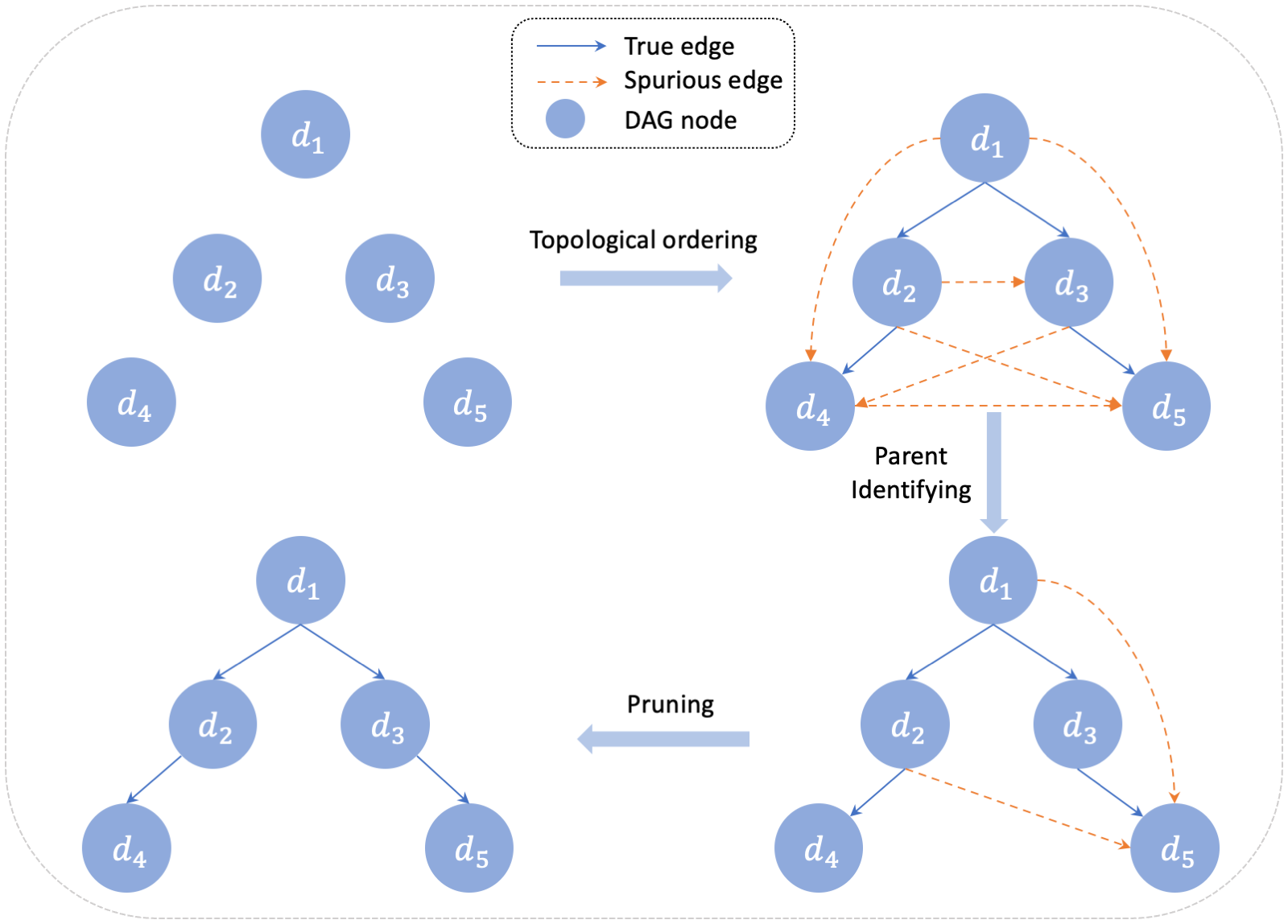}
  }
  \subfloat[Parent Identifying Intuition]
  {      \label{fig:intuition}\includegraphics[width=0.47\linewidth]{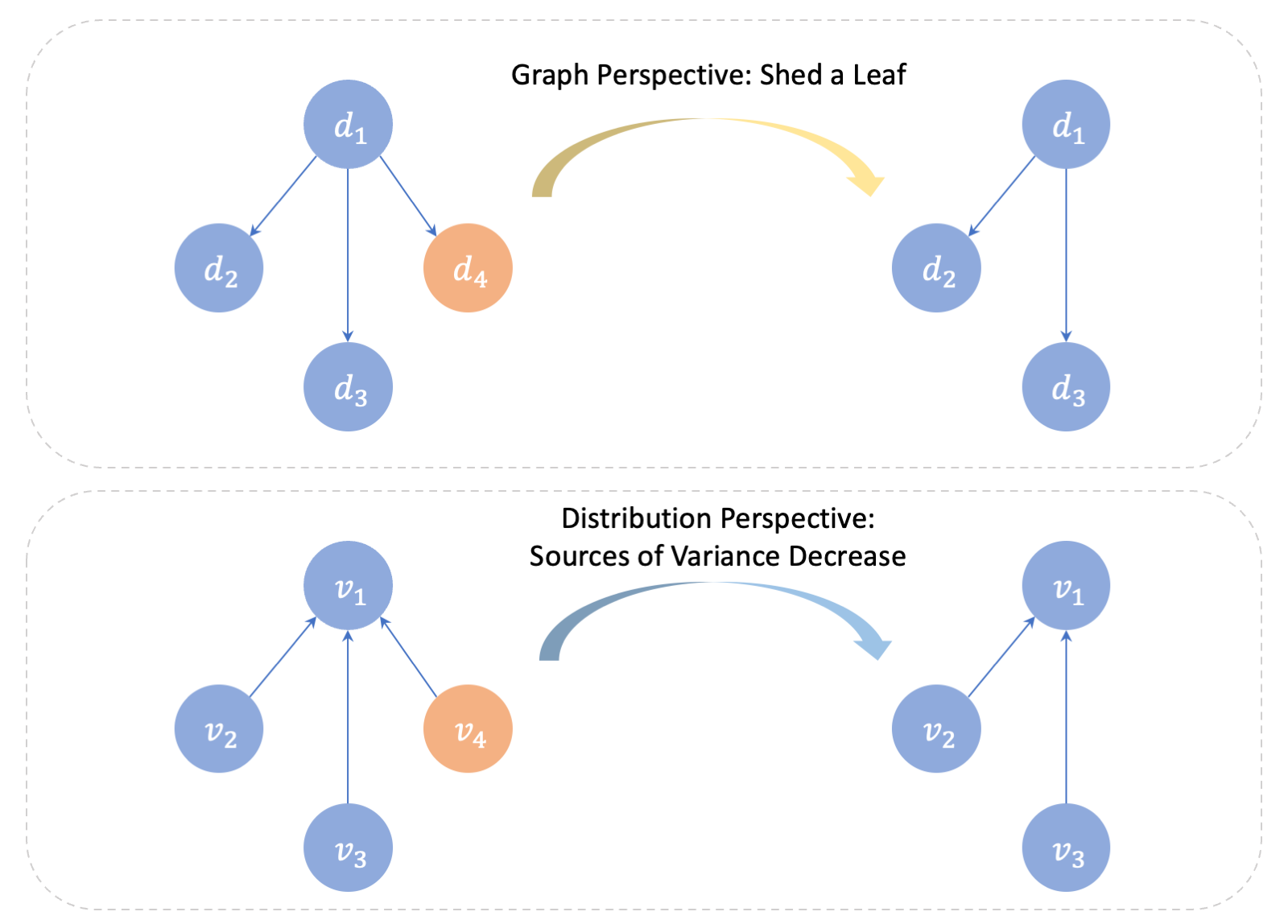}
  }
  \caption{The left provides a brief overview of the main framework of the PICK algorithm. The topological ordering and parent identification procedures are executed within a loop, with each iteration identifying one leaf node and its corresponding parent nodes. The right illustrates the intuition behind our parent identification subroutine. Notably, in the causal DAG (upper panel), although node \(d_1\) is the parent (cause) of nodes \(d_2\), \(d_3\), and \(d_4\), the variance of its score function is influenced by its three child nodes.}   
  \label{fig:main idea}          
\end{figure}

\section{Related works}
Bayesian Networks (BN), or probabilistic Directed Acyclic Graphs (DAG) models are widely adopted in the statistics, theoretical computer science, and machine learning community to model complex dependencies or causal structures in high-dimensional datasets \citep{pearl2003bayesian, ott2003finding, pearl1988probabilistic, koller2009probabilistic, wainwright2008graphical, spirtes2000causation, squires2023causal, bakshi2024structure}. 
Causal discovery provides an effective method for identifying DAG among variables through observations.
Most contemporary causal discovery methods, notably constraint-based algorithms such as PC and FCI, are tailored for data analysis under the classical independent and i.i.d. sampling scheme \citep{heckerman1995learning, geiger1994learning, spirtes1995causal}. These methods have been proven to be point-wise consistent in the infinite sample limit \citep{robins2003uniform, uhler2013geometry}. Given that causal discovery poses an NP-hard combinatorial optimization challenge \citep{chickering1996learning}, numerous heuristic alternatives have emerged, including score-based methods like Greedy Equivalence Search (GES) and its extensions \citep{chickering2002optimal, hasan2023kgs, JMLR:continuous-anm, HuaZha-generalized-score}. More recently, the machine learning and optimization communities have reconceptualized the discrete optimization problem in causal discovery as a continuous one by employing an innovative (semi-)algebraic and smooth characterization of DAG \citep{zheng2018dags, lachapelle2019gradient, sober-structure-learning, bhattacharya2021differentiable}. These efforts benefit from recent advances in smooth optimization techniques \citep{beck2017first}. Furthermore, Reinforcement Learning (RL)-based methods have also been increasingly applied to causal discovery \citep{zhu2019causal, wang-rl}.

Dynamic Bayesian Networks (DBNs) has been applied to various fields including clinical disease prognosis (\citep{DBN-clinical,DBN-lateral-sclerosis}), neuroscience (\citep{DBN-neurosci}), and etc. Hence, various kinds of methods have been proposed to learn these models. In \citet{pamfil2020dynotears}, the structure learning problem is turned into a continuous optimization problem in which the adjacency matrix of DAG is parameterized and finally obtained through optimization methods. Constraint-based methods have also been applied in learning DBNs (\citep{SVAR-condition-independ}). A framework solving causal discovery with randomly missing time series data is proposed in \citet{yuxiao2023cuts}. In this work, a Delayed Supervision Graph Neural Network (DSGNN) is used to predict the latent missing data. 

Graph-structured data has garnered some recent attention from areas in both science and industry, due to its flexibility in modeling dependent or structured data, such as images, small molecules or protein sequences \citep{DBLP:journals/corr/KipfW16,Kim2022MolNetAC,VisGNN,FlexibleGNNProtein}. Constraint-based causal discovery methods for static graph data have been proposed in pioneering works such as \citet{relational-data-causal-discovery} and \citet{relational-data-causal-discovery2}. In these works, conditional independence tests are used to orient bivariate causal relationship. Moreover, graph structured temporal data are also very common in real-world scenarios. For instance, in a social network, one subscriber's preference of a restaurant could be affected by both restaurant information and the preference of subscriber's friends. Therefore, causal discovery methods for such kind of data are needed. 
In \citet{fan2023directed}, a causal discovery method for temporal data on networks is proposed in which the causal DAG is parameterized and obtained by minimization of loss. In this paper, the causal relationship between several commercial factors are estimated.

In \citet{huang2020causal}, a framework termed as Constraint-based causal Discovery from heterogeneous/NOnstationary Data (CD-NOD) is proposed  for causal discovery from heterogeneous or non-stationary data with data generating process changing across domains or over time. In this work, constraint-based causal skeleton recovery procedure is firstly proposed and a method for determining causal directions are designed by exploiting the independent changes in data distributions. Furthermore, a nonparametric approach to extract latent low-dimensional representation is developed by exploiting KPCA techniques \citep{scholkopf1998nonlinear} which investigates eigenvalue decomposition problems.

\section{Preliminaries and Problem Setup}
\label{sec:prelim}

\paragraph{Notation}
Before proceeding, we collect some notation frequently used throughout the paper. For any positive integer $T$, we let $[T] \coloneqq \{1, \cdots, T\}$. Hereafter, we simply interpret $[T]$ as time points. We denote $X^{(t)}_{j}$ as the corresponding random variable for node $j$ at time step $t$; when a second subscript is attached, i.e. $X_{i, j}^{(t)}$, it means the corresponding random variable for the $i$-th observation unit from the given dataset. Given two functions $f(x)$ and $g(x)$, we write $f(x) = \Theta(g(x))$ if there exist absolute constants $c_1, c_2 > 0$ such that $c_1 g (x) \leq f(x) \leq c_2 g (x)$ for all $x$ sufficiently large.

\paragraph{Problem Setup}
\label{sec:setup}

Suppose that we have, at our disposal, a dataset comprised of a sequence of observations $\bfO_{T} \equiv \bfO \equiv \{O^{(1)}, \cdots, O^{(T)}\}$ measured repeatedly over a time horizon of length $T$, where for $t \in [T]$, $O^{(t)} \coloneqq (X^{(t)}, A^{(t)})$ consists of two sets of information, with $X^{(t)} \in \bbR^{n \times d}$ recording the values of $d$ features across $n$ different units at time $t$, and $A^{(t)} \in \{0, 1\}^{n \times n}$ encoding the (binary) adjacency matrix describing the dependency structure over the $n$ units at time $t$. Here we follow the convention that $A_{i, j}^{(t)} = 1$ if at $t$ the $i$-th and $j$-th units are connected and $A_{i, j}^{(t)} = 0$ if otherwise. For network graph at time step $t$ d, we invoke the following assumption
\begin{assumption}\label{asmp:sparse graph}
         (Graph sparsity) Denote the edge set of $G_t$ as $E_t$ and we assume that $\#E_t=\mathcal{O}(n)$.
\end{assumption}

We further assume that for each unit, the observations are random draws from a common probability distribution $\bbP$ with probability density function $\sfp$ that (1) Markov factorizes according to a DAG $G \equiv G (V, E)$ with $V$ the set of $d$ nodes corresponding to the $d$ features and $E$ the set of directed edges encoding direct causal effects from the incoming to the outgoing node with time lag $p_l$. We let $W \in \{0, 1\}^{d \times d}$ denote the adjacency matrix corresponding to the DAG $G$. $W$ naturally induces a topological ordering over the $d$ features, denoted as $\Pi = (\pi_{1}, \cdots, \pi_{d})$, a permutation of $[d]$. To describe the time-lag effects, we define adjacency matrices $P^{(k)}$ for $k = 1, \cdots, p$, where $P_{i, j}^{(k)} = 1$ if feature $j$ at time $t - k$ has a causal influence on feature $i$ at time $t$. With slight abuse of terminologies, we interpret $W$ as the intra-snapshot graph and $P^{(k)}$'s as the inter-snapshot graphs. We further let $s (x) \equiv \nabla \sfp (x)$ be the score function of $\sfp$ with respect to the random variable $X$, and $J_i (x) \coloneqq \frac{\partial s_i (x)}{\partial x_i}$ be the diagonal element of the Jacobian of the score function.

We further assume the following Structural Vector Autoregressive (SVAR) model \citep{demiralp2003searching} that generates the observations:
\begin{equation}
\label{eq:generation}
X^{(t)}_j = f_j \left(X^{(t)}_{\pa (j)}, \hat{X}^{(t - 1)}_{\pa (j)}, \dots, \ \hat{X}^{(t - p)}_{\pa (j)}\right) + z_{j},
\end{equation}
where $z_{j}$ denotes the exogenous Gaussian white noise (hence we assume causal sufficiency), $\pa (j) \equiv \pa_{G} (j)$ denotes the parent set of node $j$, including $j$ itself, with respect to the DAG $G$, and $f_{j}$ is a possibly nonlinear function that stays invariant in time. In the above model, it is noteworthy that we introduced a new set of variables $\hat{X}^{(t)}$ that aggregates information from the neighbor samples via sample-wise adjacency matrix $A$. In particular, we let $\hat{X}^{(t)} \coloneqq \hat{A}^{(t)} X^{(t)}$, where $\hat{A}^{(t)} = D^{-\frac{1}{2}} \Tilde{A} D^{-\frac{1}{2}}$, $\Tilde{A}=A + I$ and $I$ is the identity matrix. Here, $D$ is a diagonal matrix with $D_{ii}=\sum_{j}\Tilde{A}_{ij}$. We adopt such a formulation by following \citet{fan2023directed}, which is commonly used in modeling network or graph-structured data. To our knowledge, this particular aggregation rule can be traced back to \citet{kipf2016semi}. The reason that we use $\tilde{A} = A + I$ instead of $\tilde{A}$ is that the status of a variable in the past, in general, also influences its future status. The purpose of further normalizing the aggregation by $D^{-1 / 2}$ is to bring the aggregated variables back to the same scale.

\section{The PICK Algorithm}\label{sec:model}

\subsection{Illustration of the main idea under i.i.d. sampling scheme}

Before introducing DAGs estimation algorithm for non-i.i.d. data, we first consider the dynamic situation without network interference to illustrate some useful properties of the leaf and parent nodes. These properties are the key to our algorithmic development. In particular, without network interference, SVAR model \eqref{eq:generation} reduces to 
\begin{equation}
\label{eq:generation iid}
X^{(t)}_j=f_j \left(X^{(t)}_{\pa (j)}, X^{(t - 1)}_{\pa (j)}, \dots, X^{(t - p)}_{\pa (j)}\right) + z_{j},    
\end{equation} 
where all the aggregated variables $\hat{X}_{\pa (j)}^{(t - k)}$ in the past are replaced by the non-aggregated version $X_{\pa (j)}^{(t - k)}$, a consequence of the i.i.d. assumption.

We now present the following result, in a similar spirit to Lemma 1 in \citet{score-matching}, which offers critical insight for developing the aforementioned leaf-finding subroutine.

\begin{lemma}\label{lemma:hess-leaf}
Let $\Bar{X}^{(t)} \coloneqq (X^{(t)},\dots,X^{(t-p)})$. For any node $i$, we have:

(\romannumeral1) Node $i$ is a leaf $\iff$ $\forall{x}$ in the sample space of $X_{i}$, $\frac{\partial \sfs_i^{(t)}}{\partial x_i^{(t)}}(x)\equiv c$ for some constant $c$ that is independent of $x$, or equivalently $\Var_{\Bar{X}^{(t)}}(\frac{\partial \sfs_i^{(t)}}{\partial x^{(t)}_i}(\Bar{x}^{(t)})) \equiv 0$;

(\romannumeral2) Node $i$ is a leaf and $j \in \pa (i)$ $\iff$ $\sfs_i^{(t)}(x)$ depends on $x_j$, or equivalently $\Var_{\Bar{X}^{(t)}}(\frac{\partial \sfs_i^{(t)}}{\partial x_j}(\Bar{X}^{(t)}))$$\neq 0$.
\end{lemma}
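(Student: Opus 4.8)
The plan is to make the score component $\sfs_i^{(t)}$ completely explicit from the structural equations and then read off both statements by differentiating once more and comparing variances. First I would use that \eqref{eq:generation iid} is Markov of order $p$, so the density of $\Bar{X}^{(t)}=(X^{(t)},\dots,X^{(t-p)})$ factors as $\sfp(\Bar{x}^{(t)})=\sfp(x^{(t)}\mid x^{(t-1)},\dots,x^{(t-p)})\,\sfp(x^{(t-1)},\dots,x^{(t-p)})$; since the map $x^{(t)}\mapsto(z_1^{(t)},\dots,z_d^{(t)})$ with $z_j^{(t)}\coloneqq x_j^{(t)}-f_j(\cdot)$ is triangular in the topological order $\Pi$ with unit Jacobian and the $z_j$ are independent Gaussians, the conditional factor is $\propto\exp\!\big(-\sum_j (z_j^{(t)})^2/(2\sigma_j^2)\big)$. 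The past marginal does not contain $x_i^{(t)}$, no future time slice enters $\Bar{x}^{(t)}$ at all, and among the $d$ Gaussian exponents only those indexed by $j\in\{i\}\cup\ch_G(i)$ contain $x_i^{(t)}$ (as the left-hand variable when $j=i$, and as a contemporaneous parent inside $f_j$ when $i\in\pa(j)$). Hence
\[
\sfs_i^{(t)} \;=\; -\frac{z_i^{(t)}}{\sigma_i^2} \;+\; \sum_{k\in\ch_G(i)}\frac{z_k^{(t)}}{\sigma_k^2}\,\frac{\partial f_k}{\partial x_i^{(t)}},
\]
and, using $\partial z_k^{(t)}/\partial x_i^{(t)}=-\partial f_k/\partial x_i^{(t)}$,
\[
\frac{\partial \sfs_i^{(t)}}{\partial x_i^{(t)}} \;=\; -\frac{1}{\sigma_i^2} \;+\; \sum_{k\in\ch_G(i)}\frac{1}{\sigma_k^2}\Big(z_k^{(t)}\,\frac{\partial^2 f_k}{\partial (x_i^{(t)})^2} - \big(\tfrac{\partial f_k}{\partial x_i^{(t)}}\big)^2\Big).
\]

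For part~(i) I would observe that if $i$ is a leaf then $\ch_G(i)=\emptyset$, the sum disappears, $\partial\sfs_i^{(t)}/\partial x_i^{(t)}\equiv-1/\sigma_i^2$, and its variance is $0$. For the converse I would argue by contraposition: suppose $i$ has a child and let $k^{\star}$ be the element of $\ch_G(i)$ that is largest in $\Pi$; by maximality $x_{k^{\star}}^{(t)}$ is not a contemporaneous parent of any $k\in\ch_G(i)$ nor of $i$, so it enters the displayed expression for $\partial\sfs_i^{(t)}/\partial x_i^{(t)}$ only through its own residual $z_{k^{\star}}^{(t)}$, and there only linearly with coefficient $\sigma_{k^{\star}}^{-2}\,\partial^2 f_{k^{\star}}/\partial(x_i^{(t)})^2$. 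Conditioning on $\Bar{X}^{(t)}$ with the $k^{\star}$-coordinate removed and applying the law of total variance then gives $\Var_{\Bar{X}^{(t)}}(\partial\sfs_i^{(t)}/\partial x_i^{(t)})\ge\E\big[\sigma_{k^{\star}}^{-4}(\partial^2 f_{k^{\star}}/\partial(x_i^{(t)})^2)^2\,\Var(X_{k^{\star}}^{(t)}\mid\Bar{X}^{(t)}_{-k^{\star}})\big]$, which is strictly positive: the Gaussian residuals make the joint density everywhere positive, so the conditional variance is a.s.\ positive, and the nonlinear additive-noise assumption forces $\partial^2 f_{k^{\star}}/\partial(x_i^{(t)})^2\not\equiv0$ (a true parent appears non-affinely in $f_{k^{\star}}$). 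Equivalently, were $\partial\sfs_i^{(t)}/\partial x_i^{(t)}$ constant, differentiating it in $x_{k^{\star}}^{(t)}$ would give $\partial^2 f_{k^{\star}}/\partial(x_i^{(t)})^2\equiv0$, i.e.\ $f_{k^{\star}}$ affine in $x_i^{(t)}$ — the linear-Gaussian degeneracy the assumption excludes. This settles both formulations in (i).

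For part~(ii) I would note that once (i) has certified $i$ is a leaf, the sum in the formula for $\sfs_i^{(t)}$ is empty, so $\sfs_i^{(t)}=-(x_i^{(t)}-f_i(x^{(t)}_{\pa(i)},x^{(t-1)}_{\pa(i)},\dots))/\sigma_i^2$ and $\partial\sfs_i^{(t)}/\partial x_j=\sigma_i^{-2}\,\partial f_i/\partial x_j$. Thus $\sfs_i^{(t)}$ depends on $x_j$ exactly when $f_i$ does, which — under the convention that $f_i$ depends nontrivially on each listed argument — happens exactly for $j\in\pa(i)$; invoking once more nonlinearity of $f_i$ in each of its arguments, $\partial f_i/\partial x_j$ is nonconstant precisely in that case, so $\Var_{\Bar{X}^{(t)}}(\partial\sfs_i^{(t)}/\partial x_j)\ne0\iff j\in\pa(i)$. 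Chaining with (i) gives (ii); I would flag that the ``$\Leftarrow$'' direction is to be read after the leaf test of (i) has already selected $i$, since at a non-leaf node $\sfs_i^{(t)}$ can depend on $x_j$ through $i$'s children.

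The hard part will be the converse in (i): a priori the several summands over $\ch_G(i)$ could conspire to make $\partial\sfs_i^{(t)}/\partial x_i^{(t)}$ constant, so one must exhibit a coordinate on which the expression genuinely depends in a controllable way. Isolating the $\Pi$-last child $k^{\star}$ — whose current value occurs in the expression only linearly and only via its own residual — is the device that breaks this symmetry and reduces everything to the non-degeneracy $\partial^2 f_{k^{\star}}/\partial(x_i^{(t)})^2\not\equiv0$ supplied by the nonlinear additive-noise assumption; this nonlinearity is genuinely needed, since in the linear-Gaussian model $\partial\sfs_i^{(t)}/\partial x_i^{(t)}$ is constant even at interior nodes. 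The remaining ingredients — that no future slice feeds into $\sfs_i^{(t)}$, that the past marginal drops out under $\partial/\partial x_i^{(t)}$, and the unit-Jacobian factorization — are immediate from the order-$p$ Markov structure of \eqref{eq:generation iid} and carry over verbatim if one instead works directly with the conditional density of $X^{(t)}$ given the past.
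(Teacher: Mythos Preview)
Your proposal is correct and follows essentially the same route as the paper: both factor the joint density via the Markov structure, write $\sfs_i^{(t)}$ explicitly as $-z_i^{(t)}/\sigma_i^2$ plus a sum over $\ch_G(i)$, differentiate once more, and invoke nonlinearity of the $f_j$'s to conclude that $\Var(\partial\sfs_i^{(t)}/\partial x_i^{(t)})>0$ exactly when $\ch_G(i)\ne\emptyset$. The only substantive difference is in the converse of~(i): the paper asserts the lower bound $\Var\ge\sum_{j\in\ch(i)}\sigma_j^{-2}\,\E\big[(\partial^2 f_j/\partial x_i^{(t)2})^2\big]$ directly from the independence of each noise $z_j^{(t)}$ from its multiplier, whereas you isolate the $\Pi$-last child $k^{\star}$ and condition on $\bar{X}^{(t)}_{-k^{\star}}$ to extract a single strictly positive term. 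Your device is a bit more explicit about why the summands cannot cancel (when children of $i$ are themselves parents of other children of $i$, the paper's independence argument needs more care than is written), at the cost of yielding a weaker numerical bound. Your treatment of~(ii) is also more detailed than the paper's ``similarly,'' and you rightly flag that the $\Leftarrow$ direction of~(ii) only makes sense once (i) has already certified $i$ as a leaf.
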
 
Lemma~\ref{lemma:hess-leaf} simply tells us that one can determine if a node is a leaf by computing its score function and the derivative of the score function, and then testing if the derivative is a constant.

Upon identifying the leaf nodes, the next step is to detect the parent set of the leaves. Lemma~\ref{lemma:hess-leaf} (ii) cannot be directly used for parent-set detection. Fortunately, the next theorem suggests a practically useful strategy by comparing the variances of certain ``partial scores'', before and after leaving the current leaf node out.

\begin{theorem}\label{thm:parent node}
Assume that the link function in \eqref{eq:generation iid} is smooth and its partial derivative is not identically equal to zero.
    Denote the joint probability density function of $\Bar{X}^{(t)}$ induced by $\sfp$ as $\sfp^{(t)}$ and $\sfs_{j,t^\prime}^{(t)} (\bar{x}^{(t)}) \coloneqq \frac{\partial\log \sfp^{(t)}}{\partial x_i^{(t^{\prime})}}$.
    For each leaf node $l$ at time step $t$, any node $j$ at $t^{\prime}$ with $t^{\prime} \in \{t,t-1,\dots,\ t-p\}$ is a parent of $l$ at $t$ if and only if 
    \begin{equation}
        \mathrm{Var}(\sfs_{j,t^\prime}^{(t)}(\Bar{X}^{(t)}))>\mathrm{Var}(\sfs_{j,t^\prime}^{(t)}(\Bar{X}^{(t)}_{\backslash\{l\}}))
    \end{equation}
    where $\sfs_{j,t^\prime}^{(t)} (\bar{x}_{\setminus \{l\}}^{(t)})$ is defined in the same way as $\sfs_{j,t^\prime}^{(t)} (\bar{x}^{(t)})$ except that node $l$ at $t$ is removed.
\end{theorem}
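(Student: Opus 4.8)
The plan is to imitate, in the temporal SVAR setting, the computation behind Lemma~\ref{lemma:hess-leaf}: write the joint log-density of $\Bar{X}^{(t)}$ through its Markov factorization, isolate the \emph{single} factor that involves the leaf node $l$ at time $t$, and then track how each partial score changes when that coordinate is marginalized out. Concretely, I would first use that $\{X^{(t)}\}$ is a $p$-th order Markov process, so $\sfp^{(t)}$ is a genuine density on $\Bar{X}^{(t)}=(X^{(t)},\dots,X^{(t-p)})$ and factorizes as
\begin{equation}
\sfp^{(t)}(\Bar{x}^{(t)}) = \Bigl[\textstyle\prod_{m=1}^{d} p\bigl(x_m^{(t)} \mid x_{\pa(m)}^{(t)},\dots,x_{\pa(m)}^{(t-p)}\bigr)\Bigr]\, q\bigl(x^{(t-1)},\dots,x^{(t-p)}\bigr),
\end{equation}
where $q$ is the joint density of $(X^{(t-1)},\dots,X^{(t-p)})$ and contains no time-$t$ variable. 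Because $l$ is a leaf at the most recent time step, the only factor depending on $x_l^{(t)}$ is its own conditional, which by \eqref{eq:generation iid} and Gaussianity of $z_l$ equals $p(x_l^{(t)}\mid\cdot)=(2\pi\sigma_l^2)^{-1/2}\exp\{-(x_l^{(t)}-f_l(\cdot))^2/(2\sigma_l^2)\}$ with $\sigma_l^2\coloneqq\Var(z_l)$. Marginalizing $x_l^{(t)}$ out just deletes this factor, so pointwise
\begin{equation}
\log\sfp^{(t)}_{\setminus\{l\}}\bigl(\Bar{x}^{(t)}_{\setminus\{l\}}\bigr) = \log\sfp^{(t)}(\Bar{x}^{(t)}) + \frac{\bigl(x_l^{(t)}-f_l(x_{\pa(l)}^{(t)},\dots,x_{\pa(l)}^{(t-p)})\bigr)^2}{2\sigma_l^2} + \mathrm{const}.
\end{equation}

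Next I would differentiate this identity in $x_j^{(t')}$ with $t'\in\{t,\dots,t-p\}$, obtaining $\sfs_{j,t'}^{(t)}(\Bar{x}^{(t)}) = \sfs_{j,t'}^{(t)}(\Bar{x}^{(t)}_{\setminus\{l\}}) + \Delta_{j,t'}(\Bar{x}^{(t)})$ with correction $\Delta_{j,t'}(\Bar{x}^{(t)}) = \sigma_l^{-2}\,(x_l^{(t)}-f_l(\cdot))\,(\partial f_l/\partial x_j^{(t')})(\cdot)$, which vanishes identically exactly when $f_l$ does not involve $x_j^{(t')}$, i.e.\ when node $j$ at $t'$ is not a parent of $l$ at $t$. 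Evaluated along the process, $x_l^{(t)}-f_l(\cdot)=z_l$ and $(\partial f_l/\partial x_j^{(t')})(\cdot)$ is a function of $\Bar{X}^{(t)}_{\setminus\{l\}}$ alone (it never sees $x_l^{(t)}$); moreover, since the leaf $l$ at the newest time has no descendant among the in-window coordinates, its exogenous noise $z_l$ is independent of $\Bar{X}^{(t)}_{\setminus\{l\}}$, hence of both $\sfs_{j,t'}^{(t)}(\Bar{X}^{(t)}_{\setminus\{l\}})$ and $\partial f_l/\partial x_j^{(t')}$. Using $\E[z_l]=0$, $\E[z_l^2]=\sigma_l^2$, this independence forces $\E[\Delta_{j,t'}]=0$ and $\Cov(\sfs_{j,t'}^{(t)}(\Bar{X}^{(t)}_{\setminus\{l\}}),\Delta_{j,t'})=0$, so expanding the variance of a sum gives
\begin{equation}
\Var\bigl(\sfs_{j,t'}^{(t)}(\Bar{X}^{(t)})\bigr) - \Var\bigl(\sfs_{j,t'}^{(t)}(\Bar{X}^{(t)}_{\setminus\{l\}})\bigr) = \Var\bigl(\Delta_{j,t'}(\Bar{X}^{(t)})\bigr) = \frac{1}{\sigma_l^2}\,\E\Bigl[\bigl(\tfrac{\partial f_l}{\partial x_j^{(t')}}(X_{\pa(l)}^{(t)},\dots,X_{\pa(l)}^{(t-p)})\bigr)^2\Bigr].
\end{equation}
The theorem then drops out of this identity: if $j$ at $t'$ is not a parent of $l$ at $t$, the right-hand side is $0$ and the two variances agree, so the strict inequality fails; if $j$ at $t'$ is a parent, then $\partial f_l/\partial x_j^{(t')}\not\equiv0$ by hypothesis and, being continuous, is nonzero on a nonempty open set which receives positive mass (Gaussianity of the exogenous noise makes the joint observational density strictly positive), so the right-hand side is strictly positive.

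The step I expect to be the main obstacle is legitimizing the factorization and the marginalization in the genuinely temporal regime: one must verify that $\{X^{(t)}\}$ is $p$-th order Markov so that $\sfp^{(t)}$ is well defined on $\Bar{X}^{(t)}$ and factorizes as above, that the leaf $l$ at time $t$ has no descendant among the $(p+1)d$ in-window coordinates (this is precisely what makes its own conditional the sole carrier of $x_l^{(t)}$ and what makes $z_l$ independent of $\Bar{X}^{(t)}_{\setminus\{l\}}$, using causal sufficiency), and that the partial scores are square-integrable so the variance expansion is valid. Once that bookkeeping is in place, the remaining probabilistic computation — that the cross term vanishes and the variance gap equals $\sigma_l^{-2}\,\E[(\partial f_l/\partial x_j^{(t')})^2]$ — is routine, and the strict positivity at the end is the same non-degeneracy argument already used for the leaf criterion of Lemma~\ref{lemma:hess-leaf}.
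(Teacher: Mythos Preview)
Your proposal is correct and follows essentially the same route as the paper's own proof: both arguments use the Markov factorization of $\sfp^{(t)}$, isolate the single Gaussian factor carrying $x_l^{(t)}$, and exploit the independence of the leaf noise $z_l$ from $\Bar{X}^{(t)}_{\setminus\{l\}}$ to kill the cross-covariance and obtain the variance gap $\sigma_l^{-2}\,\E[(\partial f_l/\partial x_j^{(t')})^2]$. The only cosmetic difference is that the paper splits into the cases $t'=t$ and $t'<t$ and writes the full variance as a sum over all children of $j$ before removing the $l$-term, whereas you go directly to the decomposition $\sfs_{j,t'}^{(t)}=\sfs_{j,t'}^{(t)}(\Bar{X}^{(t)}_{\setminus\{l\}})+\Delta_{j,t'}$, which handles both cases at once.
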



Theorem~\ref{thm:parent node} has the following important implication: given a leaf node, the partial score with respect to its parent should be greater than the same partial score except for excluding the given leaf. The above two results serve as the foundations of the structure learning algorithm (the PICK algorithm) to be introduced in the next section.

In the static setting, Theorem~\ref{thm:parent node} can be simplified drastically as follows with the same assumption.
\begin{corollary}\label{cor:parent node static}
For any node $i$ and leaf node $l$, $i$ is a parent node of leaf $l$ if and only if 
\begin{equation}
\Var(\sfs_i(X))>\Var(\sfs_i(X_{\backslash\{l\}})).
\end{equation}
\end{corollary}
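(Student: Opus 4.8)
The plan is to derive the corollary as the static specialization of Theorem~\ref{thm:parent node}. The key observation is that in the static (non-temporal, non-interference) setting the whole machinery collapses: there is only one time slice, so the augmented vector $\Bar{X}^{(t)} = (X^{(t)}, \dots, X^{(t-p)})$ reduces to the single snapshot $X$, the only admissible time index is $t' = t$, and the density $\sfp^{(t)}$ of $\Bar{X}^{(t)}$ is just the joint density $\sfp$ of $X$. Consequently the partial score $\sfs_{j,t'}^{(t)}(\Bar{x}^{(t)}) = \frac{\partial \log \sfp^{(t)}}{\partial x_i^{(t')}}$ becomes exactly $\sfs_i(x) = \frac{\partial \log \sfp}{\partial x_i}(x)$, and the "leave-one-leaf-out" object $\sfs_{j,t'}^{(t)}(\Bar{x}^{(t)}_{\setminus\{l\}})$ becomes $\sfs_i(x_{\setminus\{l\}})$, the $i$-th score computed from the marginal density of $X_{\setminus\{l\}}$.

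First I would make this reduction explicit: write out the SVAR model \eqref{eq:generation iid} with $p = 0$ (no lagged terms survive in the static case, or equivalently all lag-adjacency matrices $P^{(k)}$ are zero), so that $X_j = f_j(X_{\pa(j)}) + z_j$ is a standard additive nonlinear noise model on the DAG $G$, and verify that the smoothness and non-degeneracy hypotheses of Theorem~\ref{thm:parent node} (link function smooth, partial derivative not identically zero) are exactly the hypotheses assumed in the corollary. Second, I would instantiate the "if and only if" of Theorem~\ref{thm:parent node} at $t' = t$: node $i$ is a parent of leaf $l$ iff $\Var(\sfs_{i,t}^{(t)}(\Bar{X}^{(t)})) > \Var(\sfs_{i,t}^{(t)}(\Bar{X}^{(t)}_{\setminus\{l\}}))$. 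Third, I would substitute the identifications from the previous paragraph to rewrite both sides as $\Var(\sfs_i(X))$ and $\Var(\sfs_i(X_{\setminus\{l\}}))$ respectively, yielding the claimed inequality. One minor point to check is that "any node $j$ at $t'$ with $t' \in \{t\}$" ranges over all $d$ features, so the quantifier "for any node $i$" in the corollary is consistent with the theorem's quantifier.

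The only genuine subtlety — and the step I would treat most carefully — is confirming that the "removal of node $l$" operation in the theorem matches marginalization in the static case, i.e. that $\sfs_i(x_{\setminus\{l\}})$ is genuinely the score of the marginal of $X_{\setminus\{l\}}$ and not some conditional or plug-in variant; this is however already fixed by the definition given in the statement of Theorem~\ref{thm:parent node}, so no new argument is needed. Everything else is bookkeeping: once the dictionary $\Bar{X}^{(t)} \leftrightarrow X$, $\sfp^{(t)} \leftrightarrow \sfp$, $t' = t$ is in place, the corollary is a direct corollary in the literal sense, requiring no independent proof. I would therefore present it in two or three lines rather than as a standalone argument. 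If the paper instead wants a self-contained proof of the corollary (independent of the temporal theorem), the alternative is to replay the variance-decomposition argument directly: condition on $X_{\setminus\{l\}}$, note that $\sfs_i(X) = \sfs_i(X_{\setminus\{l\}}) + r_i(X)$ where the remainder $r_i$ vanishes identically unless $l$ is a child of $i$ (by the Markov/additive-noise structure, the log-density splits into a term involving $x_l$ and its parents plus a term not involving $x_l$), and then apply the law of total variance so that $\Var(\sfs_i(X)) = \Var(\sfs_i(X_{\setminus\{l\}})) + \E[\Var(r_i \mid X_{\setminus\{l\}})] + (\text{cross term})$, showing the cross term is nonnegative (or handling it via the tower property) and that the added nonnegative piece is strictly positive exactly when $l \in \ch(i)$, which for a leaf $l$ is equivalent to $i \in \pa(l)$.
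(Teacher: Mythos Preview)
Your proposal is correct and matches the paper's approach exactly: the paper states that the corollary ``is a direct consequence of Theorem~\ref{thm:parent node}'' and, in the appendix, that its proof ``follows easily from the proof of Theorem~\ref{thm:parent node}, and is hence omitted.'' Your primary plan---specialize the theorem to $p=0$ so that $\Bar{X}^{(t)}=X$, $t'=t$, $\sfp^{(t)}=\sfp$, and read off the inequality---is precisely this reduction, and your suggestion to present it in two or three lines is apt; the alternative self-contained variance-decomposition argument you sketch is not needed here but is consistent with how the paper actually proves Theorem~\ref{thm:parent node} (see \eqref{eq:var-score-t}).
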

This corollary is a direct consequence of Theorem~\ref{thm:parent node} and can be used to design scalable causal discovery algorithms for data in static i.i.d. setting (the same data generation process as in \citet{score-matching}).



\subsection{The algorithm}

We are now ready to present our new algorithm PICK that learns the underlying causal structure given temporal data in the presence of network interference among observation units. 

First, we illustrate how to estimate the score function and the diagonal elements of its Jacobian based on the (generalized) Stein's identity (or integration-by-part) \citep{stein1972bound,stein2004use,stein-hessian}. Recall that $\bar{X}^{(t)} \equiv (X^{(t)},{X}^{(t-1)}, \dots, {X}^{(t-p)})$\footnote{In our implementation, we replace $\bar{X}^{(t)}$ with $(X^{(t)},A^{(t-1)}{X}^{(t-1)},\dots,\ A^{(t-p)}{X}^{(t-p)})$ to obtain more stable performance}, with $\bar{\sfp}^{(t)}$ being the corresponding p.d.f and $s (\bar{x}^{(t)})$ being its score function. 
By Stein's identity \citep{stein1972bound}, given any sufficiently regular (multi-valued) test function $h^{(t)} (\cdot)$ such that $\lim_{x \to \infty} h^{(t)} (x) \bar{\sfp}^{(t)} (x) = 0$, one has
\begin{equation}
\label{stein}
\mathbb{E} [h^{(t)} (\bar{X}^{(t)}) \sfs^{(t)} (\bar{X}^{(t)})^{\top} + \nabla h^{(t)} (\bar{X}^{(t)})] \equiv 0.
\end{equation} 
Denote $H^{(t)} \coloneqq (h^{(t)}(\bar{X}_1^{(t)}),h^{(t)}(\bar{X}_2^{(t)}),\dots,\ h^{(t)}(\bar{X}_n^{(t)}))^{\top}$ as the matrix that concatenates the row vector $h^{(t)} (\bar{X}^{(t)})^{\top}$ over $n$ units. Following \cite{score-matching}, identity \eqref{stein} motivates the following ridge-regularized estimator $G^{(t)}$ of the score functions:
\begin{equation}\label{eq:score func estimate}
G^{(t)} = -(H^{(t)} H^{(t)\top} + \eta I)^{-1} H^{(t)} \sum_{k = 1}^{n} \nabla h^{(t)} (\bar{X}_{i}^{(t)}).
\end{equation}
Furthermore, by the second-order Stein's identity, we have
\begin{equation}
   \mathbb{E} \left[ q^{(t)}(\bar{X}^{(t)})\nabla^2\log \Bar{\sfp}^{(t)}(\bar{X}^{(t)}) \right]\equiv \mathbb{E} \left[ \nabla^2q^{(t)}(\bar{X}^{(t)})\right]
    -\mathbb{E} \left[q^{(t)}(\bar{X}^{(t)})\nabla\log \Bar{\sfp}^{(t)}(\bar{X}^{(t)})\nabla\log \Bar{\sfp}^{(t)}(\bar{X}^{(t)})^{\top} \right]
\end{equation}
where $q^{(t)}(x)$ is any test function such that $\lim_{x\to\infty}q^{(t)}(x)\Bar{\sfp}^{(t)}(x)=0$ and that $\mathbb{E}(\nabla^2q(x))$ exists. 
Similarly, we denote $Q^{(t)}=(q^{(t)}(\bar{X}_1^{(t)}),q^{(t)}(\bar{X}_2^{(t)}),\dots,\ q^{(t)}(\bar{X}_n^{(t)}))^{\top}$ as the matrix that aggregates the test functions over $n$ units, and then we can obtain the following ridge-regularized estimator of the Hessian 
\begin{align*}
J^{(t)}\equiv - \, \mathrm{diag}(G^{(t)}G^{(t)^\top})
+(Q^{(t)}Q^{(t)\top}+\eta I)^{-1}Q^{(t)}\sum\limits_{k=1}^n\nabla_{\mathrm{diag}}^2h(\bar{X}^{(t)}_k).
\end{align*}

Following results on ridge regression \citep{hoerl1970ridge,silva2015strong}, the above estimator is consistent for i.i.d. data. Given temporal data with network interference, Proposition~\ref{thm:score var convergence} below shows that the above method is also consistent under extra mild assumptions.

\begin{proposition}\label{thm:score var convergence}
Denote $Y^{(t)} \coloneqq (Y^{(t)}_{i}, i = 1, 2, \dots, n)$ as a set of $n$ independent random variables, each sharing the same marginal distribution as $X^{(t)}_{i}$. 
Under Assumption~\ref{asmp:sparse graph}, we could obtain that the sample variance estimator of score function, $\hat{\Var}(G^{(t)}) \coloneqq \sum_{i = 1}^{n} \frac{(G_i^{(t)}-\bar{G}^{(t)})^2}{n}$ in \eqref{eq:score func estimate} is consistent to its counterpart plugged in $Y^{(t)}$. 
\end{proposition}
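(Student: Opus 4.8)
The plan is to show that the sample variance of the ridge-estimated score $G^{(t)}$ under the true (network-dependent) data is asymptotically the same as what one would obtain had the $n$ units been drawn i.i.d.\ from the common marginal $X^{(t)}_i$, i.e., the network interference contributes only a negligible perturbation. The key structural fact to exploit is Assumption~\ref{asmp:sparse graph}: since $\#E_t = \mathcal{O}(n)$, the aggregation $\hat{X}^{(t)} = \hat{A}^{(t)} X^{(t)}$ mixes each unit's value with only $O(1)$ neighbors on average, and the normalization by $D^{-1/2}$ keeps every aggregated coordinate on the same scale as the original. So the dependence graph among the $n$ "observations" has bounded average degree, which is exactly the regime in which laws of large numbers for weakly dependent arrays apply.

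First I would set up the coupling: let $Y^{(t)} = (Y^{(t)}_i)_{i=1}^n$ be the independent copies, and write the difference $\bar{X}^{(t)} - (X^{(t)}, \hat{A}^{(t-1)} Y^{(t-1)}, \dots)$ — or more precisely bound the contribution of the off-diagonal aggregation terms to each $\bar{X}_i^{(t)}$. Because $\hat{A}^{(t)}$ has row sums controlled and $O(n)$ nonzero entries total, one shows that $\frac{1}{n}\sum_i \|\bar{X}_i^{(t)} - \bar{Y}_i^{(t)}\|^2 = o_P(1)$ where $\bar{Y}_i^{(t)}$ is the i.i.d.-aggregated analogue (or, depending on how strictly "same marginal" is read, that the empirical distribution of the $\bar{X}_i^{(t)}$ converges to the same limit as that of the i.i.d.\ array). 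Next I would propagate this through the ridge estimator \eqref{eq:score func estimate}: the map $(\bar{X}_1^{(t)},\dots,\bar{X}_n^{(t)}) \mapsto G^{(t)}$ is, for fixed regularization $\eta > 0$, a Lipschitz function of the empirical moments $\frac{1}{n} H^{(t)} H^{(t)\top}$ and $\frac{1}{n} H^{(t)} \sum_k \nabla h^{(t)}(\bar{X}_k^{(t)})$ (the matrix inverse is $\eta$-regularized hence $1/\eta$-Lipschitz in operator norm, and $h^{(t)}$ is assumed sufficiently regular). So it suffices to show these empirical averages converge to the same limits as under the i.i.d.\ array — a weak law for triangular arrays with $O(n)$-sparse dependence, which follows by a second-moment/Chebyshev argument since the number of correlated pairs is $O(n)$ and the variance of each average is $O(1/n)$. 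Finally, the same continuity argument applied to $\hat{\Var}(G^{(t)}) = \frac{1}{n}\sum_i (G_i^{(t)} - \bar{G}^{(t)})^2$, which is a smooth function of $G^{(t)}$, transfers the convergence to the variance estimator itself, giving $\hat{\Var}(G^{(t)}) - \hat{\Var}(G^{(t)}|_{Y}) = o_P(1)$.

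The main obstacle I expect is making the dependence-graph argument rigorous at the level needed for the ridge estimator rather than just for simple coordinatewise averages: $G^{(t)}$ couples all $n$ rows through the $n\times n$ Gram-type matrix $H^{(t)} H^{(t)\top}$, so one has to be careful that the "weak dependence" is preserved under this aggregation and that no heavy-tailed behavior in $h^{(t)}$ or $\nabla h^{(t)}$ spoils the second-moment bounds. Concretely, one needs a uniform moment bound on $h^{(t)}(\bar{X}_i^{(t)})$ and its gradient (which should follow from regularity of $h^{(t)}$ plus Gaussianity of the noise $z_j$ and smoothness of the $f_j$), plus a bound on how the sparsity of $A^{(t)}$ limits the number of $(i,j)$ pairs for which $\bar{X}_i^{(t)}$ and $\bar{X}_j^{(t)}$ share a neighbor — this is where Assumption~\ref{asmp:sparse graph} does the real work, since $\#E_t = \mathcal{O}(n)$ implies at most $O(n)$ such pairs. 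The remaining steps (Lipschitz continuity of the regularized inverse, continuous mapping for the variance functional) are routine and I would not grind through them.
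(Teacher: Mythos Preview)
Your proposal is correct and follows essentially the same route as the paper: use the $\mathcal{O}(n)$-sparsity of the network graph to bound the coupling distance $\sum_{i,j}\mathbb{E}\|X_i^{(t)}-Y_i^{(t)}\|_2^2 = o(n^2)$, transfer this through smooth bounded test functions (the paper invokes the RBF kernel explicitly) by a Lipschitz-plus-Chebyshev argument (packaged there as a separate lemma), and then pass to the sample-variance functional by continuity. You are in fact more careful than the paper about the $n\times n$ Gram-matrix coupling in the ridge estimator \eqref{eq:score func estimate}, which the paper's proof simply waves through with a one-line appeal to the kernel trick.
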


The proof of Proposition~\ref{thm:score var convergence} is deferred to online Appendix \ref{sec:pf and theoretical analysis}. Here we only make a few remarks. It is straightforward to see that the score and Hessian estimators are consistent under certain mild regularity conditions. We are thus left to investigate how neighborhood interference affects variance estimation. Both assumptions imposed in Proposition~\ref{thm:score var convergence} are not very stringent. 
The first assumption could be satisfied as most neighborhood interference would manifest in the correlation between variables instead of marginal distribution.
Secondly, the sparsity of a graph in the real life scenario such as social network connection is a very common property.

Due to space limitation, we defer the full algorithm to Appendix (see Algorithms~\ref{alg:dynamicDAG} and \ref{alg:dynamic-pruning}) and only present the core algorithm for temporal data in Algorithm \ref{alg:PICK-t}. After obtaining the estimated intra-snapshot adjacency matrix $W^{(t)}$ and inter-snapshot adjacency matrix $P^{(t)}$ at each time step $t$, we compute the average results for all time steps and apply a thresholding operation to determine if an edge exists in the final step of the algorithm. 
As our parent finding method could also be applied to statically-observed data, we also apply this idea to develop a similar algorithm for static data (see Algorithm~\ref{alg:staticDAG}). Numerical experiments demonstrate its superior finite-sample performance and computational efficiency compared to several other causal discovery methods.

\begin{algorithm}
	\caption{PICK-t}
	\label{alg:PICK-t}
\hspace*{0.02in} {\bf Input:}\\
$\hspace{1cm}\Bar{X}^{(t)}\coloneqq(X^{(t)},\ {X}^{(t-1)},\dots,\ {X}^{(t-p)})$, $(A^{(t)},\ \dots,\ A^{(t-p)})$\\
\hspace*{0.02in} {\bf Output:} Estimated $\hat{W}^t$ and $\hat{P}^{t}$ at time $t$
	\begin{algorithmic}[1]
        \STATE $W \gets (p+1) d \times d$ zero matrix
        \FORALL{$i \in\ \{1,2\dots,d\} $}
        \STATE Estimate score function\\ $\sfs_{\rm nodes}=\nabla p_{\rm nodes}(x^{(t)},x^{(t-1)},\dots,x^{(t-p)})$ and its diagonal hessian value $H$, where $H_i=\frac{\partial \sfs_{i}}{\partial x_i}$
        \STATE Assign $\sfs_{\rm nodes}$ to $\mathrm{newSCORE}$
        \STATE Remove the last $dp$ columns of $H$
       \STATE $l\gets\mathop{\arg\min}\limits_{i}(var(H_i))$
        \STATE $\mathrm{ParentSet}$ $\gets$ $\{i:\Var (\mathrm{newScore}_i)<\Var (\mathrm{oldScore}_i)\}$
       \STATE Set  coordinates of $\mathrm{ParentSet}$ in the $l$-th column of $W$ to $1$.
       \STATE Remove the leaf column in $X$
       \STATE Remove the leaf column in $\mathrm{newScore}$ and assign it to $\mathrm{oldScore}$

    \ENDFOR
        \STATE Prune $W$ to remove spurious edges and obtain $\hat{W}^{(t)}$ and $\hat{P}^{(t)}$ 
	\end{algorithmic}  
\end{algorithm}

\paragraph{Pruning Spurious Edges}

As in \citet{score-matching,montagna2023scalable,montagna2023causal}, we need an additional pruning step after parent identification and topological ordering step in the algorithm to cut spurious edges for better performance.

For static data, we simply use the CAM-pruning step in \citet{score-matching}. For temporal data with network interference, we provide a pruning algorithm based on Chebyshev inequality, as shown in Algorithm \ref{alg:dynamic-pruning}. The convergence analysis is provided in Theorem \ref{thm:pruning-converge}. In the actual implementation, however, we adopt general additive model as a procedure selecting appropriate parent nodes following the CAM-pruning step proposed in \citet{CAM-method}. 
This strategy has been shown to work well from the results of our experiments.



\paragraph{Convergence of PICK}\label{sec:theoretical-analysis}
First, we would present the consistency property of predicted inter-snapshot and intra-snapshot adjacency matrix by the following theorem. For simplicity, we only discuss the situation of $p=1$.
\begin{theorem}\label{thm:dynamic-consistency}
     Let the inter-snapshot and intra-snapshot DAG predicted by Algorithm \ref{alg:dynamicDAG} be $\hat{P}$ and $\hat{W}$ and we have
    \begin{equation}
        \lim\limits_{n\to \infty}\mathbb{P}(\hat{W}=W)=1,\quad \lim\limits_{n\to \infty}\mathbb{P}(\hat{P}=P)=1.
    \end{equation}
\end{theorem}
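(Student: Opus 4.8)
The plan is to prove Theorem~\ref{thm:dynamic-consistency} by induction on the elimination of leaf nodes, combining the population-level characterizations in Lemma~\ref{lemma:hess-leaf} and Theorem~\ref{thm:parent node} with the sample-level consistency guarantee of Proposition~\ref{thm:score var convergence}. The key observation is that Algorithm~\ref{alg:dynamicDAG} performs $d$ iterations; in each iteration it (a) estimates the score function $\sfs^{(t)}$ and its diagonal Jacobian via the Stein-based estimators \eqref{eq:score func estimate}, (b) identifies the leaf node as $\arg\min_i \widehat{\Var}(H_i)$, and (c) identifies the parent set by the variance-drop criterion of Theorem~\ref{thm:parent node} (or Corollary~\ref{cor:parent node static} in the static case). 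Since the true DAG $G$ (and the inter-snapshot structure $P$) has finitely many nodes $d$ and finitely many potential directed edges among the $(p+1)d$ relevant node-time pairs, it suffices to show that, with probability tending to $1$, \emph{every} leaf-selection step and \emph{every} parent-identification step is simultaneously correct. A union bound over the $O(d^2)$ such events reduces the problem to: for a fixed node $i$, $\widehat{\Var}(H_i) \to \Var(\partial \sfs_i^{(t)}/\partial x_i^{(t)})$, and for a fixed pair $(j, i)$, $\widehat{\Var}(\sfs_{j,t'}^{(t)}(\bar X^{(t)})) \to \Var(\sfs_{j,t'}^{(t)}(\bar X^{(t)}))$ and likewise for the leaf-deleted version, all in probability as $n \to \infty$.

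First I would set up the induction. In iteration $1$, by Lemma~\ref{lemma:hess-leaf}(i) a node $i$ is a leaf of $G$ at time $t$ iff $\Var(\partial \sfs_i^{(t)}/\partial x_i^{(t)}) = 0$, while any non-leaf has strictly positive variance; hence there is a population gap $\gamma_1 > 0$ separating the leaf's diagonal-Jacobian variance ($=0$) from that of every non-leaf. By Proposition~\ref{thm:score var convergence} — which guarantees that the Stein-based sample variance estimators on the network-interfered data are consistent for their i.i.d. counterparts $Y^{(t)}$, under Assumption~\ref{asmp:sparse graph} and the shared-marginal assumption — the empirical quantity $\widehat{\Var}(H_i)$ concentrates around the true value, so with probability $\to 1$ the $\arg\min$ selects a genuine leaf. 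Given the correct leaf $l$, Theorem~\ref{thm:parent node} gives another population gap: $\Var(\sfs_{j,t'}^{(t)}(\bar X^{(t)})) - \Var(\sfs_{j,t'}^{(t)}(\bar X^{(t)}_{\setminus\{l\}})) > 0$ exactly when $j$ (at $t'$) is a parent of $l$, and equals $0$ otherwise; here I would invoke the smoothness and non-degeneracy assumption on $f_j$ stated in Theorem~\ref{thm:parent node} to ensure the gap for true parents is bounded away from $0$. Consistency of the variance estimators, applied both to the full data and to the leaf-deleted data (which is again sparse-graph data satisfying the same assumptions, so Proposition~\ref{thm:score var convergence} still applies), makes the thresholding step \textbf{Line 7} of Algorithm~\ref{alg:PICK-t} recover $\pa(l)$ exactly with probability $\to 1$. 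The inductive step is identical once we note that after removing a correctly-identified leaf, the residual sub-DAG $G \setminus \{l\}$ is still a DAG satisfying Assumption~\ref{asmp:sparse graph} and the SVAR structure \eqref{eq:generation} (restricted to remaining nodes), so the same population characterizations and the same concentration results apply. Separating intra- from inter-snapshot edges is automatic: the parent criterion already tags each identified parent with its time index $t'$, and $t' = t$ contributes to $\hat W$ while $t' < t$ contributes to $\hat P$. Finally, averaging $W^{(t)}$, $P^{(t)}$ over $t \in [T]$ and thresholding (the last line of Algorithm~\ref{alg:PICK-t}) is innocuous because each per-$t$ estimate is already exactly correct with high probability, so the average is too.

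The main obstacle I expect is controlling the \emph{uniformity} of the concentration — specifically, ensuring that the $O(d^2)$ population gaps ($\gamma_1, \gamma_2, \dots$ across all iterations, and the parent-gaps across all candidate pairs) are bounded below by a single constant $\gamma > 0$ not shrinking with $n$, and that the error probability of each variance estimator can be driven below $\gamma/3$ uniformly. Proposition~\ref{thm:score var convergence} as stated only asserts consistency (convergence in probability) of a single estimator, not a rate or a uniform bound; to make the union bound go through one either needs a quantitative version (e.g. the estimation error is $o_p(1)$ uniformly over the finitely many nodes, which is fine since $d$ is fixed, not growing) or simply to note that for fixed $d$ a finite union of $o_p(1)$ events is still $o_p(1)$. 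The subtler point is the leaf-deletion step: after deleting $l$ the score function $\sfs_{j,t'}^{(t)}(\bar X^{(t)}_{\setminus\{l\}})$ is the score of a \emph{marginal} density, and one must verify the Stein estimator applied to the reduced feature set remains consistent under the same assumptions — this requires checking that marginalizing out a leaf preserves the regularity conditions (finite second moments of test functions, the boundary decay $h^{(t)}(x)\bar\sfp^{(t)}(x) \to 0$, etc.) and the sparsity structure, which should hold but needs an explicit remark. I would also need to handle the inter-snapshot root-node issue flagged in the introduction — that root nodes at time $t$ are not exogenous because of lagged influences — by working throughout with the \emph{augmented} node set $\bar X^{(t)} = (X^{(t)}, X^{(t-1)}, \dots, X^{(t-p)})$, within which Lemma~\ref{lemma:hess-leaf} and Theorem~\ref{thm:parent node} were already stated, so that exogeneity is recovered at the level of the noise variables $z_j$ in \eqref{eq:generation iid}. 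Everything else is routine bookkeeping: a finite induction, finitely many union bounds, and the per-step correctness handed to us by the lemmas.
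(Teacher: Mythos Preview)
Your proposal is correct and follows essentially the same route as the paper: both argue by iterating over the $d$ leaf-elimination steps, invoking Lemma~\ref{lemma:hess-leaf} for leaf identification and Theorem~\ref{thm:parent node} for parent recovery at the population level, then appealing to Proposition~\ref{thm:score var convergence} for sample-level consistency of the variance estimators, and closing with a union bound over the finitely many steps plus the observation that averaging over $t$ and thresholding preserves correctness. Your treatment is in fact more careful than the paper's own proof---you explicitly flag the population-gap uniformity, the marginalization-after-leaf-deletion regularity, and the augmented-node-set handling of lagged exogeneity, all of which the paper glosses over.
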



\begin{corollary}\label{cor:static consistency}
    Let the topological ordering and DAG predicted by Algorithm \ref{alg:staticDAG} as $\hat{\pi}$ and $\hat{W}$ and we have
    \begin{equation}
        \lim\limits_{n\to \infty}\mathbb{P}(\hat{\pi}\in \Pi)=1,\quad \lim\limits_{n\to \infty}\mathbb{P}(\hat{G}= G)=1.
    \end{equation}
\end{corollary}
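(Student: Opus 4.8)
The plan is to obtain Corollary~\ref{cor:static consistency} as the $p=0$, no-interference specialization of the argument behind Theorem~\ref{thm:dynamic-consistency}: I would reuse that skeleton with all temporal and network pieces stripped away. First I would record the population-level facts that drive Algorithm~\ref{alg:staticDAG}. Under the static ANM $X_j = f_j(X_{\pa(j)}) + z_j$ together with the causal-minimality hypothesis inherited from Theorem~\ref{thm:parent node} (each $f_j$ smooth with partial derivatives not identically zero), Lemma~\ref{lemma:hess-leaf}(i) says the true leaves of $G$ are exactly the nodes $i$ with $\Var(\partial \sfs_i/\partial x_i)=0$, while every non-leaf has $\Var(\partial \sfs_i/\partial x_i)>0$; and Corollary~\ref{cor:parent node static} says that for a leaf $l$ the true parents are exactly the nodes $i$ with $\Var(\sfs_i(X))-\Var(\sfs_i(X_{\setminus\{l\}}))>0$, all other nodes giving a gap of exactly $0$. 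Since $d$ is fixed while $n\to\infty$, I would let $\delta>0$ be the minimum, over all $d$ iterations and all relevant nodes, of these strictly positive population gaps; $\delta$ is a fixed constant.

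Next I would handle estimation error. For static i.i.d. data the ridge-regularized score and diagonal-Hessian estimators built from \eqref{eq:score func estimate} are consistent by the ridge-regression results cited immediately after it, and Proposition~\ref{thm:score var convergence} applies a fortiori (with no interference, $X^{(t)}$ and $Y^{(t)}$ coincide in distribution), so the empirical variances $\hat{\Var}$ of the relevant scores and score-derivatives converge in probability to their population counterparts. Hence there is an event $\mathcal{E}_n$ with $\mathbb{P}(\mathcal{E}_n)\to 1$ on which every empirical variance used anywhere in the run of Algorithm~\ref{alg:staticDAG} is within $\delta/2$ of its population value. On $\mathcal{E}_n$ the node returned by the $\argmin$ step is a genuine leaf of the current sub-DAG, and the comparison in the parent step — read with a vanishing threshold $\tau_n\to 0$ that dominates the estimation error — returns exactly the true parent set of that leaf.

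The core of the argument is then an induction over the $d$ iterations of the outer loop, using the elementary fact that deleting a leaf $l$ from a DAG leaves a DAG, and that under the additive-noise model the marginal law of the remaining coordinates $X_{\setminus\{l\}}$ again satisfies an ANM with the same structural functions restricted to $V\setminus\{l\}$. Consequently Lemma~\ref{lemma:hess-leaf} and Corollary~\ref{cor:parent node static} apply verbatim at every iteration after the leaf column has been removed and the score re-estimated. On $\mathcal{E}_n$, iteration $k$ removes the $k$-th node of a valid reverse topological order and correctly records its parent set in the working matrix $W$; therefore the reversed sequence of removed nodes lies in $\Pi$ (here understood as the set of topological orderings consistent with $G$), and $W$ is a super-DAG of $G$ whose restriction to the true edges is exactly $E$. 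A union bound over the $d$ iterations — a fixed number — keeps everything on an event of probability $\to 1$. Finally, CAM-pruning is consistent for additive models \citep{CAM-method,score-matching}: given a correct topological ordering it deletes precisely the spurious edges, so $\hat{G}=G$ on an event of probability $\to 1$, which yields both displayed limits.

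I expect the main obstacle to be the bookkeeping in the inductive step: one must (a) argue cleanly that marginalizing out a leaf preserves the ANM and causal-minimality structure, so the population characterizations continue to hold at every iteration, and (b) keep the estimation-error control uniform across iterations even though each iteration re-estimates the score on a data matrix from which a random, \emph{estimated} column has been dropped — conditioning on $\mathcal{E}_n$, where all earlier leaf choices were correct, is what makes (b) tractable, but it has to be set up carefully in tandem with the choice of the vanishing threshold $\tau_n$ in the parent step. A second, smaller point is the interface with CAM-pruning: one must state precisely the regularity conditions under which its cited consistency holds and check that they are implied by our standing assumptions.
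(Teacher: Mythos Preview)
Your proposal is correct and follows essentially the same route as the paper: the paper does not give a separate proof of Corollary~\ref{cor:static consistency} but treats it as the static specialization of the argument for Theorem~\ref{thm:dynamic-consistency}, which proceeds by (i) consistency of the score/Hessian estimators, (ii) applying the population leaf and parent characterizations at each iteration, (iii) a union bound over the $d$ loops, and (iv) convergence of the pruning step. Your plan is exactly this skeleton, and in fact you spell out several points---the uniform gap $\delta$, preservation of the ANM structure after leaf removal, and the need for a vanishing threshold in the parent comparison---that the paper's argument leaves implicit.
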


\paragraph{Algorithmic complexity}
We discuss the complexity of static situation first. Considering an input with $n$ samples and $d$ nodes, the overall complexity of our algorithm is $\Theta(dn^3+nd^2)$. On one hand, our algorithm could discover parent nodes in each loop with the estimation of score function which is computed during the the estimation of diagonal hessian matrix. Therefore the computing complexity of finding topological order is the same as SCORE \citep{score-matching}. On the other hand, experimentally and theoretically, the bottleneck of SCORE \citep{score-matching} is the first step of pruning approach, preliminary neighborhood search (PNS), amounts to complexity of $\Theta(nd^3)$. Due to the assumption of sparsity, the complexity of this procedure could be reduced to $\Theta(nd)$. 

For the dynamical situation, assuming an input with time step $T$, time-lagged time step number $p_l$ and sample number $n$ and node number $d$ for each time step, the complexity is $\Theta(T(pdn^3+p^2d^2))$.

\paragraph{Identifiability Discussion}
Identifiability is an important problem in SVAR models \citep{Kilian2017} and it illustrates that, given a structural equation model, the causal relationship graph could be idetified with observation data input. The identifiability of structure learning problem on temporal data has been studied in \citet{pamfil2020dynotears}. 
Here, we briefly discuss the identifiability of adjacency matrix $W$ that describes the causal relationships given temporal data as the input. Assume that we have an oracle to estimate the score function and its corresponded partial derivative, we can then find one leaf node for DAG following Lemma~\ref{lemma:hess-leaf}. After pruning this leaf node, we are able to discover its parents following Theorem~\ref{thm:parent node}. Repeating the above procedure, we can discover the ground truth topological ordering and the parents of each leaf node, which together recovers both the inter-snapshot and intra-snapshot causal graphs.

\section{Experiments}
\label{sec:experiment}

In this section, we apply PICK to both synthetic and real-world datasets to evaluate its finite-sample performance. 

\subsection{Synthetic data}
As discovering the underlying data generation machanisms of real-world scenarios is often difficult, it is extremely hard to obtain the ground truth causal structure. To validate the correctness and efficiency of our algorithm, we follow the setting in \citet{zheng2018dags,fan2023directed,pamfil2020dynotears} and conduct numerical experiments on synthetic data with known ground truth to simulate real-world scenarios. 
\paragraph{Dataset} 
We generate three types of synthetic data to evaluate the performance of our algorithm across various scenarios. The process of creating the necessary synthetic data involves two steps. First, we generate $W$, $P$, and $A$. Here, $W$ represents a directed acyclic graph at time $t$, generated using the Erd\H{o}s-Rényi model \citep{erdos1959random}.
For intra-snapshot graph $P$, we use the ER model \citep{erdHos1960evolution}. We set the sample size $n=1000$, and to generate the network graph with around $10n$ edges by setting the edge generation probability to $0.01$.
Next, we generate $(X_1,\dots,X_T)$ with two types of nonlinear functions including $f(x_1,\dots, x_k)=\sum\limits_{i=1}^k\sin x_i$ and nonlinear function generated by sampling from Gaussian processes using the RBF kernel with bandwidth 1. For noise term, we utilize Gaussian noise. Moreover, to compare PICK against baselines with a wide range
of the number of variables $d$, we vary $d\in\{10,15,20,25,30,35\}$ and set sample size $n=1000$. The length of time-series $T$ is set to 10. 

\paragraph{Baseline}
Since our focus is on nonlinear causal models, we use the GraphNOTEARS algorithm developed by \citet{fan2023directed} as one of our baselines. Consistent with the approach in \citet{fan2023directed}, we also select the following two methods as benchmarks.
\begin{itemize}
    \item NOTEARS \citep{zheng2018dags} + LASSO: This method includes two major steps. $W$ and $P^{(1)},\dots,\ P^{(p)}$ are estimated using NOTEARS and LASSO respectively. 
    
    $W$ is obtained through solving this optimization problem: Minimize $\mathcal{L}(W)=\frac{1}{2n}\sum_t||X^{(t)}-X^{(t)}W||_F^2+\lambda_W||W||_1$ s.t. $W$ is acyclic.

    The LASSO step is minimizing $\mathcal{L}(P)=\frac{1}{2n}\sum_t||X^{(t)}-\sum\limits_{k=1}^pA^{(t-k)}X^{(t-k)}P^{(t-k)}||_F+\lambda_P\sum\limits_{k=1}^p||P^{(t-k)}||_1$.
    \item DYNOTEARS \citep{pamfil2020dynotears} This method ignores the interactions among samples and obtains $W$ and $P$ by minimizing $\mathcal{L}(W,P^{(1)},\dots,P^{(p)})=\frac{1}{2n}\sum_t||(X^{(t)}-X^{(t)}W-\sum\limits_{k=1}^pX^{(t-k)}P^{(t-k)})||_F^2+\lambda_P\sum\limits_{k=1}^p||P^{(t-k)}||_1+\lambda_W||W||_1$ such that $W$ is acyclic.
\end{itemize}


\paragraph{Metrics}
We apply false discovery rate (FDR), structural Hamming distance (SHD) and true positive rate (TPR) to evaluate the predicted directed acyclic graph. SHD counts the number of missing, falsely detected or reversed edges. FDR computes the ratio between total number of predicted edges and falsely detected or reversed edges. TPR is the ratio between the total number of edges of the ground truth graph and the number of correctly predicted edges.
For FDR and SHD, lower is better and for TPR, higer is better. The results for SHD are shown in Figure~\ref{fig:dynamic-shd-intra-inter-snapshot-sin} while other results are deferred to the online Appendix. For each experiment, we repeated 10 times and report the average result together with the standard errors.\footnote{In some cases, the standard error is so small that the error bands around the lines in the figure are too narrow to be visible. This issue also occurs in the experiment figures for the static data.}

\paragraph{Implementation Details}
To utilize multiple-timestep data, we predict inter-snapshot and intra-snapshot causal relationship matrix denoted as $P_t$ and $W_t$ in each timestep $t$ with data from timestep $t-p$ to $t$ and compute the average results for all time steps. As elements of causal relationship matrix must be 1 or 0, we would use a hard thresholding method \citep{hardThre} with any element greater than threhold $\tau$ set to 1 and otherwise 0. In our experiments, we choose $\tau=0.4$.

\paragraph{Performance Evaluation}
We compare the performance of our algorithm and baselines for different DAG settings with different node and edge numbers and show the results in Figure~\ref{fig:dynamic-fdr-inter-snapshot-sin} and Figure~\ref{fig:dynamic-fdr-intra-snapshot-sin}. 
The results confirm that our algorithm can estimate both inter-snapshot and intra-snapshot causal relationships with high precision and within a reasonable time frame. This may be because all baseline methods assume a linear structural equation model, and only GraphNOTEARS takes sample interaction information into account.

\begin{figure}[htbp] 
  \centering           
  \subfloat[ER1-W]   
  {      \label{fig:dy-sin-shd-subfig1}\includegraphics[width=0.3\linewidth]{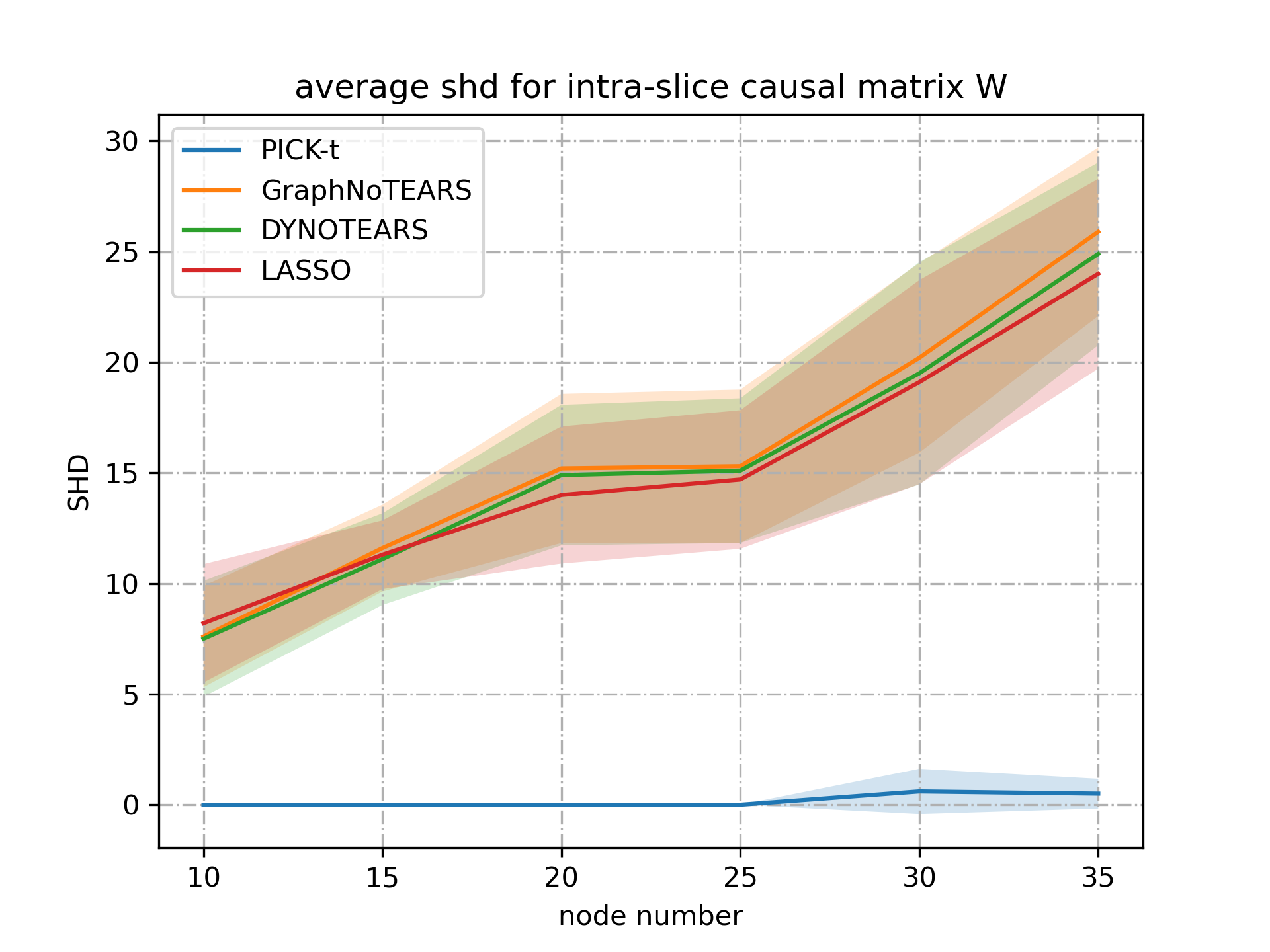}
  }
  \subfloat[ER2-W]
  {      \label{fig:dy-sin-shd-subfig2}\includegraphics[width=0.3\linewidth]{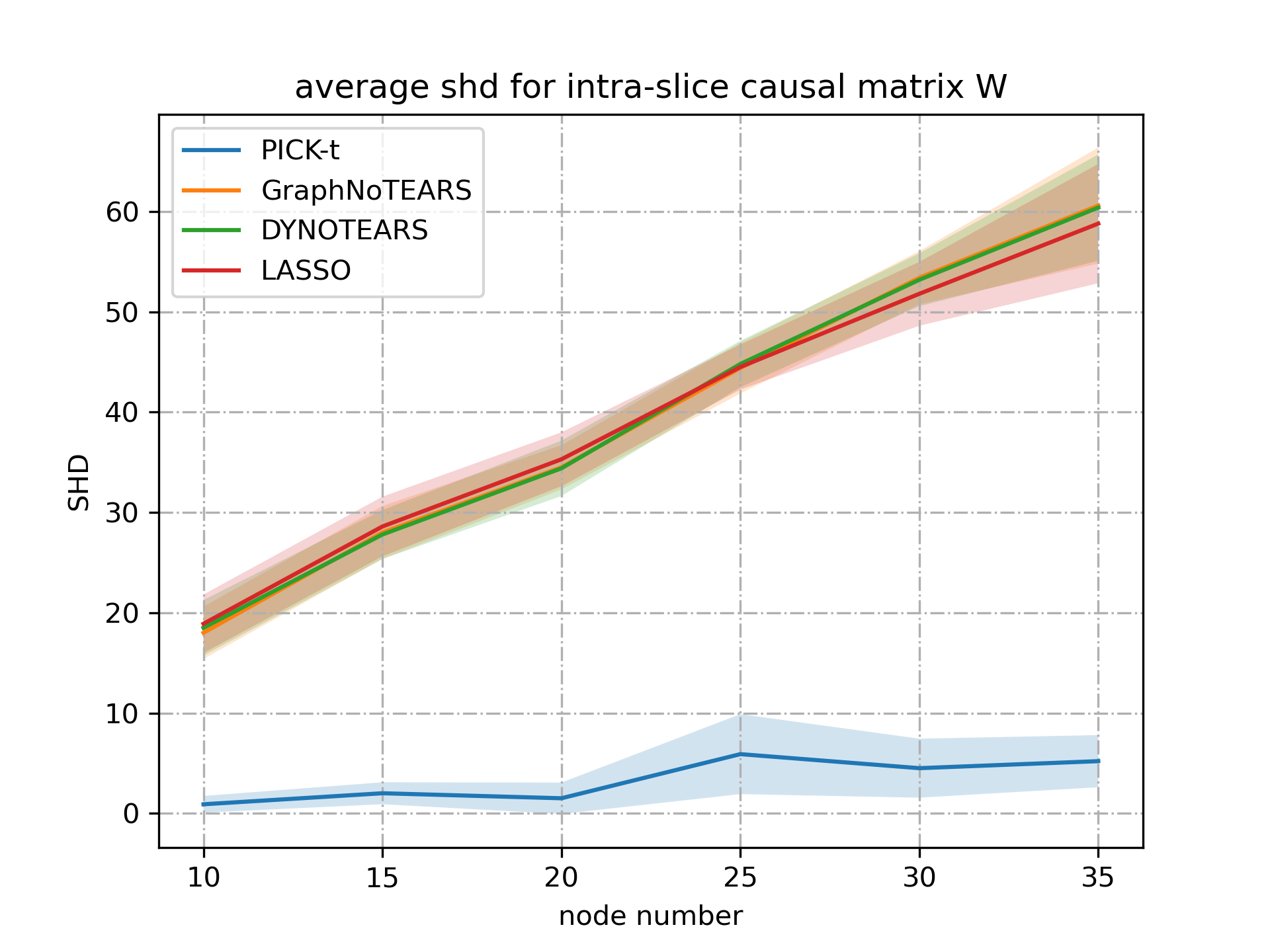}
  }
    \subfloat[ER4-W]
  {      \label{fig:dy-sin-shd-subfig3}\includegraphics[width=0.3\linewidth]{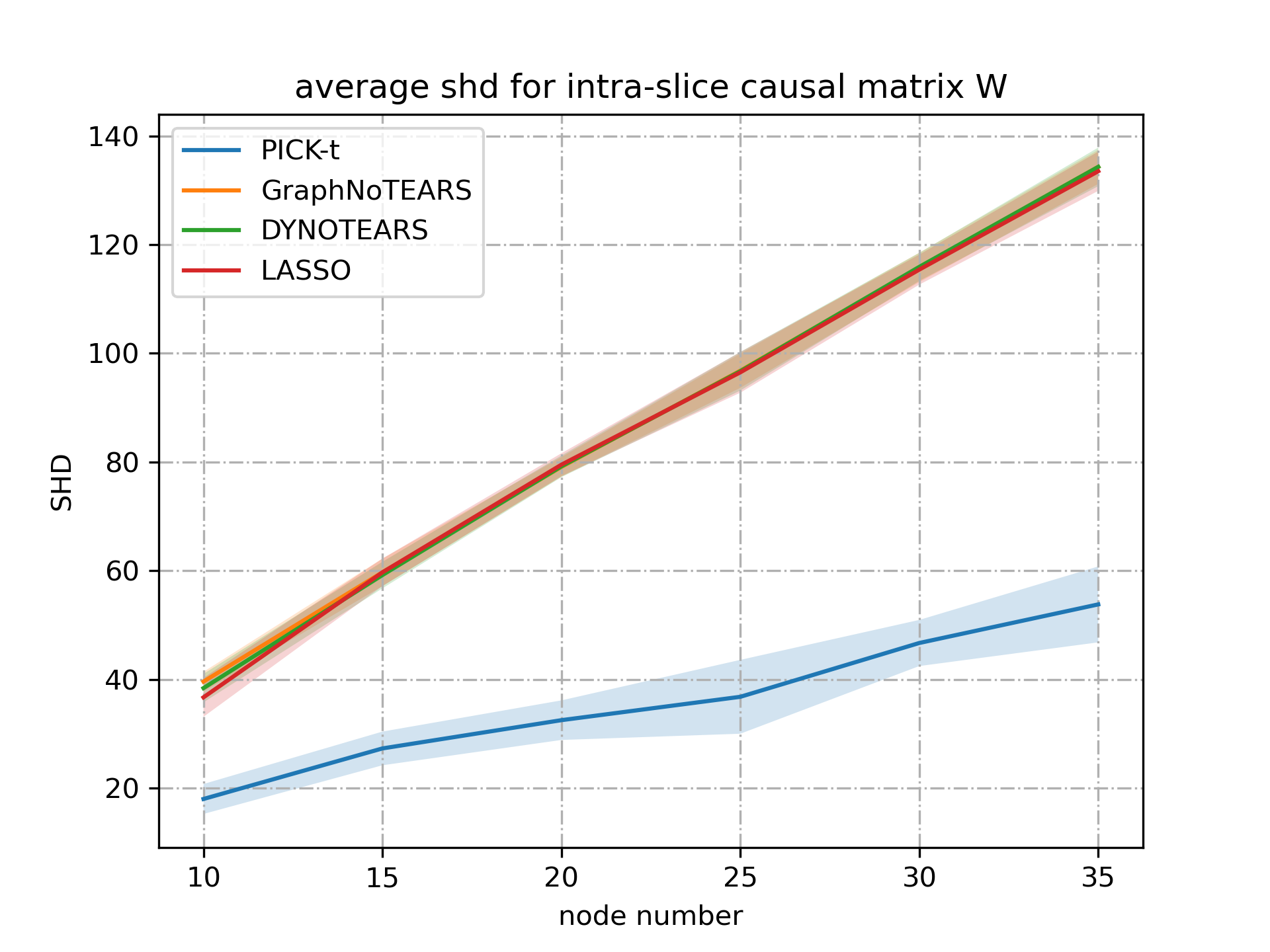}
  }

    \subfloat[ER1-P]   
  {      \label{fig:dy-sin-shd-p-subfig1}\includegraphics[width=0.3\linewidth]{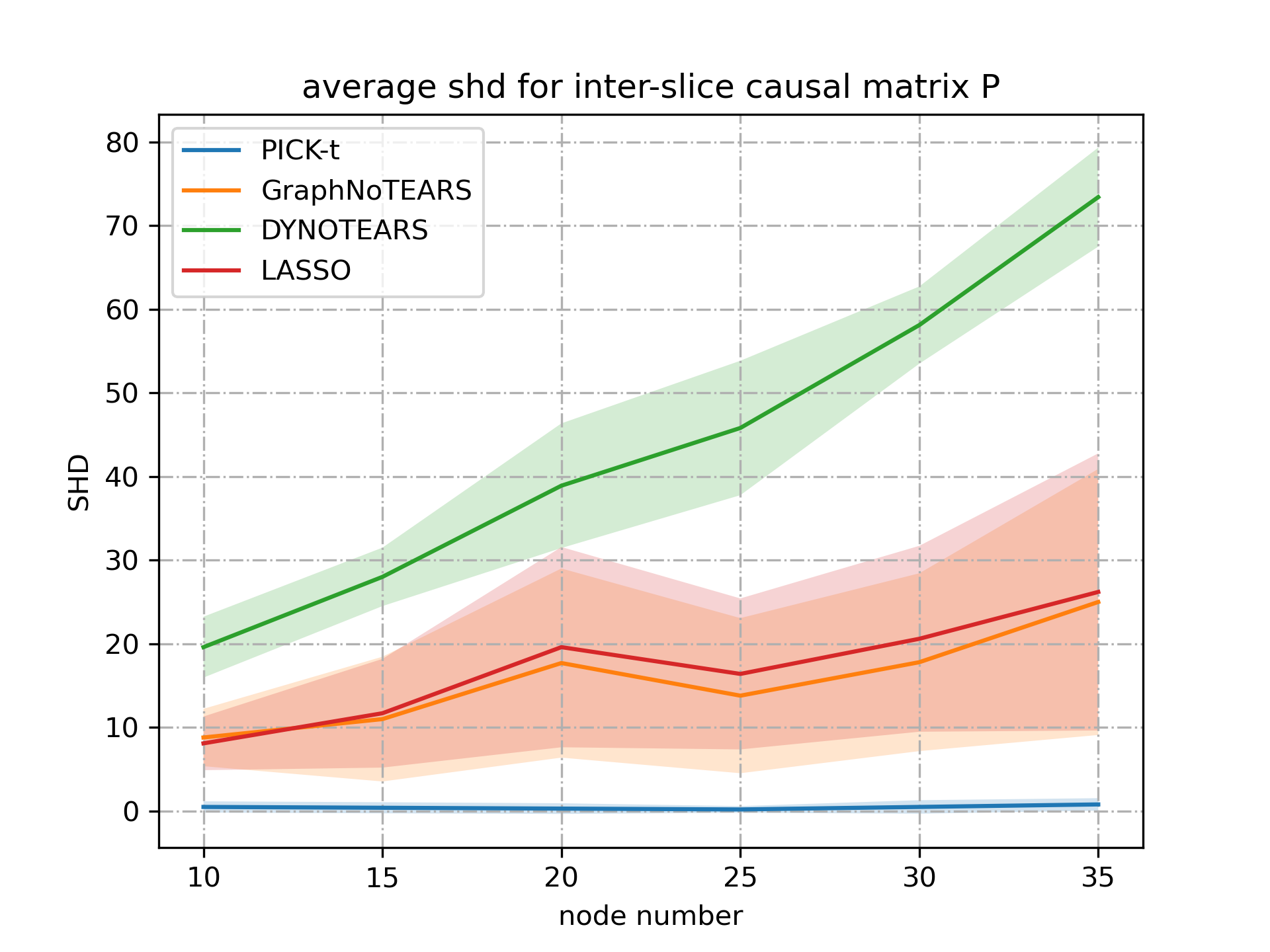}
  }
  \subfloat[ER2-P]
  {      \label{fig:dy-sin-shd-p-subfig2}\includegraphics[width=0.3\linewidth]{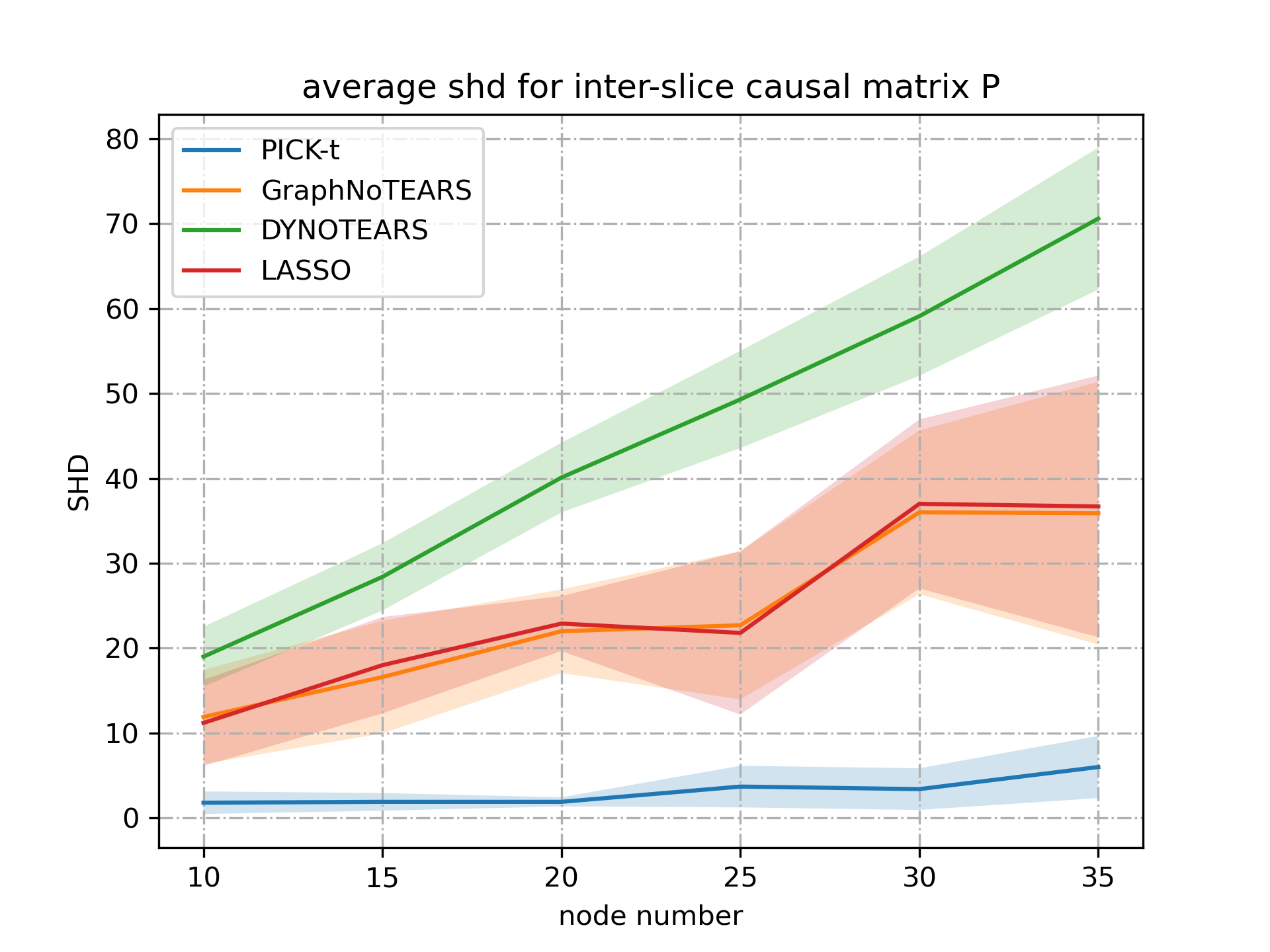}
  }
    \subfloat[ER4-P]
  {      \label{fig:dy-sin-shd-p-subfig3}\includegraphics[width=0.3\linewidth]{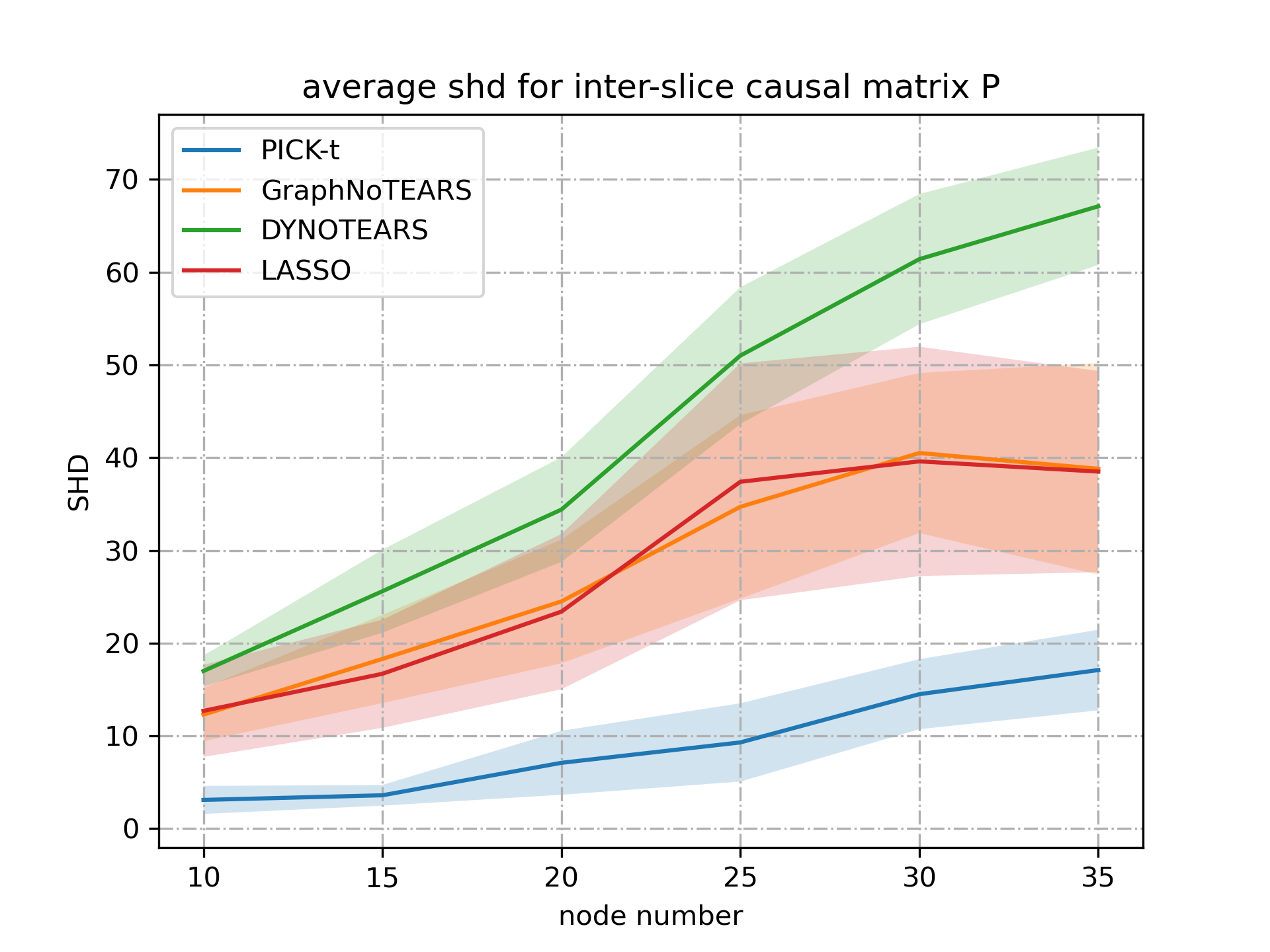}
  }
  \caption{SHD results of PICK-t and baselines for predicted intra-snapshot and inter-snapshot causal graph with link function $f_{i}^{(t)}(x_i)=\sum\limits_{j\in\pa(i)}\sin{x_j}$.}   
  \label{fig:dynamic-shd-intra-inter-snapshot-sin}          
\end{figure}

\paragraph{Static data}
We also test our algorithm in synthetic static data generated from a nonlinear additive data as additional experiments to show the efficiency. The nonlinear function $f_i$ for any node $i$ is generated by sampling Gaussian processes with a unit bandwitdth RBF kernel. The causal DAGs are generated by Erd\H{o}s R\'{e}nyi model \citep{erdHos1960evolution}. Different graph sparsity for DAGs is also considered in our experiment by setting average edge numbers to $d$, $2d$ and $4d$ where $d$ is the node number. For each method, we report the average structural Hanming Distance (SHD) and running time. The sturctural Hanming Distance computes the number of missing, falsely detected and reverse edges between the predicted and ground truth causal graph. We compared the performance of our method with CAM \citep{CAM-method}, GraNDAG, DAS \citep{montagna2023causal}, SCORE \citep{score-matching} with smaller node numbers. For greater node numbers, we only report the average SHD and running time for DAS and our method as the other two methods are too time consuming. For greater node number experiments, to achieve better efficiency, we utilize GAM pruning, leveraging $p$-value of GAM regression. This procedure is shown in Algorithm~\ref{alg:dynamic-pruning} in Appendix. 

The comparison of precision and time cost between our algorithm and baselines for different DAG settings are shown in Figure~\ref{fig:static-shd} and Figure~\ref{fig:static-time}. We could see that our algorithm has superior performance and much less time cost than the baselines. As the other three methods are too time consuming, we compare the computing time and SHD with DAS method for larger node number $d$ (high dimension) scenarios. 
\begin{figure}[htbp] 
  \centering           
  \subfloat[ER1]   
  {      \label{fig:sta-gp-shd-subfig1}\includegraphics[width=0.3\linewidth]{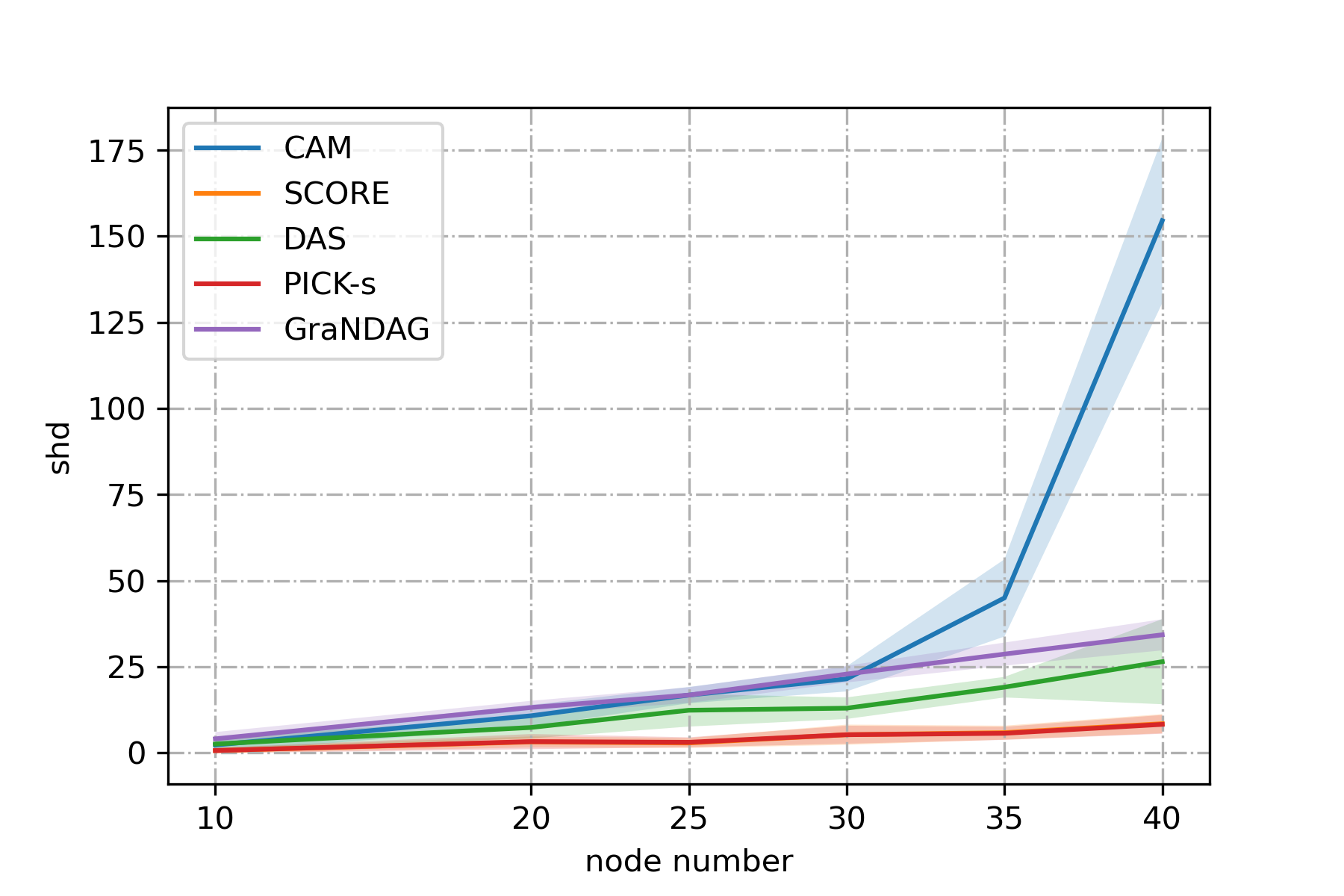}
  }
  \subfloat[ER2]
  {      \label{fig:sta-gp-shd-subfig2}\includegraphics[width=0.3\linewidth]{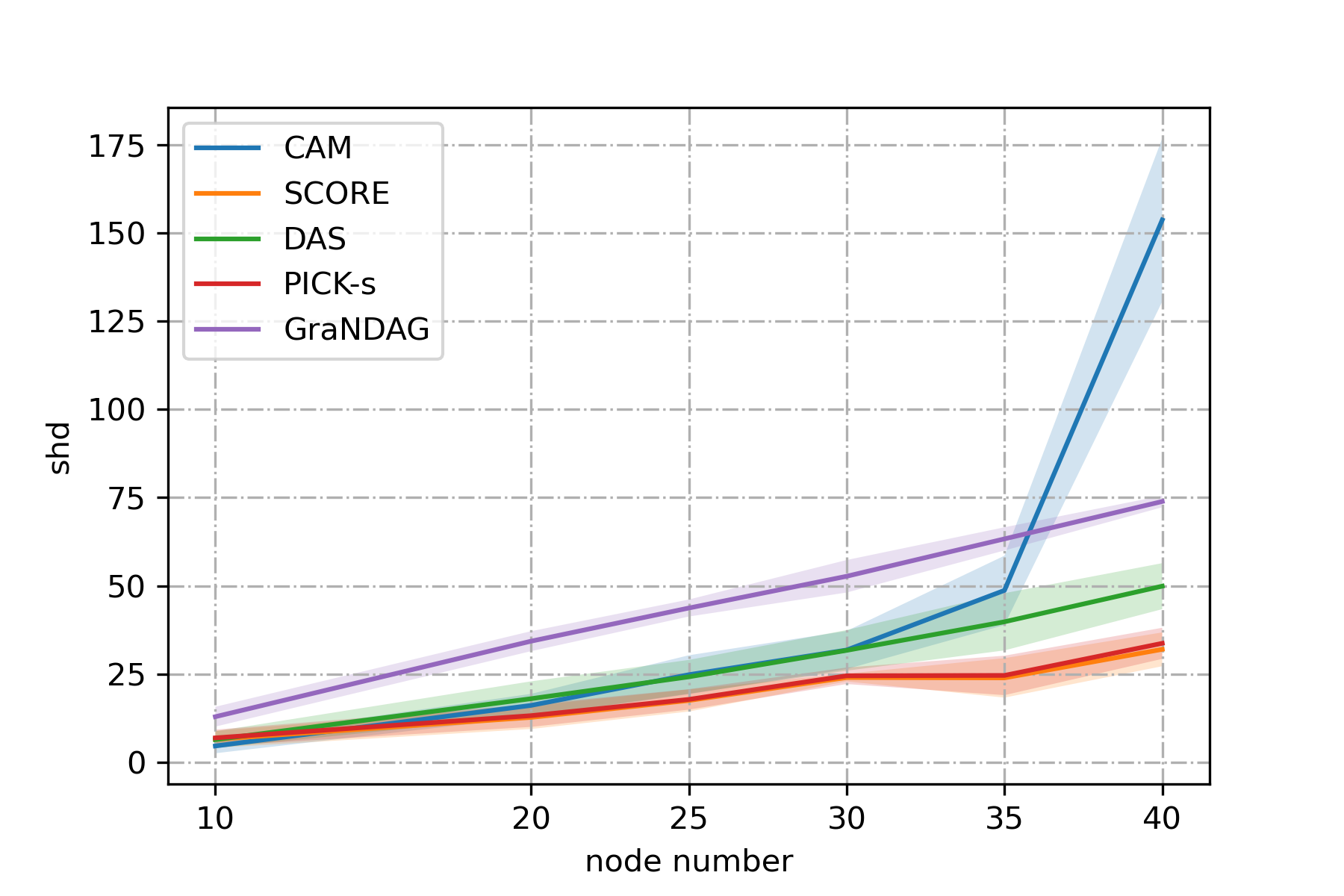}
  }
    \subfloat[ER4]
  {      \label{fig:sta-gp-shd-subfig3}\includegraphics[width=0.3\linewidth]{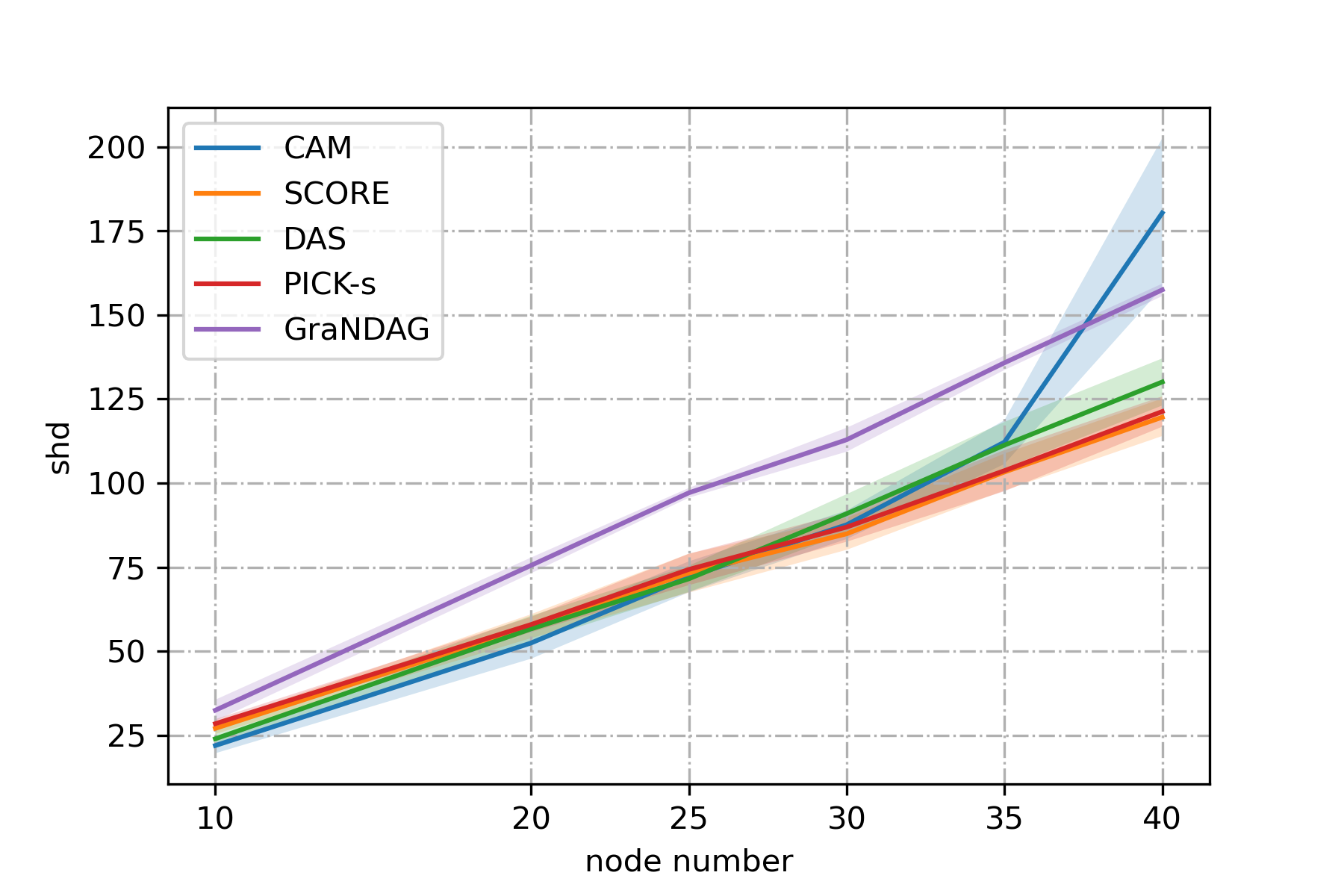}
  }

  \subfloat[ER1]   
  {      \label{fig:sta-gp-shd-b-subfig1}\includegraphics[width=0.3\linewidth]{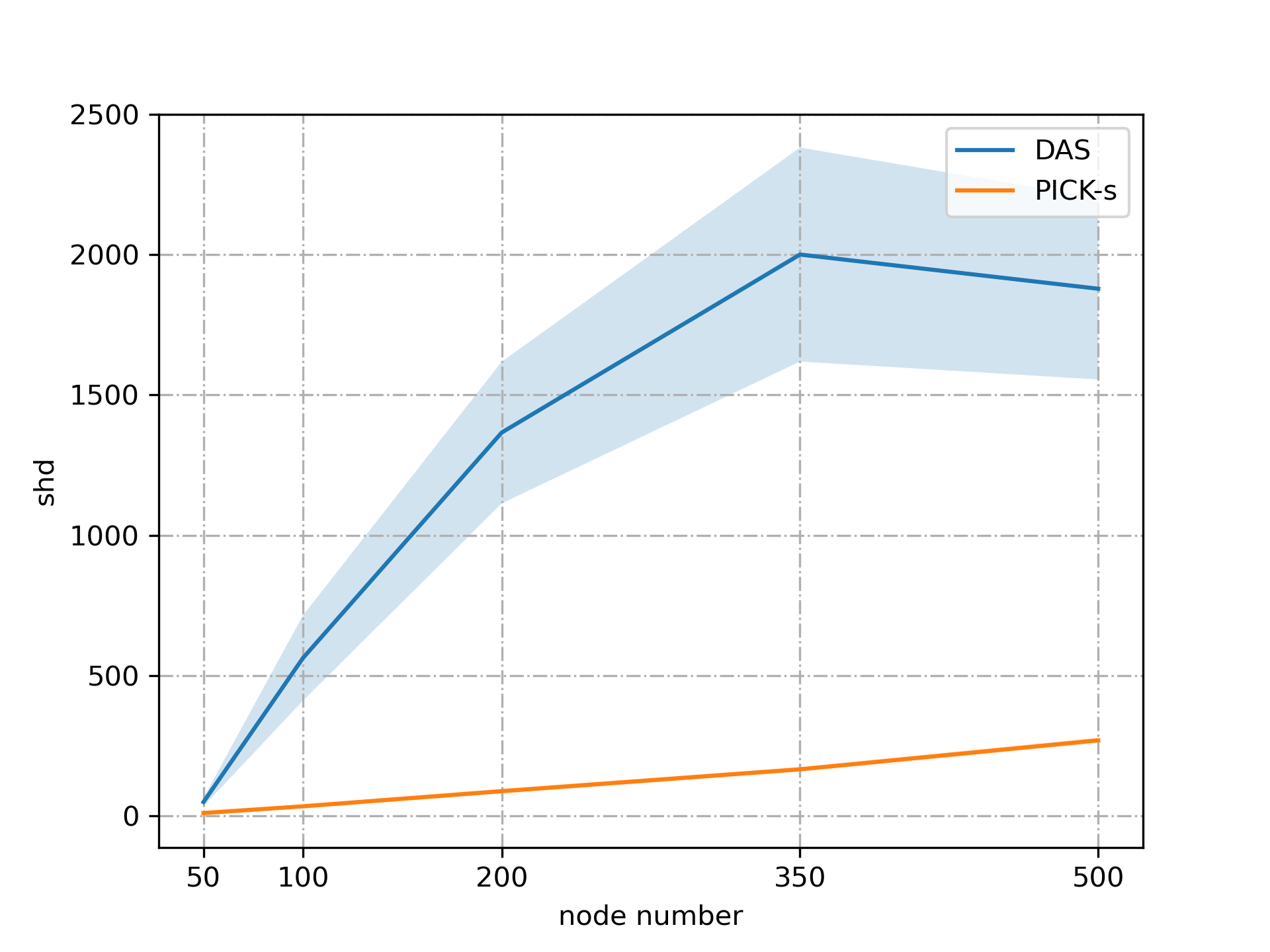}
  }
  \subfloat[ER2]
  {      \label{fig:sta-gp-shd-b-subfig2}\includegraphics[width=0.3\linewidth]{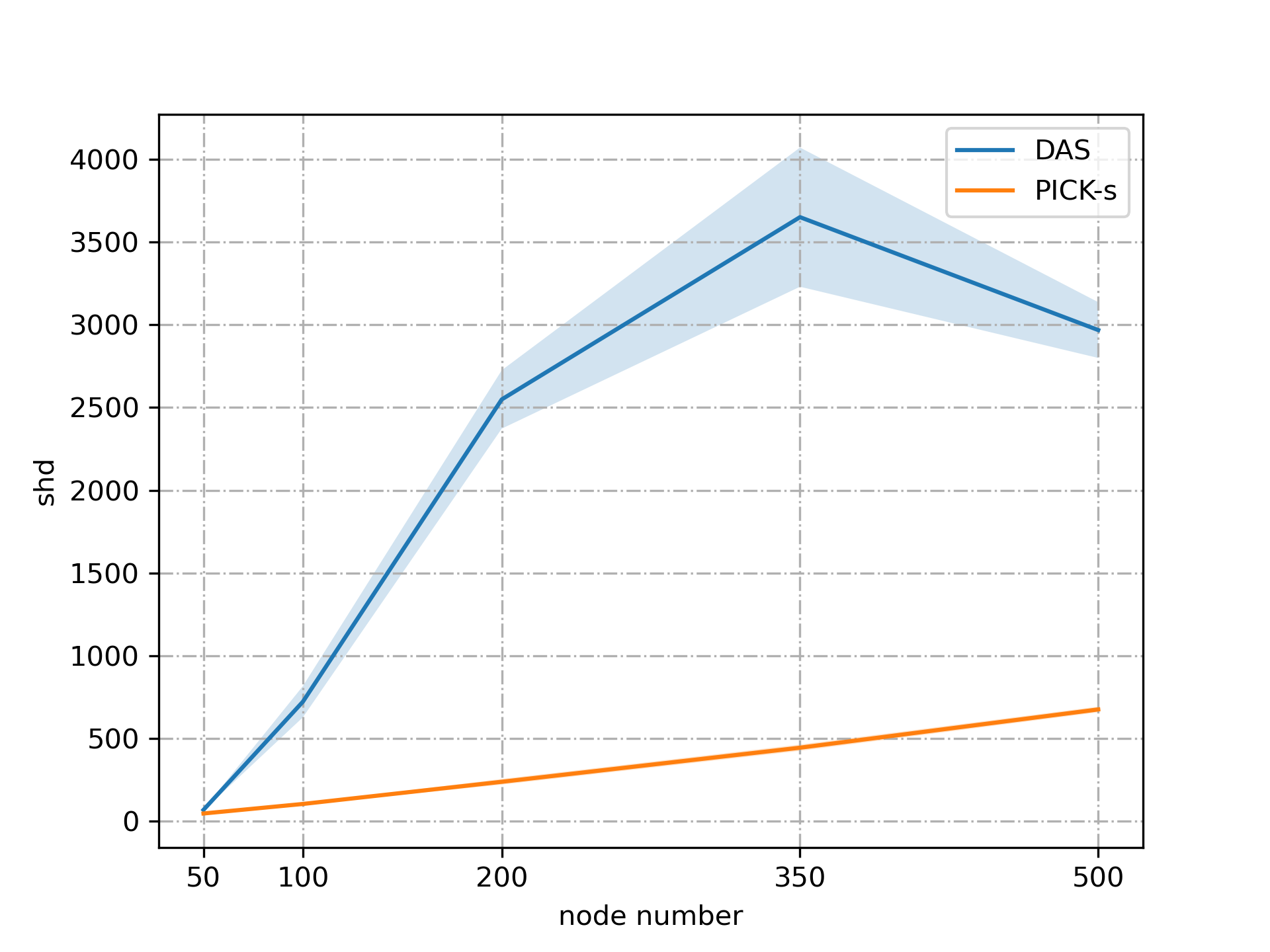}
  }
    \subfloat[ER4]
  {      \label{fig:sta-gp-shd-b-subfig3}\includegraphics[width=0.3\linewidth]{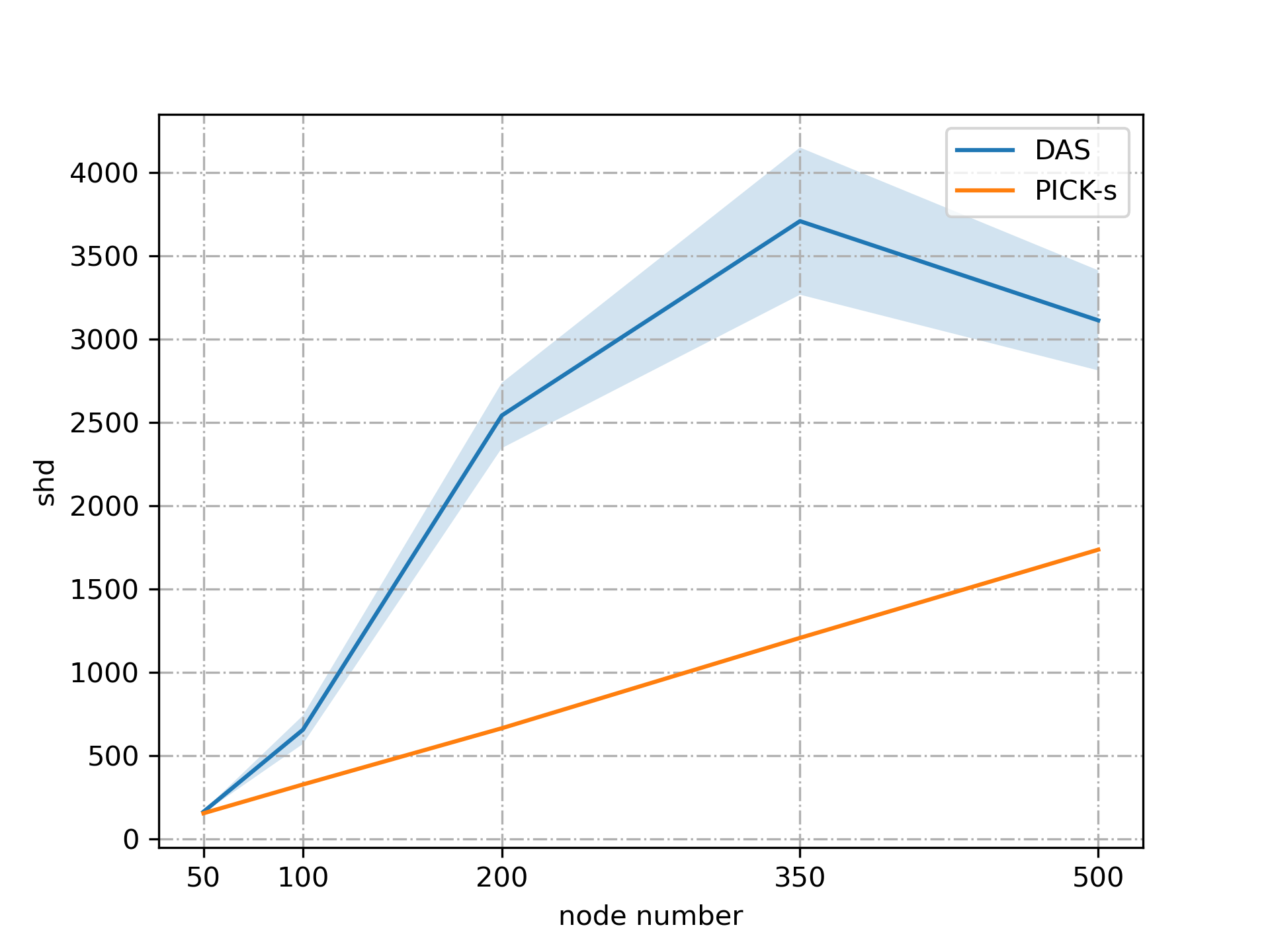}
  }
  \caption{SHD for predicted and ground truth causal graph with link function $f_{i}^{(t)}$ generated by sampling Gaussian process with a unit bandwidth RBF kernel. The upper and lower rows show the results for low dimension and high dimension respectively.}   
  \label{fig:static-shd}          
\end{figure}

\begin{figure}[htbp] 
  \centering           
  \subfloat[ER1]   
  {      \label{fig:sta-gp-time-subfig1}\includegraphics[width=0.3\linewidth]{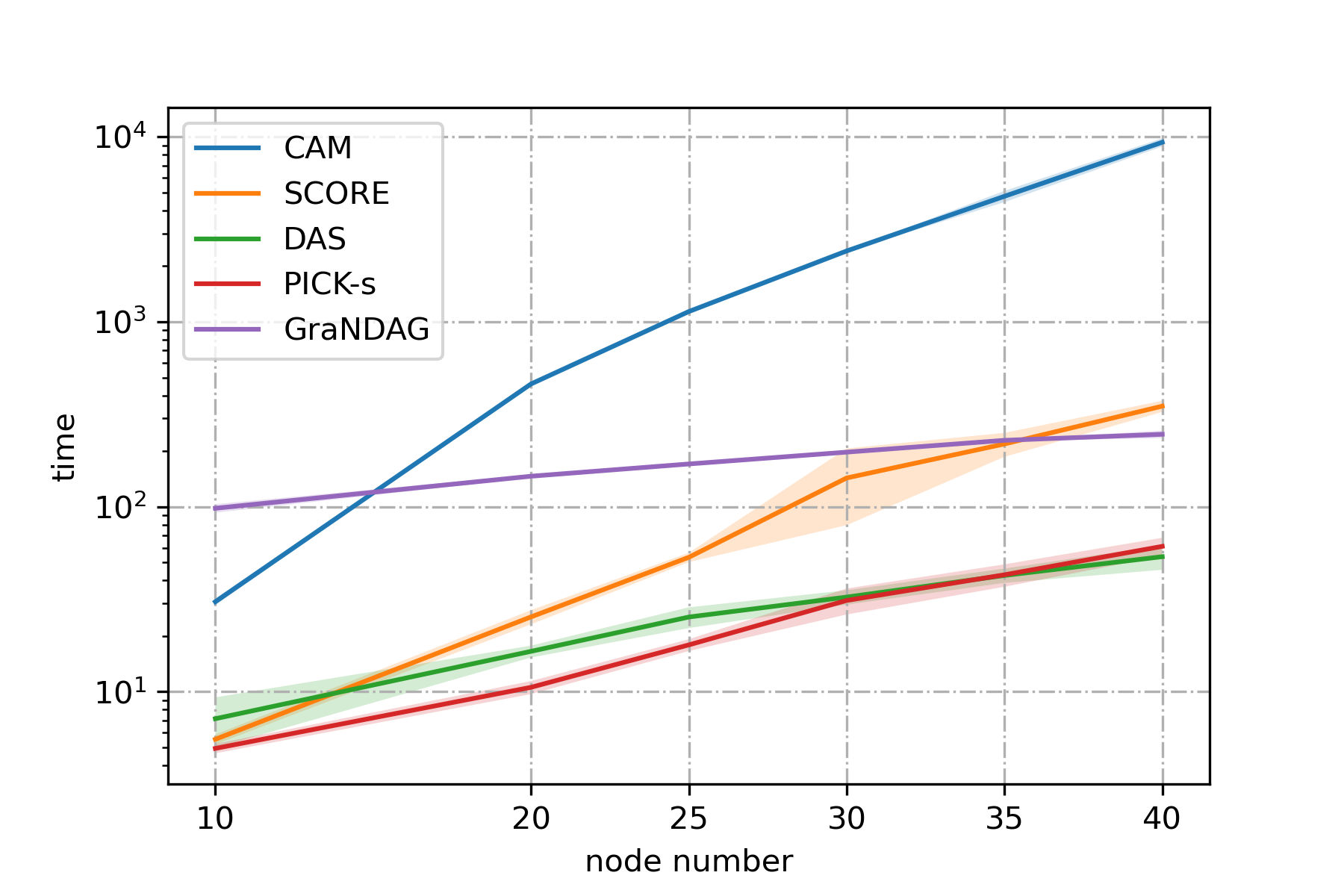}
  }
  \subfloat[ER2]
  {      \label{fig:sta-gp-time-subfig2}\includegraphics[width=0.3\linewidth]{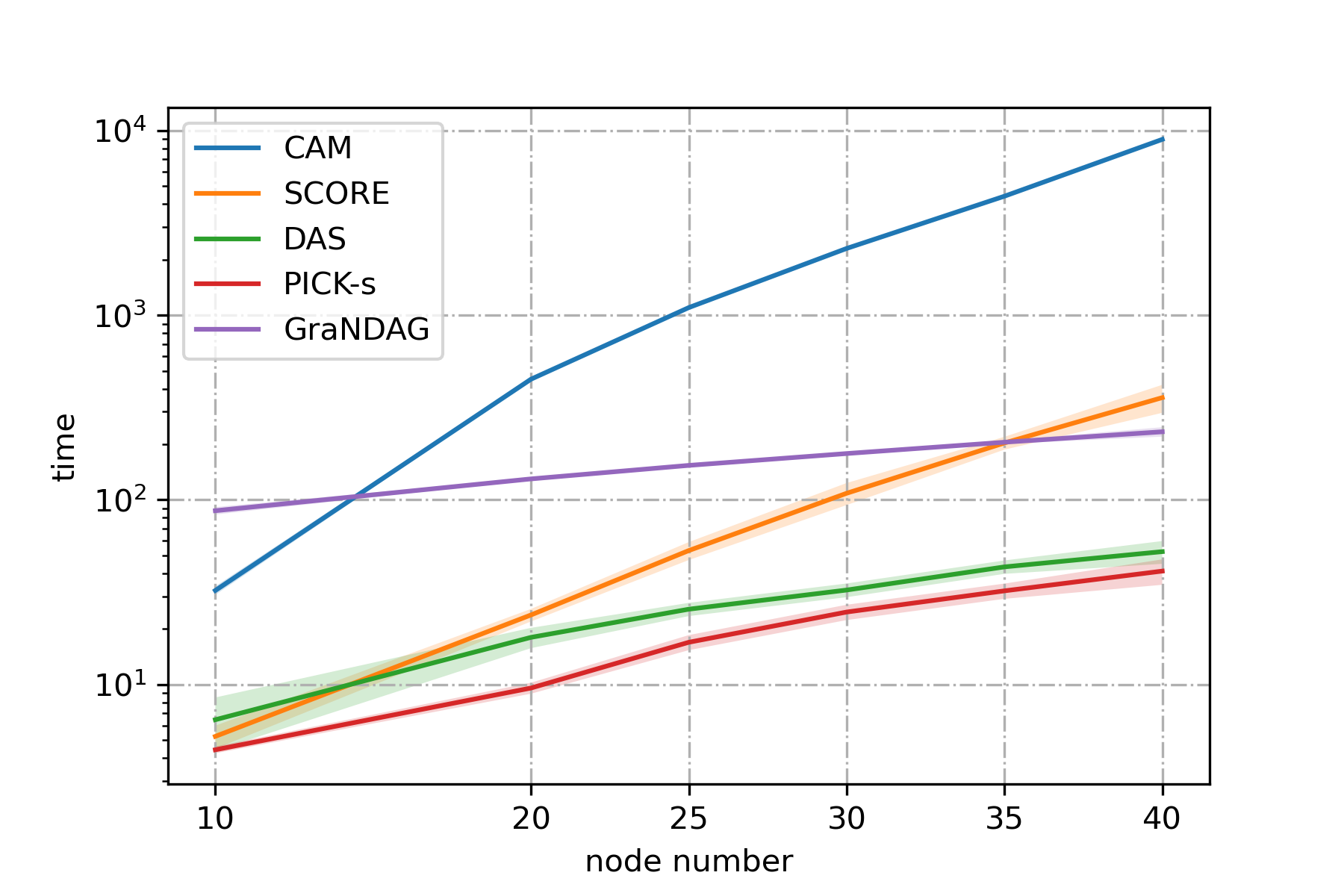}
  }
    \subfloat[ER4]
  {      \label{figsta-gp-time-subfig3}\includegraphics[width=0.3\linewidth]{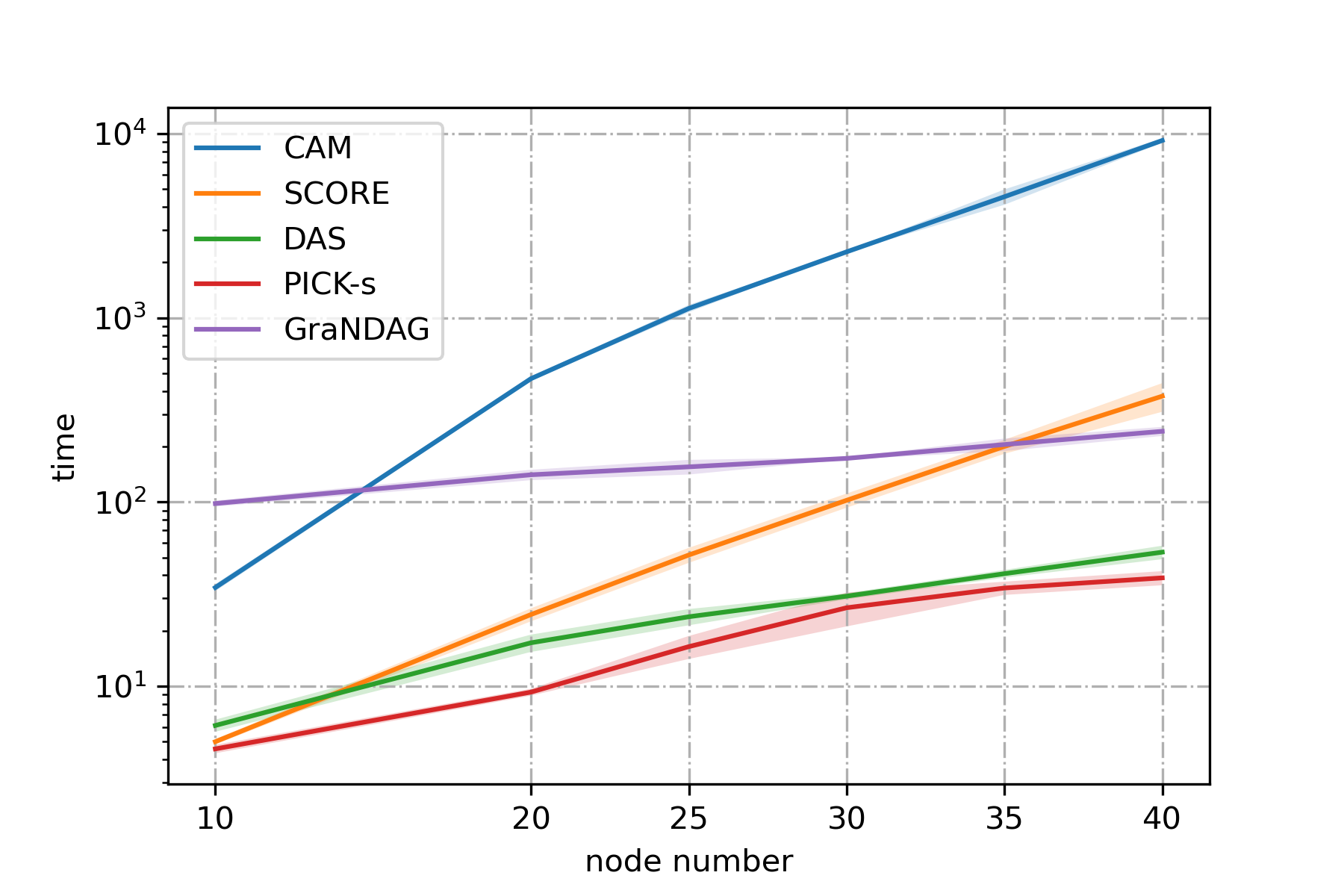}
  }

    \subfloat[ER1]   
  {      \label{fig:sta-gp-time-b-subfig1}\includegraphics[width=0.3\linewidth]{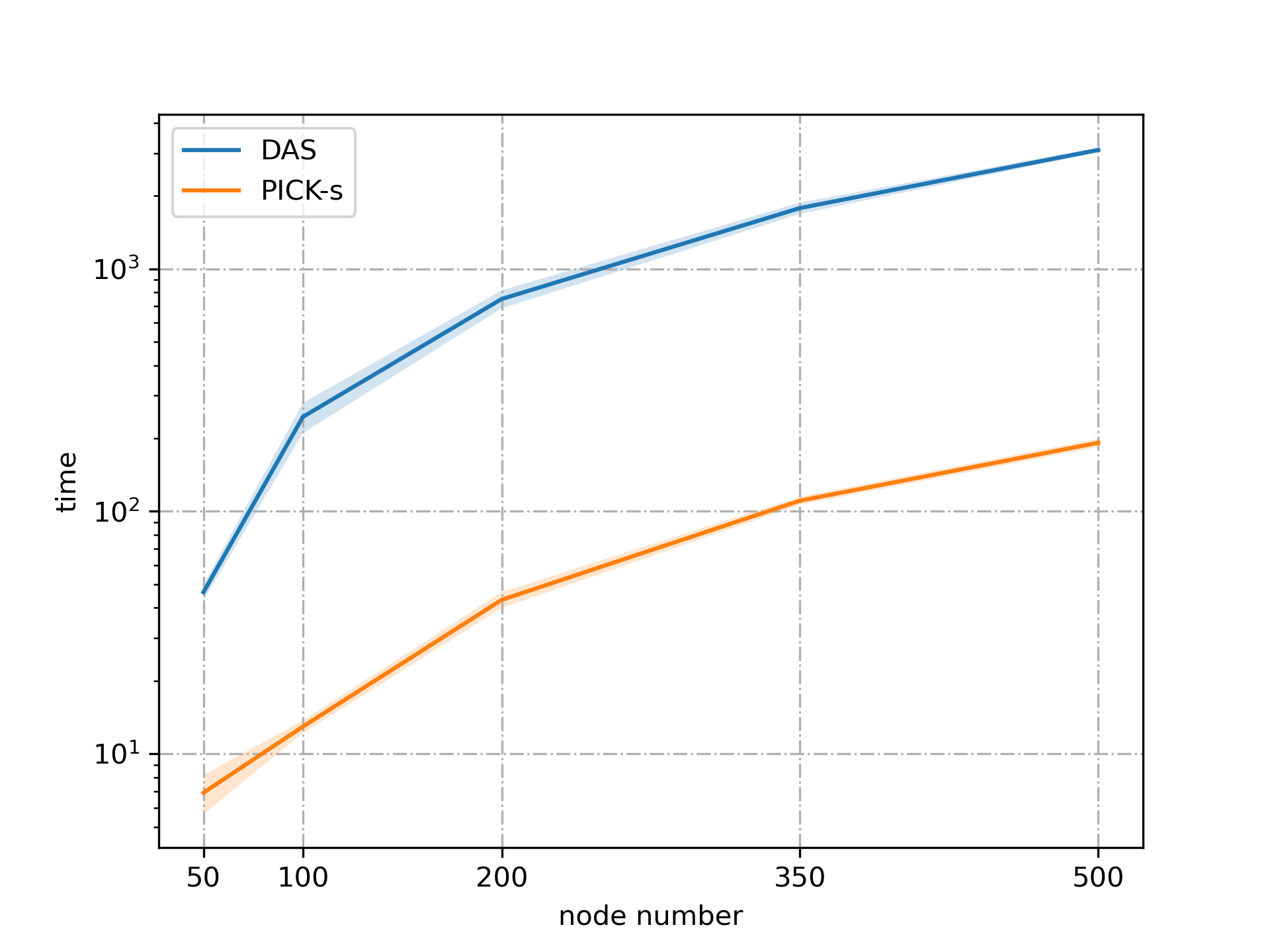}
  }
  \subfloat[ER2]
  {      \label{fig:sta-gp-time-b-subfig2}\includegraphics[width=0.3\linewidth]{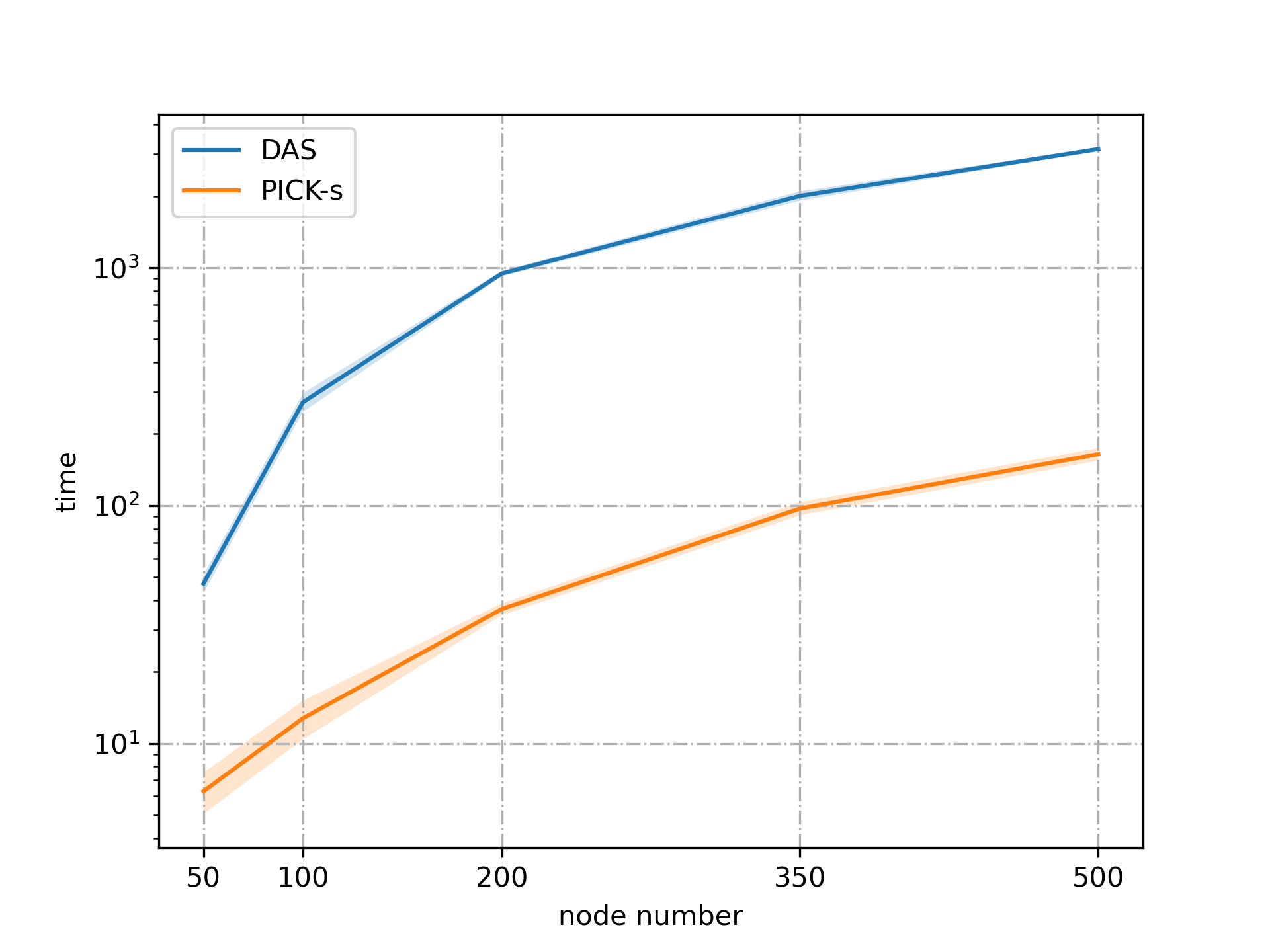}
  }
    \subfloat[ER4]
  {      \label{fig:sta-gp-time-b-subfig3}\includegraphics[width=0.3\linewidth]{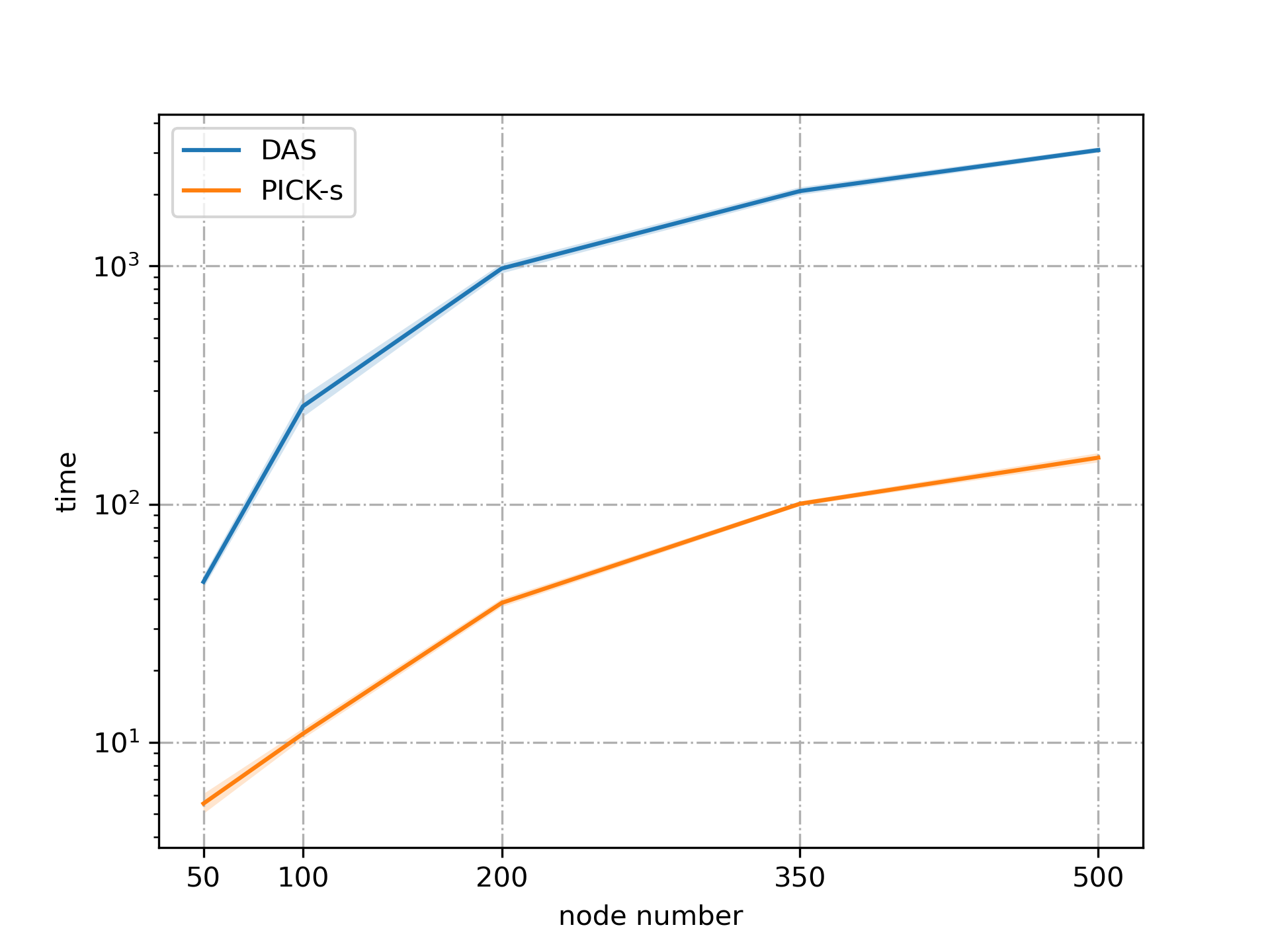}
  }
  \caption{Running time for predicted  and ground truth causal graph with link function $f_{i}^{(t)}$ generated by sampling Gaussian process with a unit bandwidth RBF kernel. The upper and lower rows show the results for low dimension and high dimension respectively.}   
  \label{fig:static-time}          
\end{figure}

\subsection{Real data}
\subsubsection{Temporal data}
We also apply our method to the ``review and business data in 2021'' from the \textbf{yelp} dataset \citep{yelp}. For this dataset, we construct a business graph by distance as restaurants with close distance might influence each other \citep{fan2023directed}. 

We assume that each month is a time step and choose the restaurant data in 2021 in the city of Philadelphia as it is one of the cities that has the most restaurants in \textbf{yelp} dataset. With more restaurants that appears in reviews of each month in 2021 we could ensure that there is no missing data. Similarly, we use the average star rating in reviews of each month as star data and the quantity of reviews as customer flow data. 
For this scenario, we focus on the causal relationship of restaurant rating, opening hours and customer flow. A toy Structural Causal Model (SCM) introduced in \citet{anderson2012learning} includes a directed causal edge from star rating to customer flow. Moreover, restaurant opening hours are another significant factor influencing customer flow. We use these findings as ground truth to evaluate our predicted intra-snapshot causal graph. The evaluation results, presented in Table~\ref{tab:dynamic-real-data}, demonstrate that the causal graph predicted by our algorithm is the most accurate.

\begin{table}[htbp]
\centering
\caption{SHD and edge discovery results for \textsc{PICK-t} compared with baselines
 on business graph in Philadelphia}
\label{tab:dynamic-real-data}
\resizebox{\linewidth}{!}{
\begin{tabular}{c c cc}
\toprule
Method &SHD& star rating $\to$ customer flow&open hour $\to$ customer flow  \\
\midrule
\textsc{PICK-t}(Ours) &0 &\Checkmark&\Checkmark\\
\textsc{GraphNOTEARS} & 2&\ding{55} &\ding{55}\\
\textsc{DyNOTEARS}&2&\ding{55}&\ding{55}\\
\textsc{LASSO}&2&\Checkmark&\ding{55}\\
\bottomrule
\end{tabular}
}
\end{table}

\subsubsection{Static data}
For the performance evaluation of the static model, we compare the algorithms on a commonly used real-world dataset \textbf{Schs} \citep{sachs2005causal} and 10 pseudo-real-world datasets sampled from \textbf{SynTReN} generator \citep{van2006syntren}. We repeat experiment in \textbf{Schs} dataset for 10 times to report the average and standard error of running time and the SHD, together with results for \textbf{SynTReN} in Table~\ref{tab:real-data-static}. The results demonstrate that our algorithm ahieves superior accuracy and incurs the lowest running cost in real-world scenarios.  

\begin{table}[htbp]
\caption{SHD and runtime for \textsc{PICK-t} compared with baselines on \textbf{Schs} and \textbf{SynTReN} datasets}
\label{tab:real-data-static}
\centering
\resizebox{\linewidth}{!}{
\begin{tabular}{ccccc}
\toprule
&\multicolumn{2}{c}{\textbf{Schs}}&\multicolumn{2}{c}{\textbf{SynTReN}} \\
\cmidrule(lr){2-3}\cmidrule(lr){4-5}
&SHD&Time (s)&SHD&Time (s) \\
\midrule
\textsc{CAM} &$14$&$59.81\pm2.35$&$69.00\pm13.87 $&$513.04\pm11.64$\\
\textsc{SCORE}&$\bf 12$&$5.94\pm0.75$&$48.10\pm8.96 $&$19.21\pm4.03$\\
\textsc{DAS}&$14$&$11.85\pm6.04$&$66.00\pm8.40 $&$12.85\pm2.99$\\
GranDAG&$15$&$125.34\pm3.50$&$29.30\pm8.04 $&$145.24\pm1.35$\\ 
\textsc{PICK-s} (Ours) &  $13$&$\bf 3.77\pm0.42$&$\bf 22.40\pm4.51 $& $\bf 2.52\pm0.10$\\

\bottomrule
\end{tabular}
}
\end{table}

\section{Conclusion}
\label{sec:conclusion}
In this work, we propose a new subroutine for identifying parents of leaf nodes with the variance of score value. This subroutine has the potential to greatly reduce computation time of the pruning step. Furthermore, through partial score matching, we combine the score-matching-based algorithm developed in \citet{score-matching} and our new parent-identification subroutine to recover the causal structure when analyzing temporal data with network interference, going one step beyond the classical static, i.i.d. setup. 
Finally, in both synthetic and real datasets, our proposed method exhibits improved finite-sample performance over several existing benchmarks based on various evaluation metrics with low computational cost. 
\bibliography{ref}
\bibliographystyle{tmlr}

\appendix
\section{A brief description of CAM method} 

Based on generative additive model selection, this method is mainly used to remove spurious edges of dense graph induced by topological order.

The first step of CAM method is PNS, a neighbors selection method based on \citet{meinshausen2006high}. In the score matching based methods, PNS is preformed by fitting for each leaf node $X_j$ versus all the other variables left and choose $K$ possible predictor variables as possible parents of $X_j$. This method is implemented  with a boosting method in \citet{buhlmann2003boosting,buhlmann2007boosting}. The total computation complexity of PNS is $\mathcal{O}(dt\mathrm{r}(n,d))$ with $t$ models needed to be fit for each leaf node and $\mathrm{r}(n,d)$ the model fitting complexity, amounting to $\mathcal{O}(nd^2)$ when choosing the Iteratively Reweighted Least Squares method in \citet{minka2003comparison}.

Following PNS, the CAM-pruning is a final pruning method by applying hypothesis tesing for model selection and could thus be used to reduce the number of false positives.

\section{Deferred algorithms}
\label{sec:deffer alg}

In this section, we present the full algorithm that return causal relationship matrices for temporal and static data input in Algorithm~\ref{alg:dynamicDAG} and Algorithm~\ref{alg:staticDAG}. The pruning procedure based on GAM is illustrated in Algorithm~\ref{alg:dynamic-pruning}.

\begin{algorithm}
	\caption{dynamicalDAG}
	\label{alg:dynamicDAG}
\hspace*{0.02in} {\bf Input:} $(X^{(T)},\dots,\ X^{(1)})$, $(A^{(T)},\ \dots,\ A^{(1)})$\\
\hspace*{0.02in} {\bf Output:} Estimated $\hat{W}^{(t)}$ and $\hat{P}^{(t)}$ at time $t$

	\begin{algorithmic}[1]
        \STATE WList=[]
        \STATE PList=[]
        \FORALL{$t \in\ \{T\dots,T-p\} $}
        \STATE $Xt\gets(X^{(t)},\ \hat{X}^{(t-1)},\dots,\ \hat{X}^{(t-p)})$
        \STATE $At\gets(A^{(t)},\ A^{(t-1)},\dots,\ A^{(t-p)})$
       \STATE $Wt,Pt=\mathrm{PICK-t}(Xt,At)$
       \STATE WList$\gets$WList$+Wt$, PList$\gets$PList$+Pt$      
    \ENDFOR
        \STATE $W=\vmathbb{1}_{\geq \tau_w}(\mathrm{average(WList)})$ 
        \STATE $P=\vmathbb{1}_{\geq \tau_p}(\mathrm{average(PList)})$ ($\tau_w$ and $\tau_p$ are threshold hyperparameters)
	\end{algorithmic}  
\end{algorithm}

\begin{algorithm}
	\caption{PICK-s}
	\label{alg:staticDAG}
\hspace*{0.02in} {\bf Input:} $X$\\
\hspace*{0.02in} {\bf Output:} Estimated $\hat{W}^t$ and $\hat{P}^{t}$ at time $t$

	\begin{algorithmic}[1]
        \STATE ActiveNode=$\{0,1,\dots,d\}$
        \STATE LeafNode$\gets$$\{\}$
        \STATE WDense$\gets$$d\times d$ zero matrix
        \FORALL{$i \in\ \{1,2\dots,d\} $}
        \STATE $\mathrm{Hess},\mathrm{newScore}=\mathrm{hess}(X_{\mathrm{ActiveNode}})$ and leave out the last $p-r$ columns of $H$
       \STATE leaf$\gets$argmin(H.var(dim=1))
       \STATE ParSet$\gets$$\{i:\Var(\mathrm{newScore})_i<\Var(\mathrm{oldScore})_i\}$
       \STATE Set leaf column ParSet row of WDense $1$.
       \STATE LeafNode.append(ActiveNode(leaf))
       \STATE ActiveNode.remove(ActiveNode(leaf))
       \STATE $\mathrm{oldSCORE}$<-$\mathrm{newSCORE}$ with $\mathrm{leaf}$ column removed
       \STATE remove the ${\rm ActiveNode(leaf)}$-th column in $X$      
    \ENDFOR
        \STATE LeafNode.reverse()
        \STATE Get estimated $\hat{W}$ by pruning algorithm
	\end{algorithmic}  
\end{algorithm}

\begin{algorithm}
	\caption{Pruning Method}
	\label{alg:dynamic-pruning}
\hspace*{0.02in} {\bf Input:} covariates $X\coloneqq(X_1,\dots,X_k)$, labels $Y$, threshold $\alpha$\\
\hspace*{0.02in} {\bf Output:} selected variables
	\begin{algorithmic}[1]
        \STATE $\hat{\beta} = \texttt{gam} (Y \sim X)$
        \FORALL{$i \in\ \{1,2,\dots,k\} $}
        \IF{$\frac{\hat{\beta}_i^2}{\Var \hat{\beta}_i}>\alpha$}
        \STATE $X_i$ is selected   
    \ENDIF
        \ENDFOR
	\end{algorithmic}  
\end{algorithm}

\section{Proofs and theoretical analysis}
\label{sec:pf and theoretical analysis}

\subsection{Proof of Lemma~\ref{lemma:hess-leaf}}
\begin{proof}
Without loss of generality, we assume that $p = 1$. To start with, we denote the joint probability density function of $(X^{(t)},X^{(t-1)})$ and the marginal probability density function of $X^{(t-1)}$ as $g^{(t)}(x^{(t)},x^{(t-1)})$ and $h^{(t-1)}(x^{(t-1)})$, respectively. We first decompose the joint probability density function as 
\begin{equation}\label{eq:bayes decomposition}
   g^{(t)}(x^{(t)},x^{(t-1)})=h^{(t-1)}(x^{(t-1)})p^{(t,t-1)}(x^{(t)}|x^{(t-1)})
\end{equation}
where $p^{(t,t-1)}(x^{(t)}|x^{(t-1)})$ is the conditional density function. We realize that 
\begin{equation}
    \frac{\partial\log g^{(t)}}{\partial x^{(t)}_i}(x^{(t)},x^{(t-1)})=\frac{\partial\log p^{(t,t-1)}}{\partial x^{(t)}_i}(x^{(t)}|x^{(t-1)})
\end{equation}
from the generation process we have
\begin{equation}
    p^{(t,t-1)}(x^{(t)}|x^{(t-1)})=\prod\limits_{i=1}^d\frac{1}{\sigma_i\sqrt{2\pi}}\exp\left\{-\frac{(x_i^{(t)}-f^{(t)}_i((x_{\pa(i)}^{(t)},x_{\pa(i)}^{(t-1)})))^2}{2\sigma_i^2}\right\}.
\end{equation}
For simplicity of notation, $f_i^{(t)}$ will hereafter be cited within the text as EG.
Therefore, we could find that,
\begin{equation}\label{eq:score function-leaf}
    \frac{\partial\log g^{(t)}}{\partial x^{(t)}_i}(x^{(t)},x^{(t-1)})=-\frac{x_i^{(t)}-f_i((x_{\pa(i)}^{(t)},x_{\pa(i)}^{(t-1)}))}{2\sigma_i^2}
\end{equation}
when $i$ is a leaf node
and that 
\begin{equation}\label{eq:score function-nonleaf}
\begin{split}
        \frac{\partial\log g^{(t)}}{\partial x^{(t)}_i}(x^{(t)},x^{(t-1)})&=-\frac{x_i^{(t)}-f_i((x_{\pa(i)}^{(t)},x_{\pa(i)}^{(t-1)})))}{\sigma_i^2}\\
        &+\sum\limits_{j\in\ch(i)}\frac{x_j^{(t)}-f_j((x_{\pa(j)}^{(t)},x_{\pa(j)}^{(t-1)}))}{\sigma_j^2}\frac{\partial f_j}{\partial x^{(t)}_i}((x_{\pa(j)}^{(t)},x_{\pa(j)}^{(t-1)}))
    \end{split}
\end{equation}
when $i$ is not a leaf node. Hence, we could compute the variance of second-order partial derivative respectively as 
\begin{equation}
    \Var \left(\frac{\partial^2\log g^{(t)}}{\partial x^{(t)2}_i}(x^{(t)},x^{(t-1)})\right)=0
\end{equation}
and 
\begin{equation}\label{eq:var of non-leaf}
    \Var \left(\frac{\partial^2\log g^{(t)}}{\partial x^{(t)2}_i}(x^{(t)},x^{(t-1)})\right)\geq\frac{1}{\sigma_i^2}+\sum\limits_{j\in\ch(i)}\frac{1}{\sigma_j^2}\mathbb{E}\left(\frac{\partial^2f_j}{\partial x_i^{(t)2}}\right)^2.
\end{equation}
As we have assumed that $f_j$ is not a linear function for any node $j$, $\mathbb{E}\left(\frac{\partial^2f_j}{\partial x_i^{(t)2}}\right)^2$ is strictly positive, which grant that the variance in \eqref{eq:var of non-leaf} is strictly positive. Denote $\frac{\partial^2\log g^{(t)}}{\partial x^{(t)2}_i}(x^{(t)},x^{(t-1)})$ as $J_i$, and then we could have node $i$ is a leaf if and only $\Var(J_i)=0$.

Similarly, we could prove the second part of this lemma.
\end{proof}

\subsection{Proof of Theorem~\ref{thm:parent node}}
\begin{proof}
We consider the situation for $t^\prime=t$ and $t^\prime\in\{t-1,\dots,t-p\}$. Without loss of generality, we assume $p=1$. 

When $t^\prime=t$, we could obtain $s_{j,t}^{(t)}\coloneqq  \frac{\partial\log g^{(t)}}{\partial x^{(t)}_i}(x^{(t)},x^{(t-1)})$ by \eqref{eq:score function-nonleaf}. From the generation procedure, we could know that $X_i^{(t)}-f_i((X_{\pa(i)}^{(t)},X_{\pa(i)}^{(t-1)}))\equiv Z_i^{(t)}$ and $Z_i^{(t)}\sim\mathcal{N}(0,\sigma_i^2)$ for any node $i$ and time step $t$ and that $Z_i^{(t)}$ are independent to each other. Besides, $Z_i^{(t)}$ is independent with $X_{i^\prime}^{(t^\prime)}$ for any $t^\prime\in \{t,t-1\}$ and non-descendant node $i^\prime$. It follows that
\begin{equation}\label{eq:cov1-parent-node}
    \mathrm{Cov}\left(\frac{Z_i^{(t)}}{\sigma_i^2},\frac{Z_j^{(t)}}{\sigma_j^2}\frac{\partial f_j}{\partial x^{(t)}_i}((X_{\pa(j)}^{(t)},X_{\pa(j)}^{(t-1)}))\right)=0
\end{equation}
and that 
\begin{equation}\label{eq:cov2-parent-node}
    \mathrm{Cov}\left(\frac{Z_{j_1}^{(t)}}{\sigma_{j_1}^2}\frac{\partial f_{j_1}}{\partial x^{(t)}_i}((X_{\pa(j_1)}^{(t)},X_{\pa(j_1)}^{(t-1)})),\frac{Z_{j_2}^{(t)}}{\sigma_{j_2}^2}\frac{\partial f_{j_2}}{\partial x^{(t)}_i}((X_{\pa(j_2)}^{(t)},X_{\pa(j_2)}^{(t-1)}))\right)
\end{equation}
for any $j_1\neq j_2\in\ch(i)$.
Henceforth, we could present its variance in the following equation
\begin{equation}\label{eq:var-score-t}
    \Var(s_{i,t}^{(t)}((X^{(t)},X^{(t-1)})))=\frac{1}{\sigma_i^2}+\sum\limits_{j\in\ch(i)}\frac{1}{\sigma_i^2}\mathbb{E}\left(\frac{\partial f_j}{\partial x^{(t)}_i}((X^{(t)}_{\pa(j)},X^{(t-1)}_{\pa(j)}))\right)^2.
\end{equation}
As we assume that $f_j$ is not linear, we could know that $\mathbb{E}\left(\frac{\partial f_j}{\partial x^{(t)}_i}((X^{(t)}_{\pa(j)},X^{(t-1)}_{\pa(j)}))\right)^2$ would be positive. Therefore, if leaf node $l$ is a child of node $i$, after removing node $l$, the variance of $s_{i,t}^{(t)}$ would strictly decrease. If node $i$ is not a parent, the variance of $s_{i,t}^{(t)}$ would remain unchanged. 

When $t^{\prime}=t-1$, we would present $s_{j,t^{\prime}}^{(t)}$ by \eqref{eq:bayes decomposition}
\begin{equation}
    \frac{\partial\log g^{(t)}}{\partial x_i^{(t-1)}}(x^{(t)},x^{(t-1)})=\frac{\partial\log p^{(t,t-1)}}{\partial x^{(t-1)}_i}(x^{(t)}|x^{(t-1)})+\frac{\partial\log h^{(t-1)}}{\partial x^{(t-1)}_i}(x^{(t-1)})
\end{equation}

From model assumption, we know that 
\begin{equation}
    \frac{\partial\log p^{(t,t-1)}}{\partial x^{(t-1)}_i}(x^{(t)}|x^{(t-1)})=\sum\limits_{j\in \ch_t (i)}\frac{\partial\log p^{(t,t-1)}(x^{(t)}_j|x^{(t-1)})}{\partial x_i^{(t-1)}}\frac{\partial f_j}{\partial x^{(t)}_i}((X_{\pa(j)}^{(t)},X_{\pa(j)}^{(t-1)})),
\end{equation}
and that
\begin{equation}
    \frac{\partial\log p^{(t,t-1)}(x^{(t)}_j|x^{(t-1)})}{\partial x_i^{(t-1)}}=s^{Z}_{j,t}{\partial x_i^{(t-1)}}
\end{equation}
where $s^{Z}_{j,t}$ denotes as the score function of noise $Z_j^{(t)}$.
Therefore, we could have
\begin{equation}
     \frac{\partial\log g^{(t)}}{\partial x_i^{(t-1)}}(X^{(t)},X^{(t-1)})=\frac{\partial\log h^{(t-1)}}{\partial x^{(t-1)}_i}(X^{(t-1)})+\sum\limits_{j\in\ch_t(i)}s_{j,t}^Z\frac{\partial f_j}{\partial x^{(t)}_i}((X_{\pa(j)}^{(t)},X_{\pa(j)}^{(t-1)})).
\end{equation}
As $Z_j^{(t)}$ is independent with $X^{(t-1)}$,$X_{\pa(j)}^{(t)}$ and $X_{\pa(j)}^{(t-1)}$, we could have 
\begin{equation}\label{eq:cov3-parent-node}
    \mathrm{Cov}\left(\frac{\partial\log h^{(t-1)}}{\partial x^{(t-1)}_i}(X^{(t-1)}),s_{j,t}^Z\frac{\partial f_j}{\partial x^{(t)}_i}((X_{\pa(j)}^{(t)},X_{\pa(j)}^{(t-1)}))\right)=0
\end{equation}
Together with \eqref{eq:cov1-parent-node} and \eqref{eq:cov2-parent-node}, we could offer the variance of $s_{i,t-1}^{(t)}$
\begin{equation}\label{eq:var-score-t-1}
\Var(s_{i,t-1}^{(t)}((X^{(t)},X^{(t-1)})))=\Var\left(\frac{\partial\log h^{(t-1)}}{\partial x^{(t-1)}_i}(X^{(t-1)})\right)
        +\sum\limits_{j\in\ch_t(i)}\frac{1}{\sigma_i^2}\mathbb{E}\left(\frac{\partial f_j}{\partial x^{(t)}_i}((X^{(t)}_{\pa(j)},X^{(t-1)}_{\pa(j)}))\right)^2
\end{equation}
where $\ch_t(i)$ means the child at time step $t$ of node $i$. Similarly, we could conclude that if and only if leaf node $l$ is a child of node $i$, after removing node $l$, the variance of $s_{i,t}^{(t)}$ should strictly decrease. 

\end{proof}

The proof of Corollary~\ref{cor:parent node static} follows easily from the proof of Theorem~\ref{thm:parent node}, and is hence omitted.

\subsection{Proof of Proposition~\ref{thm:score var convergence}}
\begin{proof}
\label{pf:thm-score-var-convergence}
    By the marginal distribution assumption in Proposition~\ref{thm:score var convergence} and Assumption~\ref{asmp:sparse graph}, we could know that only $o(n^2)$ terms in $\sum\limits_{1\leq i\leq j\leq n}\mathbb{E}(||X^{(t)}_i-Y^{{(t)}}_i||_2^2)$ would be nonzero and each of them is bounded, which leads to 
    \begin{equation}\label{eq:cor bound}
        \sum\limits_{1\leq i\leq j\leq n}\mathbb{E}(||(X^{(t)}_i-Y^{{(t)}}_i)||_2^2)=o(n^2).
    \end{equation}
   Therefore, we have $\sum\limits_{1\leq i\leq j\leq n}\mathbb{E}(||\Bar{X}^{(t)}_i-\Bar{Y}^{{(t)}}_i||_2^2)=o(n^2)$. For simplicity, we substitute $\Bar{X}^{(t)}$ and $\Bar{Y}^{(t)}$ by $X^{(t)}$ and $Y^{(t)}$ in the proof.

    For the first part, we denote the hessian function of node $i$ as $h_i$ and then the variance of the hessian for each node is expressed as 
    \begin{equation}        \sigma(h_i({X}^{(t)}))=\frac{\sum\limits_{k=1}^nh_i(X^{(t)}_k)^2}{n}-\left(\frac{\sum\limits_{k=1}^nh_i(X^{(t)}_k)}{n}\right)^2
    \end{equation}
    In the score matching and Jacobian estimation method, we use RBF kernel as our test function inner product which provides smoothness and boundedness of $h_i$. By Lemma~\ref{lemma:function converge}, we have
    \begin{equation}        \lim\limits_{n\to\infty}\mathbb{P}\left(\left|\frac{\sum\limits_{k=1}^nh_i(X^{(t)}_k)^2}{n}-\frac{\sum\limits_{k=1}^nh_i(Y^{(t)}_k)^2}{n}\right|\right)=0
    \end{equation}
    and that
    \begin{equation}        \lim\limits_{n\to\infty}\mathbb{P}\left(\left|\frac{\sum\limits_{k=1}^nh_i(X^{(t)}_k)}{n}-\frac{\sum\limits_{k=1}^nh_i(Y^{(t)}_k)}{n}\right|\right)=0
    \end{equation}
    which follows that
    \begin{equation}
        \lim\limits_{n\to\infty}\mathbb{P}(|\sigma(h_i(X^{(t)}))-\sigma(h_i(Y^{(t)}))|\leq\epsilon)=1
    \end{equation}
    for any node $i$ and positive constant $\epsilon$.
Besides, identity \eqref{stein} motivates the following estimator of the score function:   
\begin{equation}\label{eq:mc1}
    -\frac{1}{n}\sum\limits_{i=1}^nh(\Bar{x}^{(t)}_k)+err=\frac{1}{n}\sum\limits_{i=1}^n\nabla h(\Bar{x}^{(t)}_k)
\end{equation}
and similarly we have
\begin{equation}\label{eq:monte-carlo2}
\begin{split}
    \frac{1}{n}\sum\limits_{k=1}^nq^{(t)}(\Bar{x}^{(t)}_k)\mathrm{diag}(\nabla^2\log \Bar{p}^{(t)}(\Bar{x}^{(t)}_k))^{\top}&+err=\frac{1}{n}\sum\limits_{k=1}^n(\nabla^2_{\mathrm{diag}}q^{(t)}(\Bar{x}^{(t)}_k)-q^{(t)}(\Bar{x}^{(t)}_k)\\&-q^{(t)}(\Bar{x}_k^{(t)})\mathrm{diag}(\nabla\log \Bar{p}^{(t)}(\Bar{x}^{(t)}_k)\nabla\log \Bar{p}^{(t)}(\Bar{x}^{(t)}_k)^\top)).
  \end{split}  
\end{equation}
    Then, we parameterize the hessian and score function and estimate them by optimization from \eqref{eq:mc1} and \eqref{eq:monte-carlo2} using kernel trick. Hence the convergence to real variance of estimated Jacobian and score function for data $\{Y_k^{(t)}\}_{k=1,\dots,n},\forall t\in\{1,\dots,T\}$ is obvious.
\end{proof}

\subsection{Proof of Theorem~\ref{thm:dynamic-consistency}}
Before the proof of Theorem~\ref{thm:dynamic-consistency}, we present a lemma.
\begin{lemma}\label{lemma:function converge}
With $X^{(t)}$ and $Y^{(t)}$ as defined in Proposition~\ref{thm:score var convergence} ($t\in\{1,2,\dots,T\}$), we have 
    \begin{equation}
        \lim\limits_{n\to \infty}\mathbb{P} \left( \left| \frac{1}{n}\sum\limits_{k=1}f(X^{(t)}_k)-f(Y^{(t)}_k) \right| \geq \epsilon \right) = 0
    \end{equation}
    for any smooth bounded function $f$ that there exists a constant $B$ such that $|f|\leq B$, $|f^\prime|\leq B$ and $|f^{\prime\prime}|\leq B$ and any positive constant $\epsilon$.
\end{lemma}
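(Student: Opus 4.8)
\textbf{Proof plan for Lemma~\ref{lemma:function converge}.}

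The plan is to reduce the difference $\frac{1}{n}\sum_{k} \big(f(X^{(t)}_k) - f(Y^{(t)}_k)\big)$ to a quantity controlled by $\sum_k \|X^{(t)}_k - Y^{(t)}_k\|$ via the mean value theorem, and then invoke the bound $\sum_{1\le i\le j\le n}\E(\|X^{(t)}_i - Y^{(t)}_i\|_2^2) = o(n^2)$ established in the proof of Proposition~\ref{thm:score var convergence} (inequality~\eqref{eq:cor bound}). Concretely, since $|f'|\le B$, we have the pointwise Lipschitz bound $|f(X^{(t)}_k) - f(Y^{(t)}_k)| \le B\,\|X^{(t)}_k - Y^{(t)}_k\|_2$, so that
\begin{equation}
\left|\frac{1}{n}\sum_{k=1}^n f(X^{(t)}_k) - f(Y^{(t)}_k)\right| \le \frac{B}{n}\sum_{k=1}^n \|X^{(t)}_k - Y^{(t)}_k\|_2.
\end{equation}
It then suffices to show the right-hand side converges to $0$ in probability.

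The second step is to control $\frac{1}{n}\sum_k \|X^{(t)}_k - Y^{(t)}_k\|_2$ in expectation. By Jensen's (or Cauchy--Schwarz) inequality applied to the concave function $\sqrt{\cdot}$,
\begin{equation}
\E\left[\frac{1}{n}\sum_{k=1}^n \|X^{(t)}_k - Y^{(t)}_k\|_2\right] \le \frac{1}{n}\sum_{k=1}^n \sqrt{\E\|X^{(t)}_k - Y^{(t)}_k\|_2^2} \le \sqrt{\frac{1}{n}\sum_{k=1}^n \E\|X^{(t)}_k - Y^{(t)}_k\|_2^2}.
\end{equation}
Now the diagonal terms $\sum_{k=1}^n \E\|X^{(t)}_k - Y^{(t)}_k\|_2^2$ form a subset of the terms appearing in $\sum_{1\le i\le j\le n}\E(\|X^{(t)}_i - Y^{(t)}_i\|_2^2)$, which is $o(n^2)$ by~\eqref{eq:cor bound}; more carefully, by the graph sparsity Assumption~\ref{asmp:sparse graph} and the marginal-distribution assumption in Proposition~\ref{thm:score var convergence}, only $o(n)$ of the units $k$ have $X^{(t)}_k$ differing in distribution from $Y^{(t)}_k$, and each corresponding term is bounded, so $\sum_{k=1}^n \E\|X^{(t)}_k - Y^{(t)}_k\|_2^2 = o(n)$. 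Hence the displayed expectation is $o(1)$, and Markov's inequality gives
\begin{equation}
\mathbb{P}\left(\frac{B}{n}\sum_{k=1}^n \|X^{(t)}_k - Y^{(t)}_k\|_2 \ge \epsilon\right) \le \frac{B}{\epsilon}\,\E\left[\frac{1}{n}\sum_{k=1}^n \|X^{(t)}_k - Y^{(t)}_k\|_2\right] \to 0,
\end{equation}
which completes the argument.

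The main obstacle is being precise about the coupling between $X^{(t)}$ and $Y^{(t)}$: the statement posits $Y^{(t)}_i$ as independent variables with the same marginals as $X^{(t)}_i$, but a bound like $\E\|X^{(t)}_k - Y^{(t)}_k\|_2^2 = o(n)$ only makes sense once a specific joint coupling is fixed. One needs to exhibit a coupling under which units $k$ \emph{not} affected by neighborhood interference satisfy $X^{(t)}_k = Y^{(t)}_k$ almost surely (since their conditional laws coincide), leaving only the $o(n)$ interfered units to contribute; boundedness of those contributions follows from the moment assumptions implicit in the Gaussian-noise SVAR model~\eqref{eq:generation}. Notably, the bound $|f''|\le B$ is not actually needed for this route — only $|f|\le B$ and $|f'|\le B$ enter — though retaining the second-derivative hypothesis is harmless and keeps the lemma aligned with its use in the proof of Proposition~\ref{thm:score var convergence}, where Hessian-type quantities appear.
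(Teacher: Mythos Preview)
Your approach is correct and follows essentially the same route as the paper: both start from the Lipschitz bound $|f(X^{(t)}_k)-f(Y^{(t)}_k)|\le L\,\|X^{(t)}_k-Y^{(t)}_k\|_2$ (using $|f'|\le B$) and then control the average via a tail inequality together with the sparsity estimate~\eqref{eq:cor bound}. The only difference is that the paper applies Chebyshev's inequality, bounding the second moment by $\frac{L^2}{n^2\epsilon^2}\,\E\big[\sum_{k_1,k_2}\|X^{(t)}_{k_1}-Y^{(t)}_{k_1}\|_2\,\|X^{(t)}_{k_2}-Y^{(t)}_{k_2}\|_2\big]$ and invoking the $o(n^2)$ bound directly, whereas you use Markov's inequality on the first moment together with the diagonal estimate $\sum_k\E\|X^{(t)}_k-Y^{(t)}_k\|_2^2=o(n)$. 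Your version is marginally more elementary and, helpfully, makes the coupling issue explicit (the paper leaves it implicit); one caveat is that Assumption~\ref{asmp:sparse graph} asserts $\#E_t=\mathcal{O}(n)$ rather than $o(n)$, so your claim that only $o(n)$ units are interfered needs the same implicit strengthening on which the paper's own $o(n^2)$ bound rests.
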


\begin{proof}
    From smoothness condition, we could know that there exists an positive constant $L$ such that
    \begin{equation}
        ||f(X^{(t)}_k)-f(Y^{(t)}_k)||\leq L||X^{(t)}_k-Y^{(t)}_k)||_2.
    \end{equation}
    Then by Chebyshev inequality, we could have
    \begin{equation}
        \mathbb{P}(|\frac{1}{n}\sum\limits_{k=1}f(X^{(t)}_k)-f(Y^{(t)}_k)|\geq \epsilon)\leq \frac{L^2}{n^2\epsilon^2}\mathbb{E}(\sum\limits_{1\leq k_1,k_2\leq n}||X^{(t)}_{k_1}-Y^{(t)}_{k_1}||_2||X^{(t)}_{k_2}-Y^{(t)}_{k_2}||_2).
    \end{equation}
    By \eqref{eq:cor bound}, we know that 
    \begin{equation}
        \lim\limits_{n\to\infty}\mathbb{P}(|\frac{1}{n}\sum\limits_{k=1}f(X^{(t)}_k)-f(Y^{(t)}_k)|\geq \epsilon)=0.
    \end{equation}
\end{proof}
Now, we start the proof of Theorem~\ref{thm:dynamic-consistency}.
\begin{proof}

Denote the topological ordering of intra-snapshot graph at time step $t$ for $\Tilde{X}^{(t)}$ as $\hat{\pi}_t$ and set of all ground truth topological order as $\Pi_t$. 
     From the estimation procedure, we could find that $J^{(t)}\xrightarrow{\mathbb{P}}J^{(t)}$. 
     Denote the leaf in $j$th loop as $l_j$ and by the results in Proposition~\ref{thm:score var convergence} we have
    \begin{equation}
    \mathbb{P}(\mathrm{chi}(l_j)=\emptyset)=1.
    \end{equation}
    Denote set $\{l_j\text{is not a leaf for }G_j\}$ as $A_j$ where $G_j$ is the DAG with leaves found in former $j-1$ loops removed and $G_0\coloneqq G$. Therefore we have 
    \begin{equation}
        \lim\limits_{n\to\infty}\mathbb{P}(A_j)=1
    \end{equation}
    and 
    \begin{equation}        
    \mathbb{P}(\mathop{\cap}\limits_{j=1}^dA_j)\geq 1-\sum\limits_{j=1}^d(1-\mathbb{P}(A_j))
    \end{equation}
    which follows that
    \begin{equation}        \lim\limits_{n\to\infty}\mathbb{P}(\mathop{\cap}\limits_{j=1}^dA_j)=1
    \end{equation}
    and therefore,
        \begin{equation}
        \lim\limits_{n\to \infty}\mathbb{P}(\hat{\pi}\in \Pi)=1
    \end{equation}
    Denoted the intra-snapshot and inter-snapshot matrix from Algorithm~\ref{alg:PICK-t} as $\hat{W}_t$ and $\hat{P}_t$. Similar to the proof of Corollary~\ref{cor:static consistency}, the corresponded estimation of topological order for intra-snapshot graph $\hat{\pi}$ converges to ground truth with probability 1.
    For any leaf node $l$ at time step $t$, denote its parent node set at current and former time steps as $\hat{\pa }_l^{(t)}$. As $G^{(t)}\xrightarrow{\mathbb{P}}S^{(t)}$ where $S^{(t)}$ is the score function, we have $\lim\limits_{n\to\infty}\mathrm{Var}(G^{(t)}_i)=\mathrm{var}(S^{(t)}_i)$ for any node $i$ at current and former time steps. Hence we have
 \begin{equation}
     \lim\limits_{n\to\infty}\mathbb{P}(\hat{\pa }_l^{(t)}=\pa (l))=1
 \end{equation}
 where $\pa (l)$ stands for the parent nodes at current and former time steps which means that the predicted parent nodes converges to the ground truth with probability 1. It follows that after pruning,
 \begin{equation}
        \lim\limits_{n\to \infty}\mathbb{P}(\hat{W}_t=W)=1
 \end{equation}
  \begin{equation}
        \lim\limits_{n\to \infty}\mathbb{P}(\hat{P}_t=P)=1
 \end{equation}
Then we could know that the average predicted results also converge to the ground truth which ends the proof.
\end{proof}

\subsection{Theoretical analysis of Algorithm~\ref{alg:dynamic-pruning}}
\begin{theorem}\label{thm:pruning-converge}
    Denote the edge set selected by Algorithm~\ref{alg:dynamic-pruning} as $\hat{E}_\alpha$, and the ground truth edge set as $E$. Then we have
    \begin{equation}
      \mathbb{P}(\hat{E}_{\alpha}\subseteq E)=1-\alpha
    \end{equation}
    where $\alpha$ is a positive hyperparameter.
\end{theorem}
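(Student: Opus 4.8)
The plan is to reduce the statement to a union bound over the finitely many candidate edges, together with a single-hypothesis-test guarantee on each spurious edge. First I would fix attention on one putative covariate $X_i$ in the GAM regression $Y \sim X = (X_1, \dots, X_k)$ and on the decision rule in Algorithm~\ref{alg:dynamic-pruning}: $X_i$ is selected iff $\hat{\beta}_i^2 / \Var(\hat{\beta}_i) > \alpha$. Under the generalized additive model, if the true component function associated with $X_i$ is identically zero (i.e.\ the edge is \emph{not} in the ground-truth set $E$), then $\hat\beta_i$ is centered at $0$, and the studentized statistic $T_i \coloneqq \hat\beta_i / \sqrt{\Var \hat\beta_i}$ is (asymptotically, or exactly under the working Gaussian/large-sample approximation used in the GAM backfitting inference) a standard pivotal quantity whose square $T_i^2$ has a known null distribution. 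The threshold $\alpha$ should be read not as a raw number but as the critical value calibrated so that $\mathbb{P}(T_i^2 > \alpha \mid \text{edge } i \notin E) = \alpha$ — that is, $\alpha$ is the per-test significance level. With that reading, each spurious candidate edge is retained with probability exactly $\alpha$.

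The key steps, in order, would be: (1) State precisely the GAM inferential setup — which estimator $\hat\beta$, which variance estimate $\widehat{\Var}\hat\beta_i$, and the resulting null distribution of $\hat\beta_i^2/\widehat{\Var}\hat\beta_i$ (a $\chi^2_1$ or $F$-type statistic); (2) For a \emph{single} edge $i \notin E$, invoke the definition of the critical value to get $\mathbb{P}(X_i \text{ selected}) = \alpha$, hence $\mathbb{P}(X_i \text{ not selected}) = 1 - \alpha$; (3) Observe that the event $\{\hat E_\alpha \subseteq E\}$ is exactly the event that \emph{no} spurious edge survives, i.e.\ $\bigcap_{i \notin E}\{X_i \text{ not selected}\}$; (4) Conclude $\mathbb{P}(\hat E_\alpha \subseteq E) = 1 - \alpha$. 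Note that the clean equality ``$= 1-\alpha$'' in the theorem (rather than an inequality) forces the intended reading to be that there is effectively a \emph{single} relevant test — or that the test on the full candidate set is performed jointly with level $\alpha$ — so in writing step (3)–(4) I would phrase it as: the pruning procedure is applied once, treating ``$X_i$ is a spurious parent of the current leaf'' as one composite null, and $\alpha$ controls that family-wise/overall error at level $\alpha$; equivalently, I would present the per-leaf version where only one candidate parent is in question and the statement is literally a restatement of the size of the test.

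The main obstacle is the gap between the literal statement $\mathbb{P}(\hat E_\alpha \subseteq E) = 1-\alpha$ and what a naive union bound gives, which would only yield $\mathbb{P}(\hat E_\alpha \subseteq E) \ge 1 - (\#\{i \notin E\})\,\alpha$. Getting the exact equality requires either (a) reinterpreting $\alpha$ as the overall level of a single joint test (so the theorem is essentially definitional once the GAM $p$-value machinery is in place), or (b) restricting to the setting where at each pruning iteration exactly one edge is under test. I would therefore devote the bulk of the write-up to making this interpretation explicit and to justifying the pivotality/known-null-distribution claim for the GAM test statistic $\hat\beta_i^2/\Var\hat\beta_i$ — citing the standard asymptotic theory for penalized additive-model estimators — since everything else is a one-line probabilistic argument. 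A secondary, minor point to address is measurability/continuity of the null distribution at the threshold (so that the probability is exactly $\alpha$ and not merely $\le \alpha$), which holds because the relevant $\chi^2$/$F$ law is absolutely continuous.
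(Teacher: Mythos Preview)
Your proposal takes a genuinely different route from the paper. The paper does \emph{not} invoke any $\chi^2_1$ or $F$-type null distribution for $\hat\beta_i^2/\Var\hat\beta_i$; instead it writes the basis-expanded GAM as a linear model $Y=X\beta+\epsilon$, computes the OLS estimator $\hat\beta=(X^\top X)^{-1}X^\top Y$, notes that under $H_0^i:\beta_i=0$ one has $\mathbb{E}(\hat\beta_i\mid X)=0$, and then applies \emph{Chebyshev's inequality} to obtain $\mathbb{P}\bigl(\hat\beta_i^2/\Var\hat\beta_i \ge 1/\alpha\bigr)\le \alpha$, followed by a consistency step replacing $\Var\hat\beta_i$ by $\widehat\Var\hat\beta_i$. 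Your distributional argument would, in principle, yield the exact per-test level (equality rather than Chebyshev's inequality), at the price of needing the Gaussian/large-sample pivotality assumption you mention; the paper's Chebyshev route is distribution-free but only gives an upper bound and implicitly uses the threshold $1/\alpha$ rather than the $\alpha$ written in Algorithm~\ref{alg:dynamic-pruning}.

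On the multiplicity issue you flag: you are right to worry, and the paper does not resolve it either. The paper's proof stops at the single-coordinate bound and then simply asserts ``the selected edge set is a subset of the ground truth edge set at a confidence level of $1-\alpha$'' without any union bound or joint-test argument. So your observation that the clean equality $\mathbb{P}(\hat E_\alpha\subseteq E)=1-\alpha$ cannot follow from per-edge level-$\alpha$ tests alone is a genuine gap in the statement as written; the paper's own proof delivers only the per-test inequality and leaves the passage to the set-level claim informal. Your options (a) and (b) for reconciling this are reasonable readings, but neither is what the paper actually argues.
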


\begin{proof}[Proof of Theorem~\ref{thm:pruning-converge}]
    For simplicity, we only prove the situation for $p=1$ and without loss of generality, we assume that $\Tilde{X}^{(t)}\in \mathbb{R}^{n\times k}$ where $k$ is the number of covariates.
    The general additive model could be expressed as 
    \begin{equation}
        Y=\sum\limits_{i=1}^kf_i(X_i)
    \end{equation}
    Plugging in data with basis function, we have
    \begin{equation}
        Y_i = \beta^\top s(X_i)+\epsilon_i
    \end{equation}
    where $s(x)$ is the basis function and we have $\mathbb{E}(\epsilon)=0$, $\mathrm{Cov}(\epsilon)=\Sigma$ and $X$ and $\epsilon$ are independent. For simplicity, we could use $X_i$ to substitute $s(X_i)$
    It follows that 
    \begin{equation}
        \hat{\beta}=(X^\top X)^{-1}X^\top Y
    \end{equation}
    where $X=(X_1.\dots,X_n)^\top$ and $Y=(Y_1,\dots,Y_n)^\top$.
    Then we have 
    \begin{equation}
        \mathbb{E}(\hat{\beta}|X)=\beta
    \end{equation}
    and
    \begin{equation}
        \mathrm{Cov}(\hat{\beta}|X)=(X^\top X)^{-1}X^\top X\beta\beta^\top X^\top X(X^\top X)^{-1}-\beta\beta^\top+\sigma_{\epsilon}^2I_d
    \end{equation}

    Then under null hypothesis $H^{i}_0: \beta_i=0$, for any $i\in\{1,2,\dots,k\}$, we have
     \begin{equation}
        \mathbb{E}(\hat{\beta}_i|X)=0
    \end{equation}
    and
    \begin{equation}
        \Var(\hat{\beta}_i|X)=\sigma_{\epsilon}.
    \end{equation}
    By Chebyshev's Inequality, we have
        \begin{equation}
        \mathbb{P}(|\beta_i-\mathbb{E}\beta_i|\geq \alpha)\leq \frac{\Var\beta_i}{\alpha^2}
    \end{equation}
    Then under $H_0$, we could obtain that
    \begin{equation}
        \mathbb{P}\left(\frac{\hat{\beta}_i^2}{\Var \hat{\beta}_i}\geq \frac{1}{\alpha}\right)\leq \alpha.
    \end{equation}
    Since $\hat{\Var}\hat{\beta}\xrightarrow{P}\Var\hat{\beta}$ when sample number $n$ tends to infinity, we could have that $\frac{\hat{\beta}_i^2}{\hat{\Var} \hat{\beta}_i}\xrightarrow{P}\frac{\hat{\beta}_i^2}{\Var \hat{\beta}_i}$ which follows that
    \begin{equation}
        \lim\limits_{n\to\infty}\mathbb{P}\left(\left|\frac{\hat{\beta}_i^2}{\hat{\Var} \hat{\beta}_i}-\frac{\hat{\beta}_i^2}{{\Var} \hat{\beta}_i}\right|\leq \varepsilon\right)=1
    \end{equation}
    for any positive $\varepsilon$. Denote $\left|\frac{\hat{\beta}_i^2}{\hat{\Var} \hat{\beta}_i}-\frac{\hat{\beta}_i^2}{{\Var} \hat{\beta}_i}\right|\geq \varepsilon$ as $A_n$, and $\frac{\hat{\beta}_i^2}{\hat{\Var} \hat{\beta}_i}\geq \frac{1}{\alpha}$ as $B_n$ and then we could deduce that
    \begin{equation}
        \begin{split}
            \mathbb{P}(B_n)&=\mathbb{P}(B_n\cap A_n)+\mathbb{P}(B_n\setminus A_n)\\
           & \leq \mathbb{P}(A_n)+\frac{1}{1/\alpha+\varepsilon}.
        \end{split}
    \end{equation}
    As $\varepsilon$ is any positive constant and $\mathbb{P}(A_n)\to 0$ when $\varepsilon\to 0$, we have
       \begin{equation}
        \mathbb{P}\left(\frac{\hat{\beta}_i^2}{\hat{\Var} \hat{\beta}_i}\geq \frac{1}{\alpha}\right)\leq \alpha.
    \end{equation}
Therefore, the selected edge set is a subset of the ground truth edge set at a confidence level of $1-\alpha$.
\end{proof}

\section*{Additional Experimental Results}
In this section, we present the FDR and TPR results in the synthetic data experiments and results for different nonlinear function types and graph generating types.

\subsubsection{Deffered FDR and TPR  and other function type results numerical experiments}
\label{sec:defer results for temporal data}
\begin{figure}[htbp]
  \centering           
  \subfloat[ER1]   
  {      \label{fig:dy-sin-fdr-w-subfig1}\includegraphics[width=0.3\linewidth]{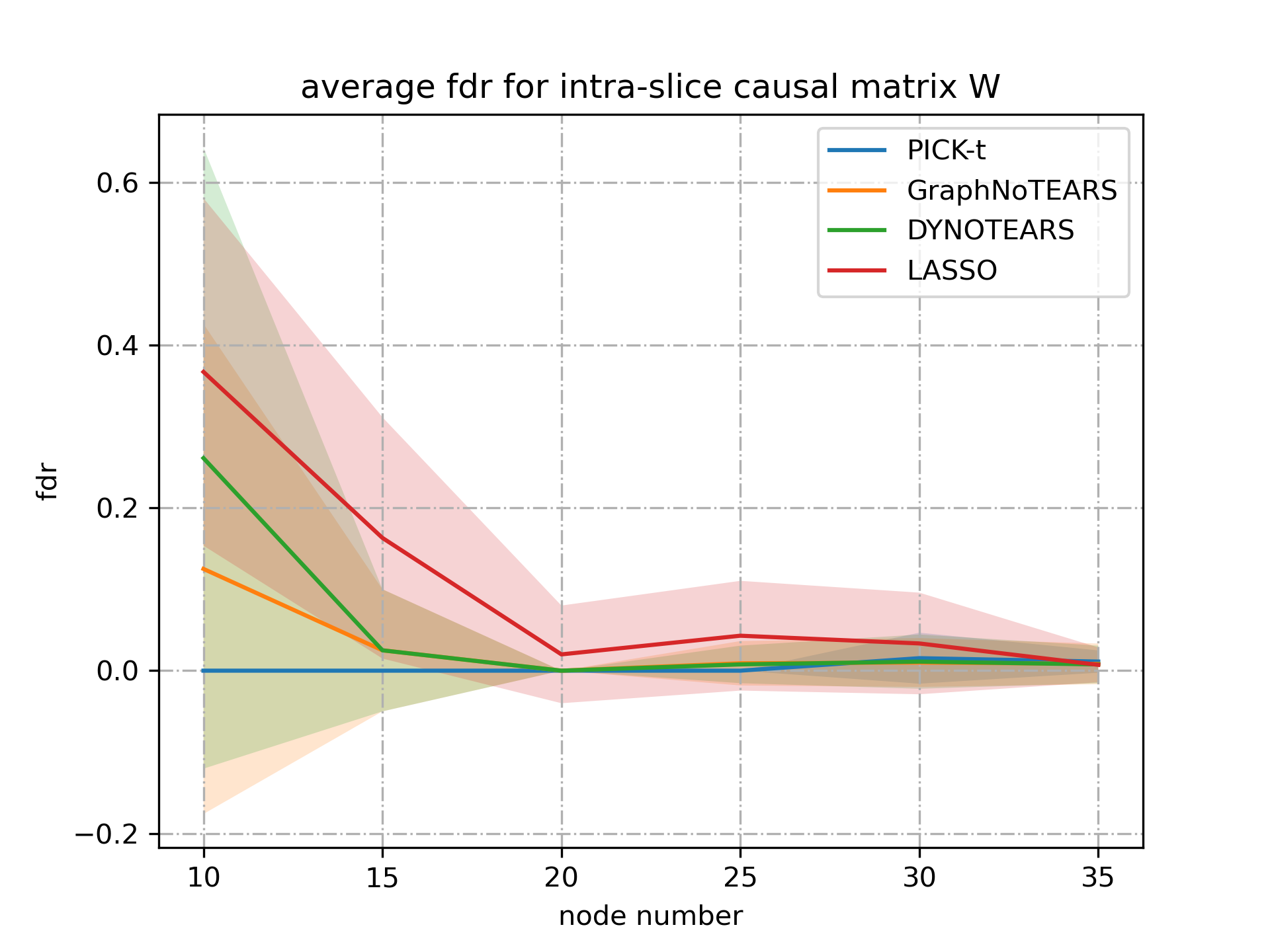}
  }
  \subfloat[ER2]
  {      \label{fig:dy-sin-fdr-w-subfig2}\includegraphics[width=0.3\linewidth]{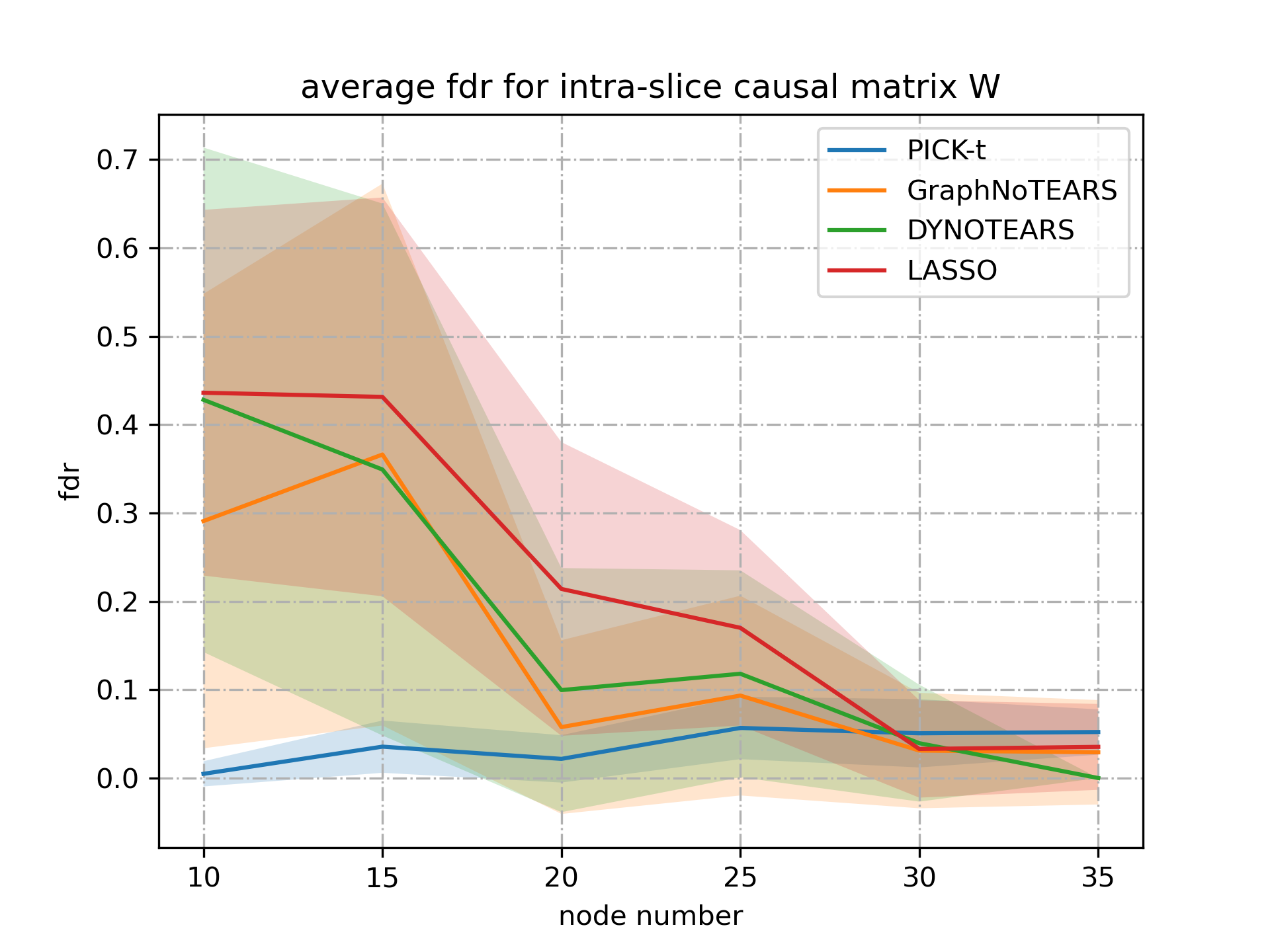}
  }
    \subfloat[ER4]
  {      \label{fig:dy-sin-fdr-w-subfig3}\includegraphics[width=0.3\linewidth]{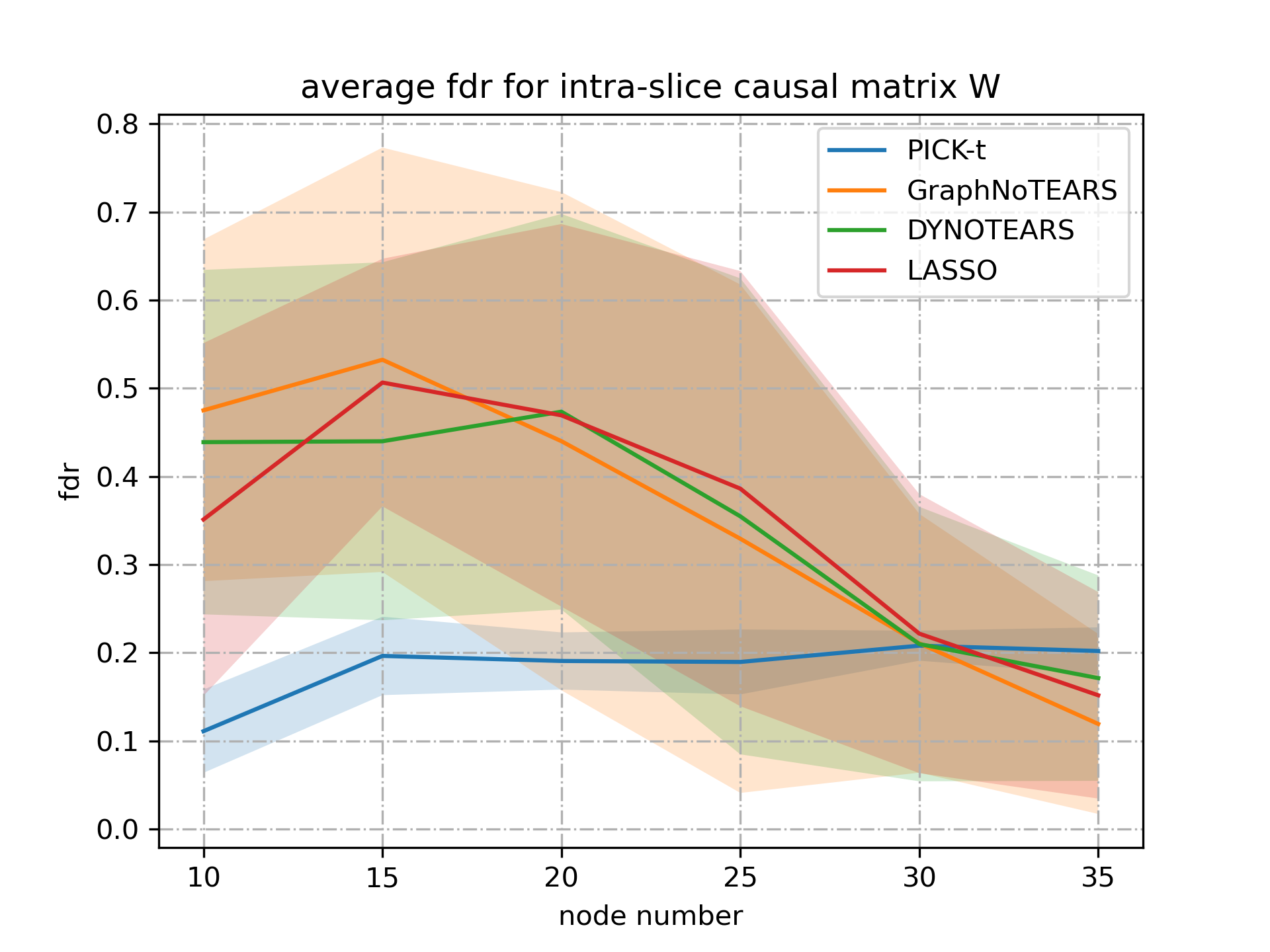}
  }
    \caption{FDR for predicted inter-snapshot causal graph and ground truth inter-snapshot causal graph with link function $f_{i}^{(t)}(x_i)=\sum\limits_{j\in\pa(i)}\sin{x_j}$.}   
  \label{fig:dynamic-fdr-intra-snapshot-sin}          
\end{figure}

\begin{figure}[htbp]
  \centering           
  \subfloat[ER1]   
  {      \label{fig:dy-sin-fdr-p-subfig1}\includegraphics[width=0.3\linewidth]{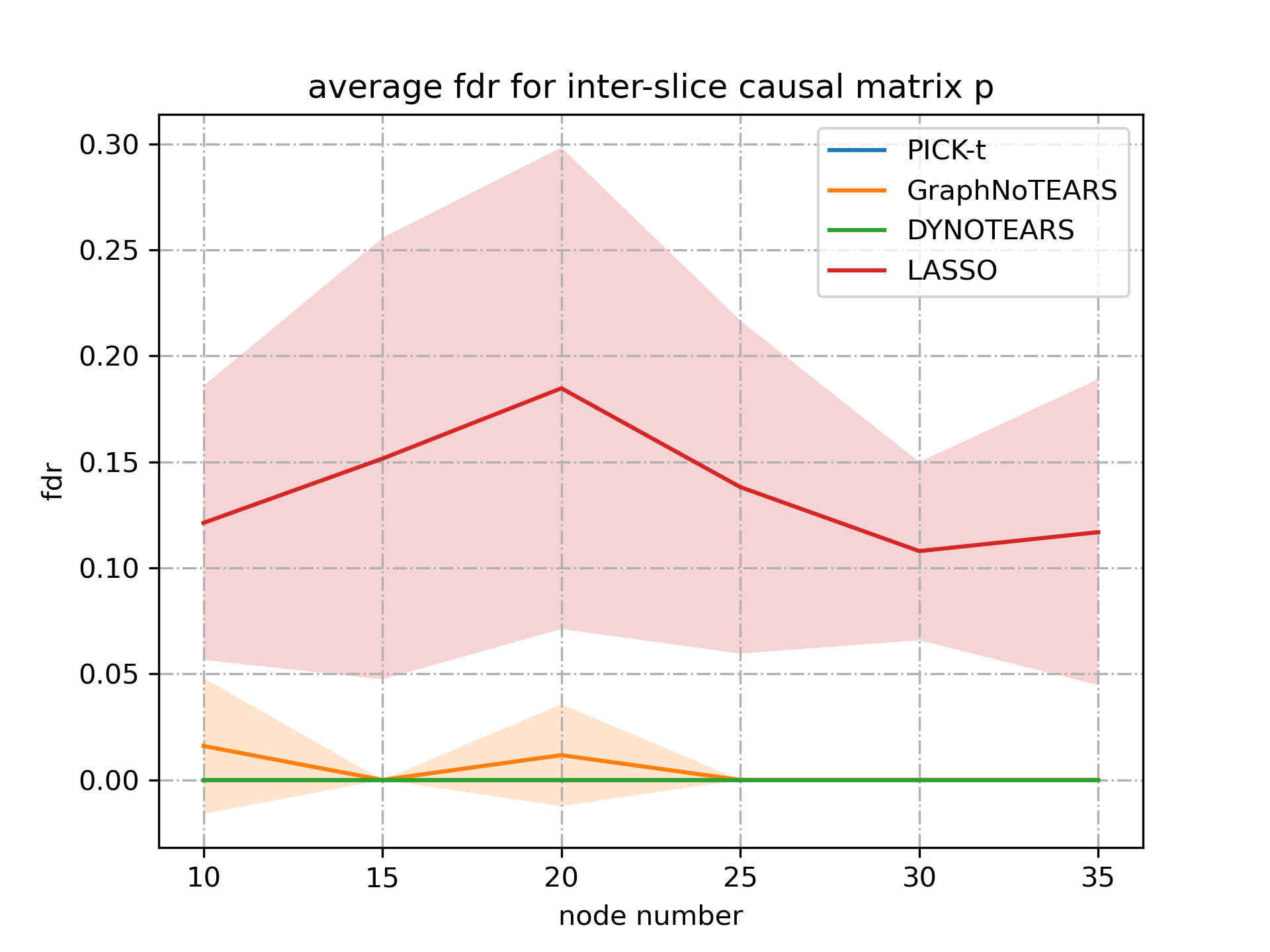}
  }
  \subfloat[ER2]
  {      \label{fig:dy-sin-fdr-p-subfig2}\includegraphics[width=0.3\linewidth]{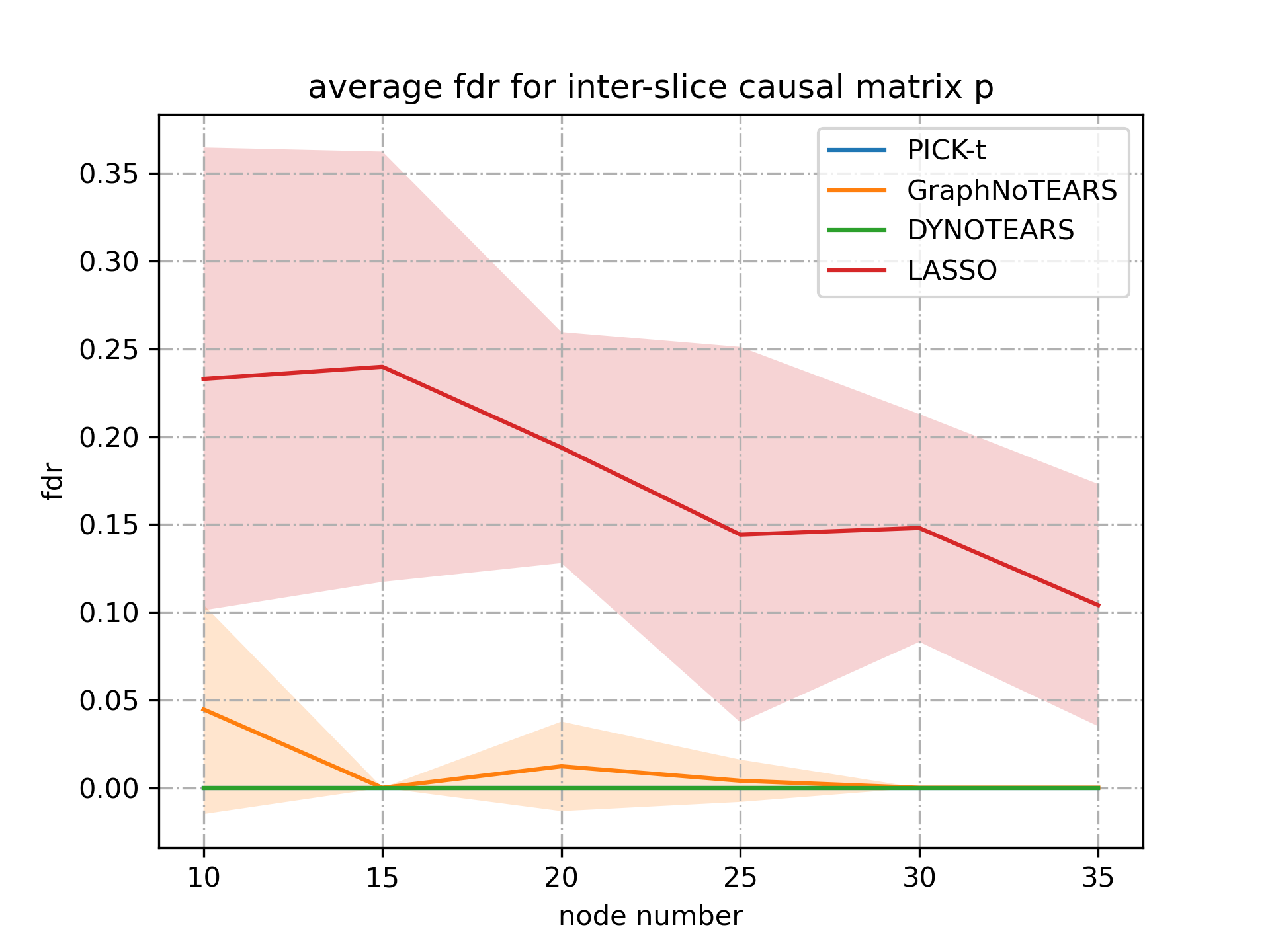}
  }
    \subfloat[ER4]
  {      \label{fig:dy-sin-fdr-p-subfig3}\includegraphics[width=0.3\linewidth]{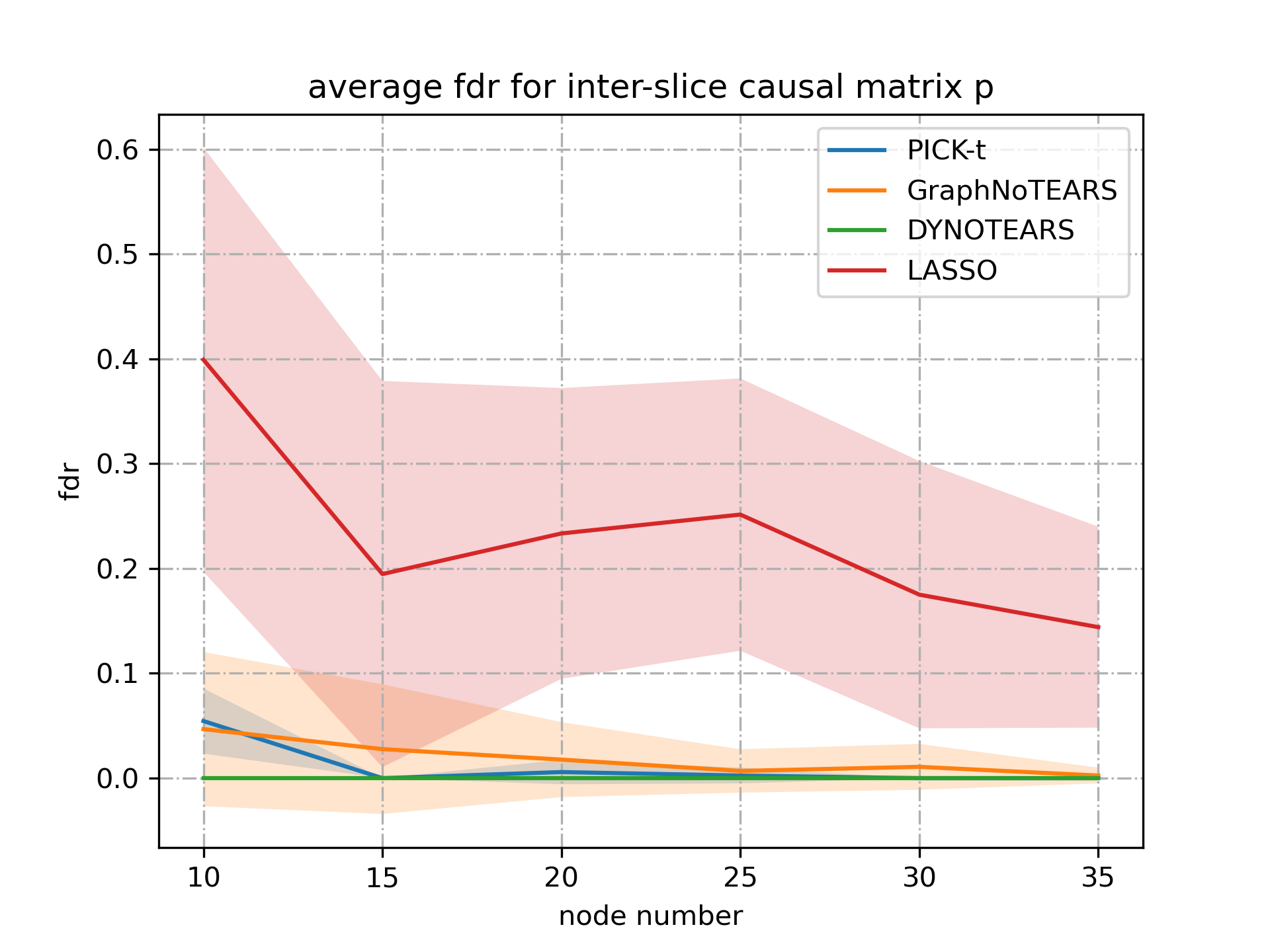}
  }
    \caption{FDR for predicted inter-snapshot causal graph and ground truth inter-snapshot causal graph with link function $f_{i}^{(t)}(x_i)=\sum\limits_{j\in\pa(i)}\sin{x_j}$.}   
  \label{fig:dynamic-fdr-inter-snapshot-sin}          
\end{figure}

\begin{figure}[htbp]
  \centering           
  \subfloat[ER1]   
  {      \label{fig:dy-sin-tpr-w-subfig1}\includegraphics[width=0.3\linewidth]{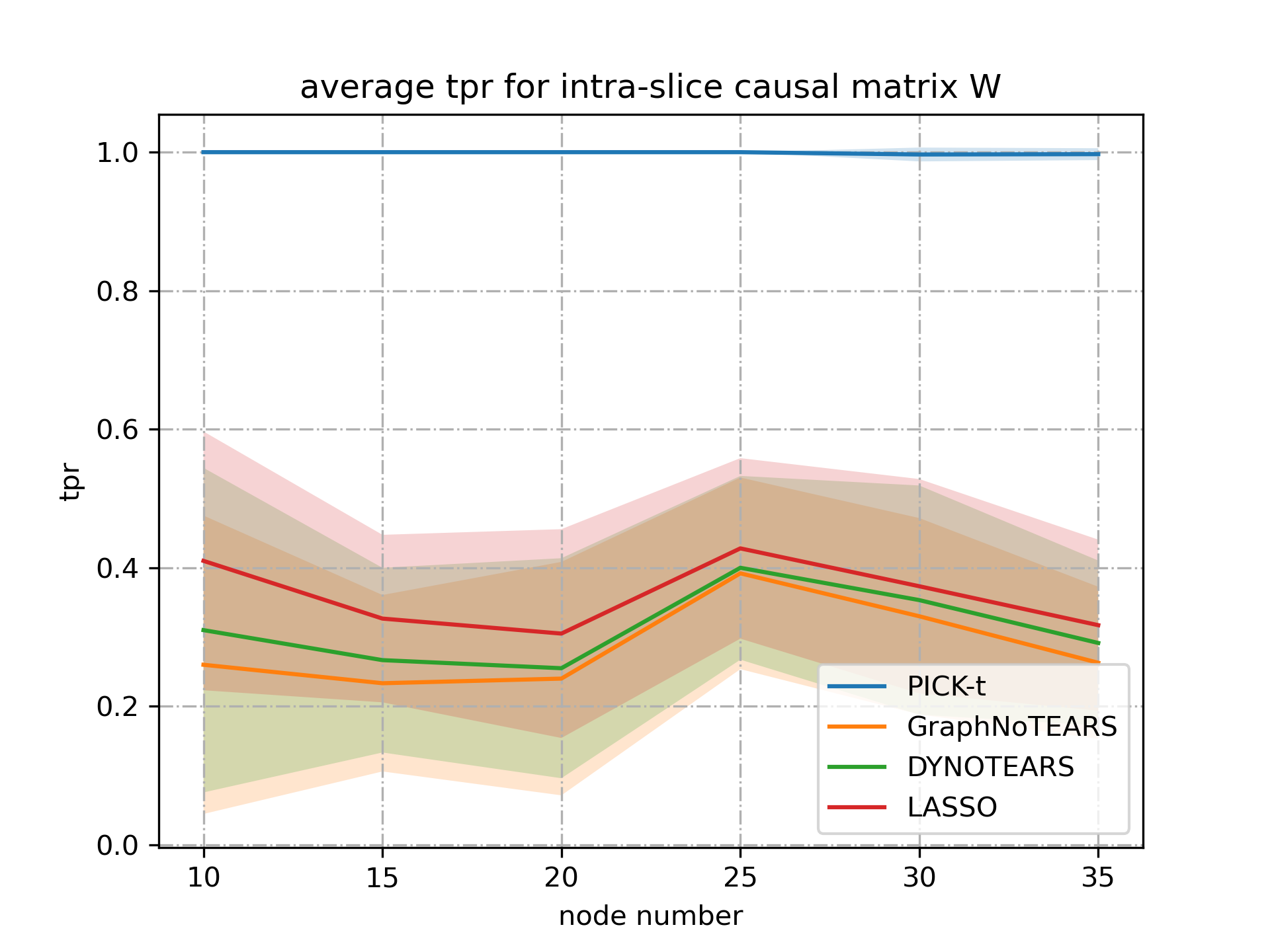}
  }
  \subfloat[ER2]
  {      \label{fig:dy-sin-tpr-w-subfig2}\includegraphics[width=0.3\linewidth]{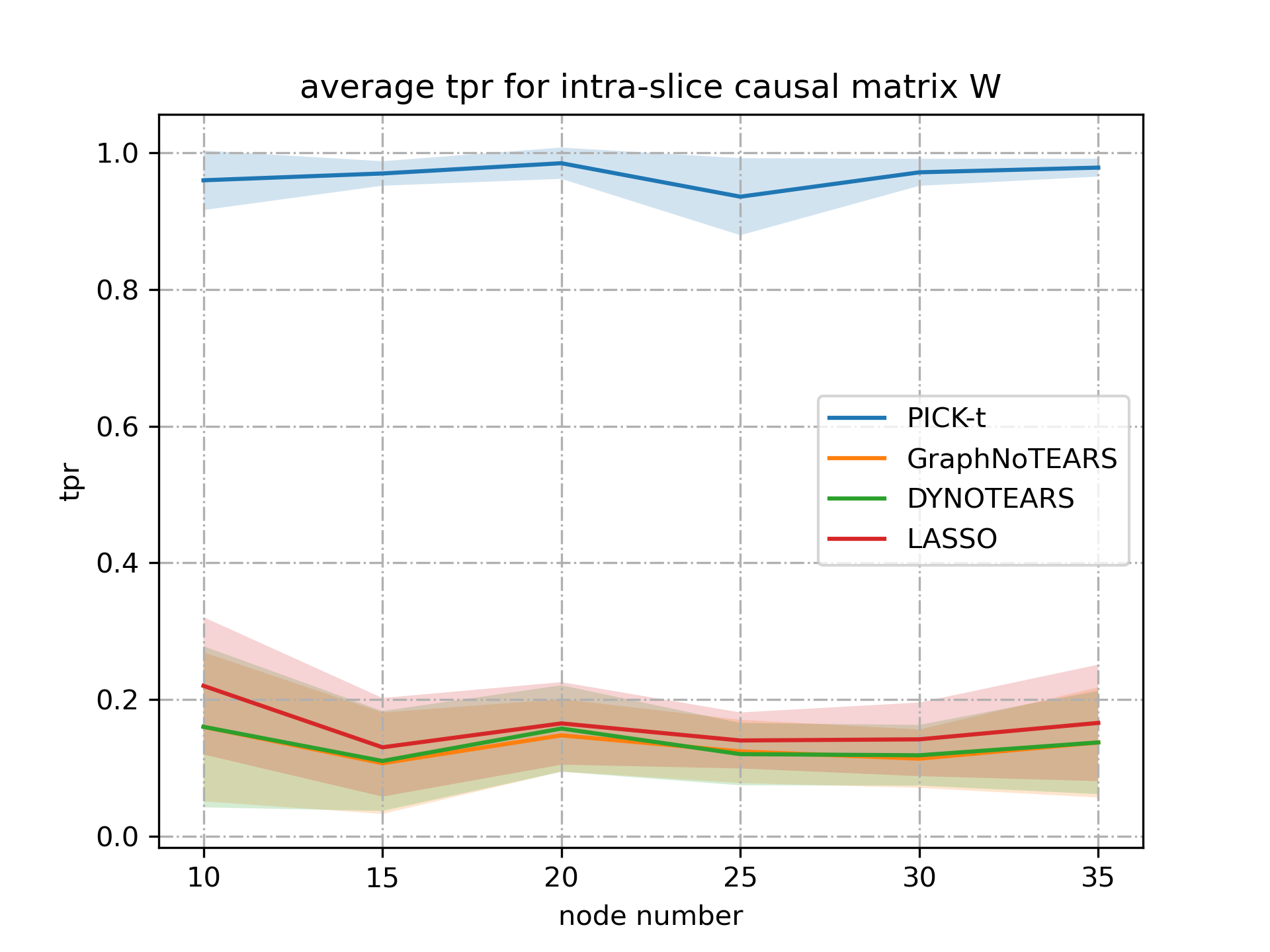}
  }
    \subfloat[ER4]
  {      \label{fig:dy-sin-tpr-w-subfig3}\includegraphics[width=0.3\linewidth]{new_figures/temporal/fdr-for-W-p=1-s0=4d-noisetype=sin-adddag=ERlag_type=ERd=10-35.png}
  }
    \caption{TPR for predicted inter-snapshot causal graph and ground truth intra-snapshot causal graph with link function $f_{i}^{(t)}(x_i)=\sum\limits_{j\in\pa(i)}\sin{x_j}$.}   
  \label{fig:dynamic-tpr-intra-snapshot-sin}          
\end{figure}

\begin{figure}[htbp]
  \centering           
  \subfloat[ER1]   
  {      \label{fig:dy-sin-tpr-p-subfig1}\includegraphics[width=0.3\linewidth]{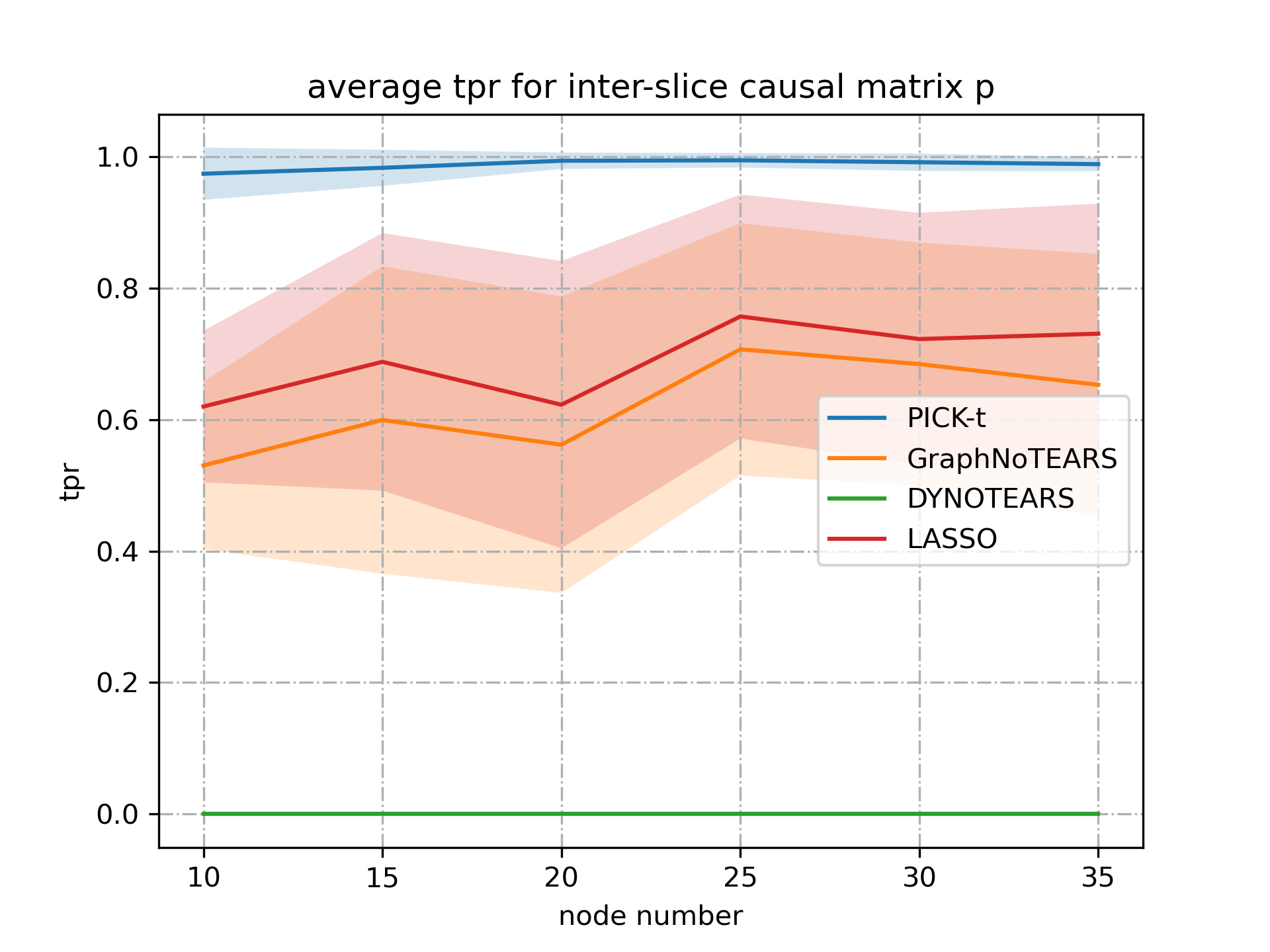}
  }
  \subfloat[ER2]
  {      \label{fig:dy-sin-tpr-p-subfig2}\includegraphics[width=0.3\linewidth]{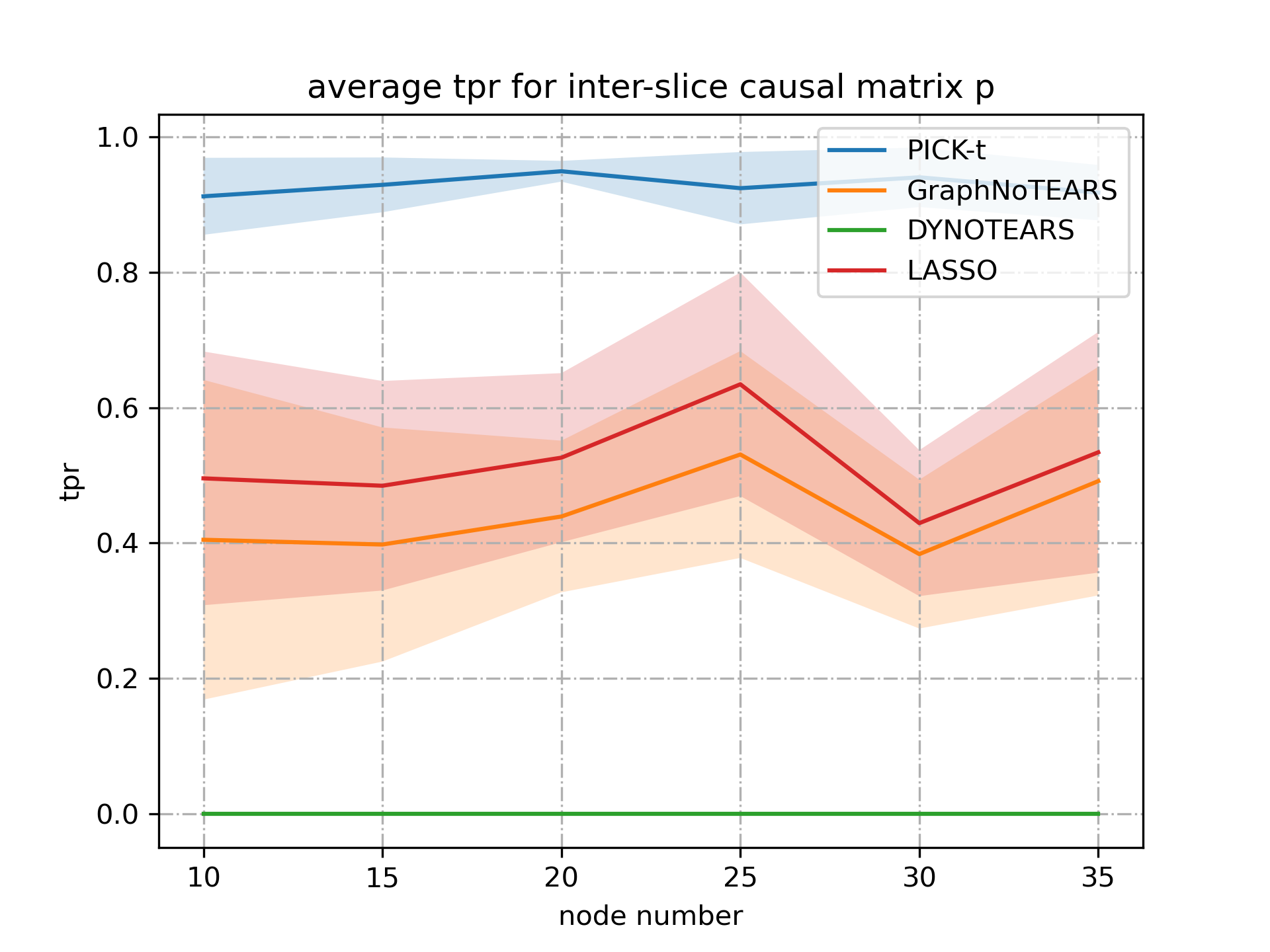}
  }
    \subfloat[ER4]
  {      \label{fig:dy-sin-tpr-p-subfig3}\includegraphics[width=0.3\linewidth]{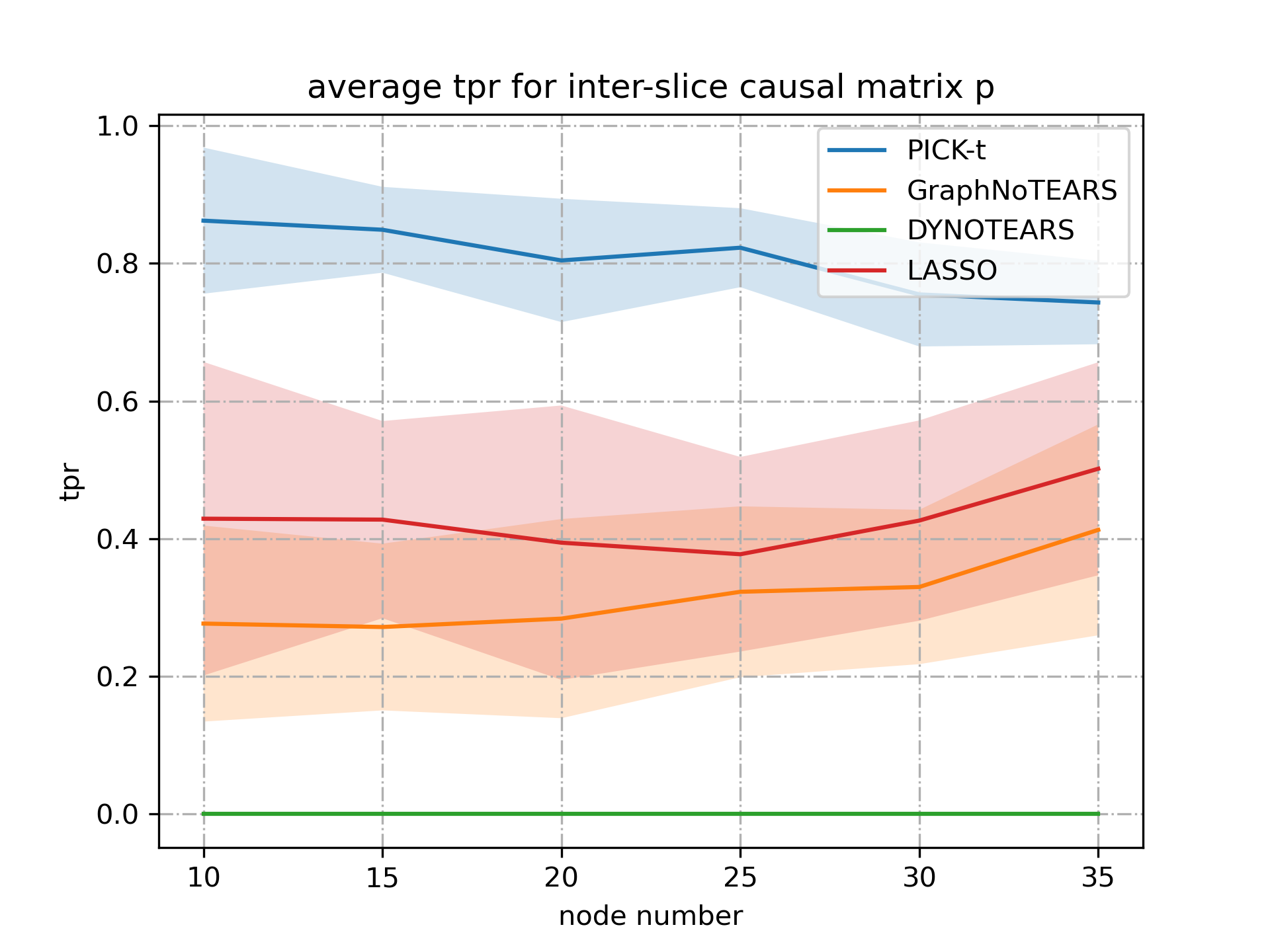}
  }
    \caption{TPR for predicted inter-snapshot causal graph and ground truth inter-snapshot causal graph with link function $f_{i}^{(t)}(x_i)=\sum\limits_{j\in\pa(i)}\sin{x_j}$.}   
  \label{fig:dynamic-tpr-inter-snapshot-sin}          
\end{figure}


\subsubsection{Additional results for synthetic data in different DGP settings}
In this part, we would present the evaluation results for different link function such that $f_{i}^{(t)}$ generated by sampling Gaussian process with a unit bandwidth RBF kernel. Similarly, we would still compare the performance with SHD, FDR and TPR.

\begin{figure}[htbp] 
  \centering           
  \subfloat[ER1]   
  {      \label{fig:dy-gp-shd-w-subfig1}\includegraphics[width=0.3\linewidth]{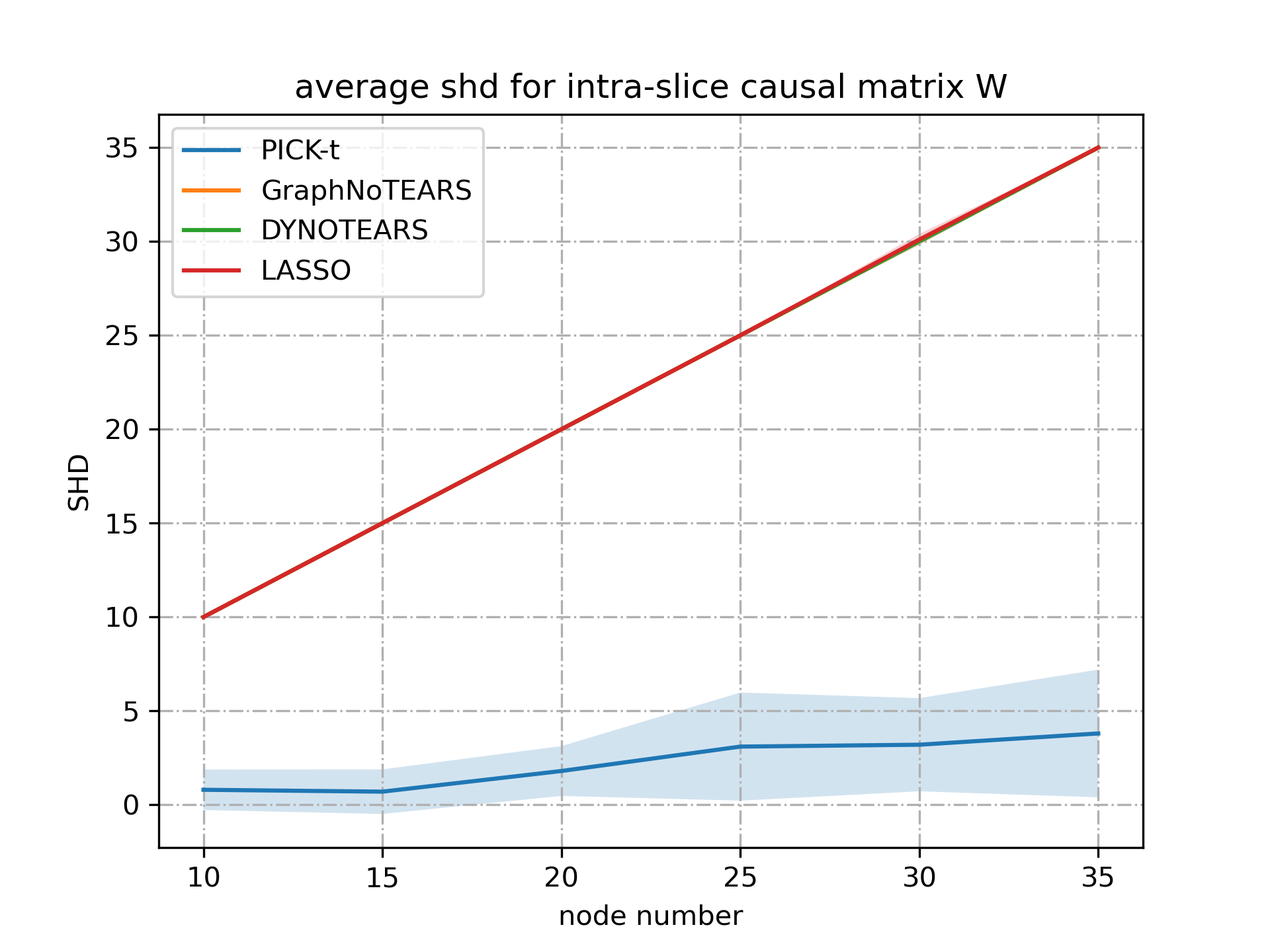}
  }
  \subfloat[ER2]
  {      \label{fig:dy-gp-shd-w-subfig2}\includegraphics[width=0.3\linewidth]{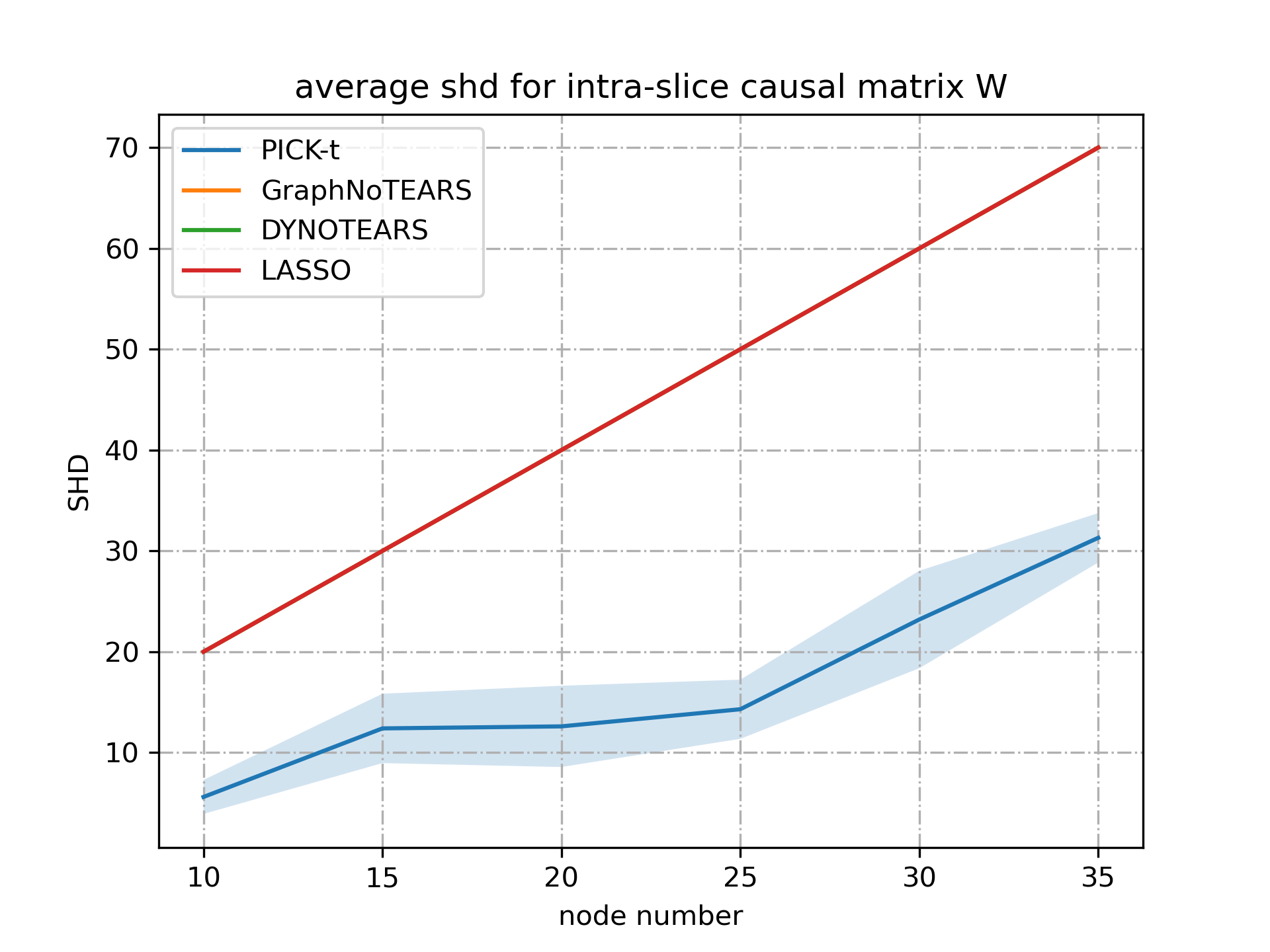}
  }
    \subfloat[ER4]
  {      \label{fig:dy-gp-shd-w-subfig3}\includegraphics[width=0.3\linewidth]{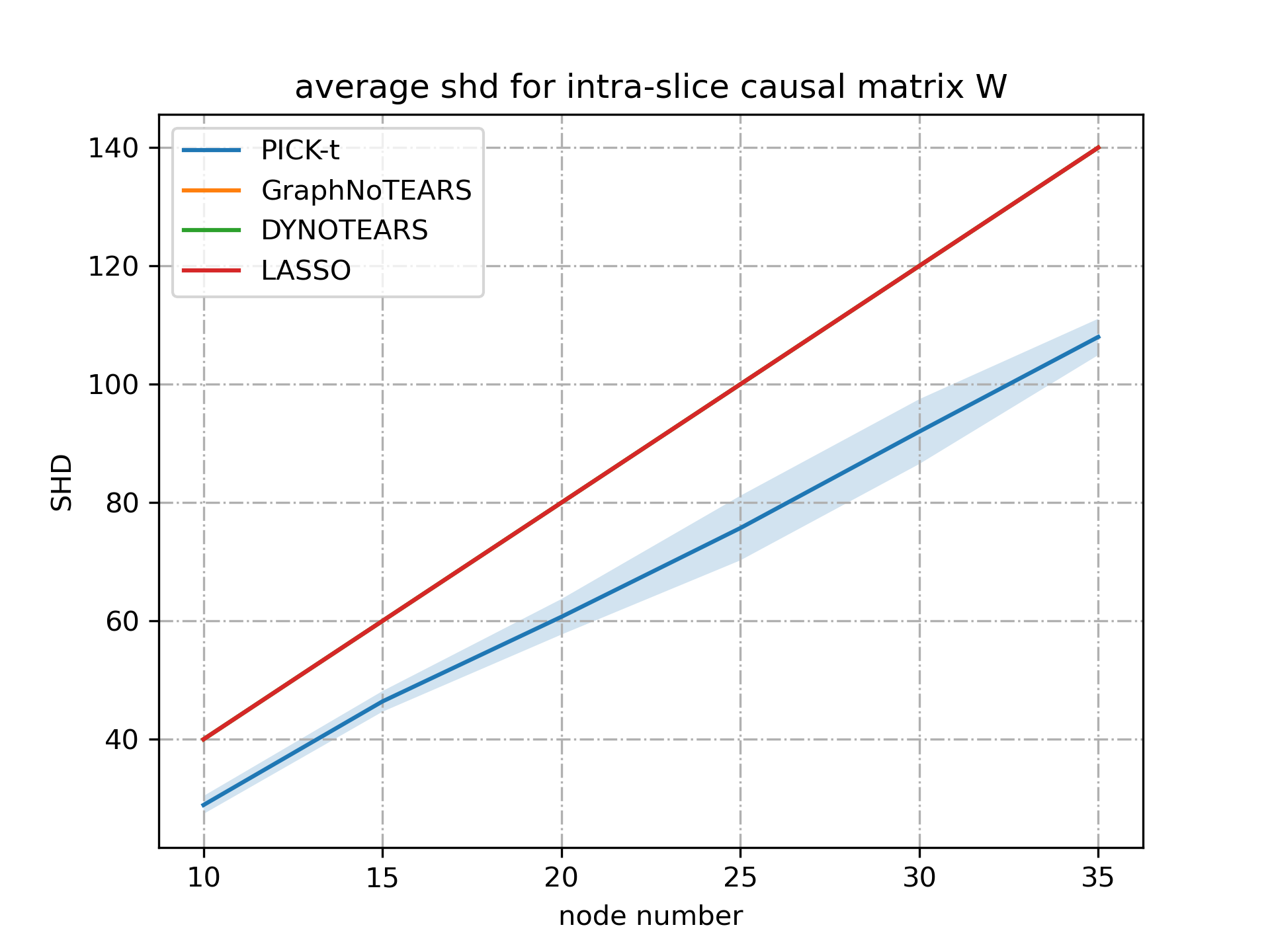}
  }
  \caption{SHD for predicted intra-snapshot causal graph and ground truth intra-snapshot causal graph with link function generated by sampling Gaussian process with a unit bandwidth RBF kernel.}   
  \label{fig:dynamic-shd-intra-snapshot-gp}          
\end{figure}

\begin{figure}[htbp] 
  \centering           
  \subfloat[ER1]   
  {      \label{fig:dy-gp-shd-p-subfig1}\includegraphics[width=0.3\linewidth]{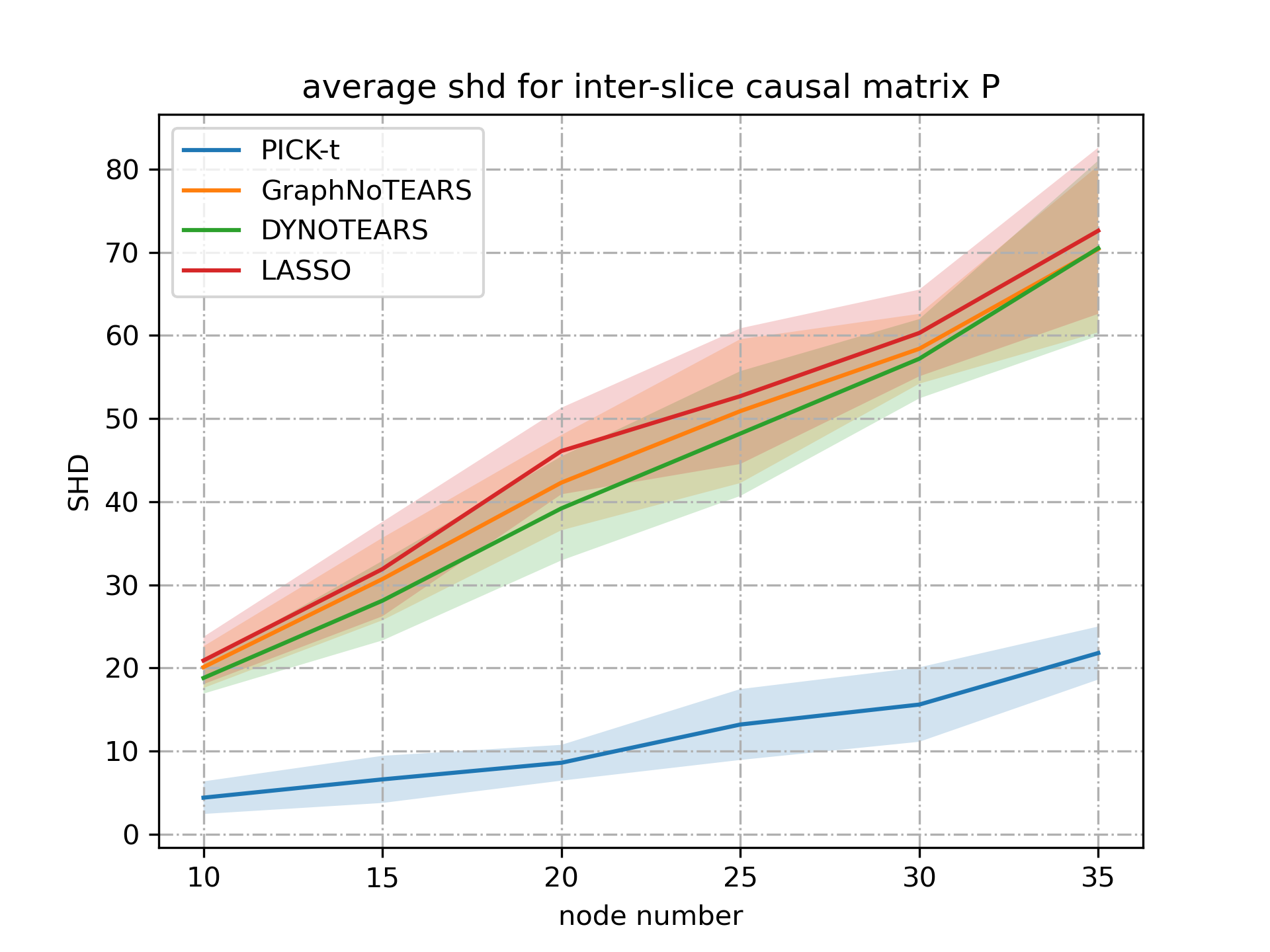}
  }
  \subfloat[ER2]
  {      \label{fig:dy-gp-shd-p-subfig2}\includegraphics[width=0.3\linewidth]{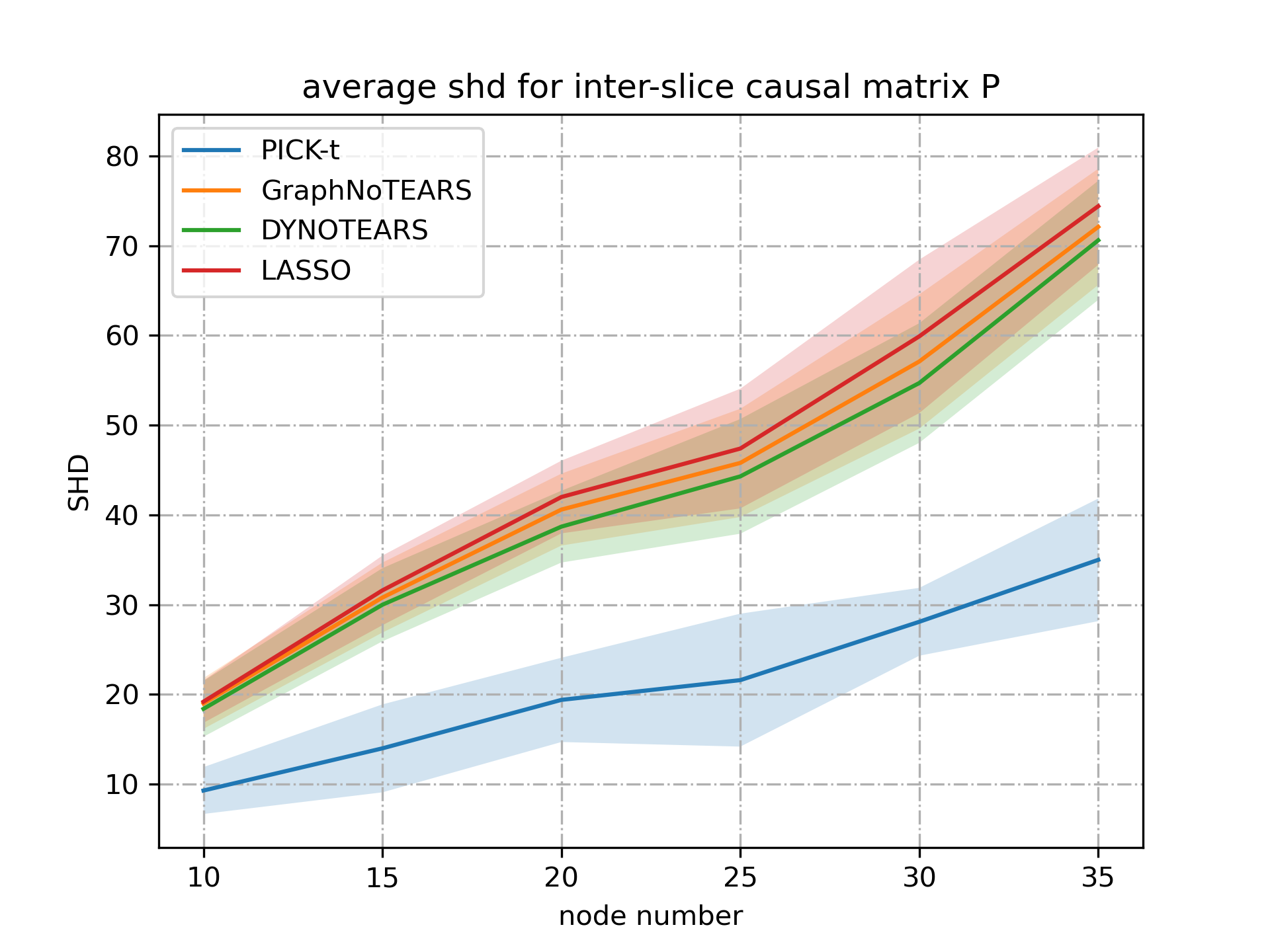}
  }
    \subfloat[ER4]
  {      \label{fig:dy-gp-shd-p-subfig3}\includegraphics[width=0.3\linewidth]{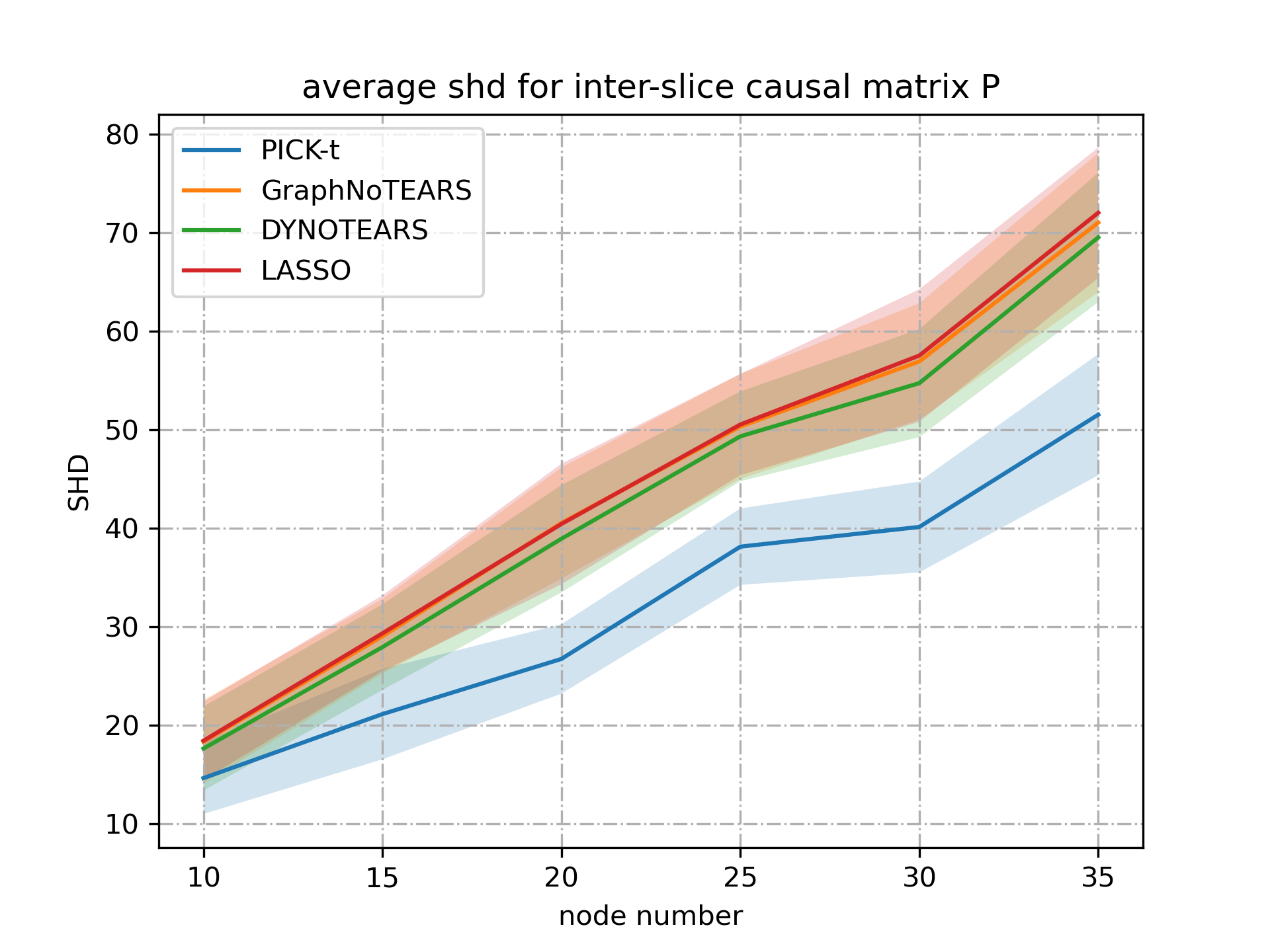}
  }
  \caption{SHD for predicted inter-snapshot causal graph and ground truth inter-snapshot causal graph with link function generated by sampling Gaussian process with a unit bandwidth RBF kernel.}   
  \label{fig:dynamic-shd-inter-snapshot-gp}          
\end{figure}

\begin{figure}[htbp]
  \centering           
  \subfloat[ER1]   
  {      \label{fig:dy-gp-fdr-w-subfig1}\includegraphics[width=0.3\linewidth]{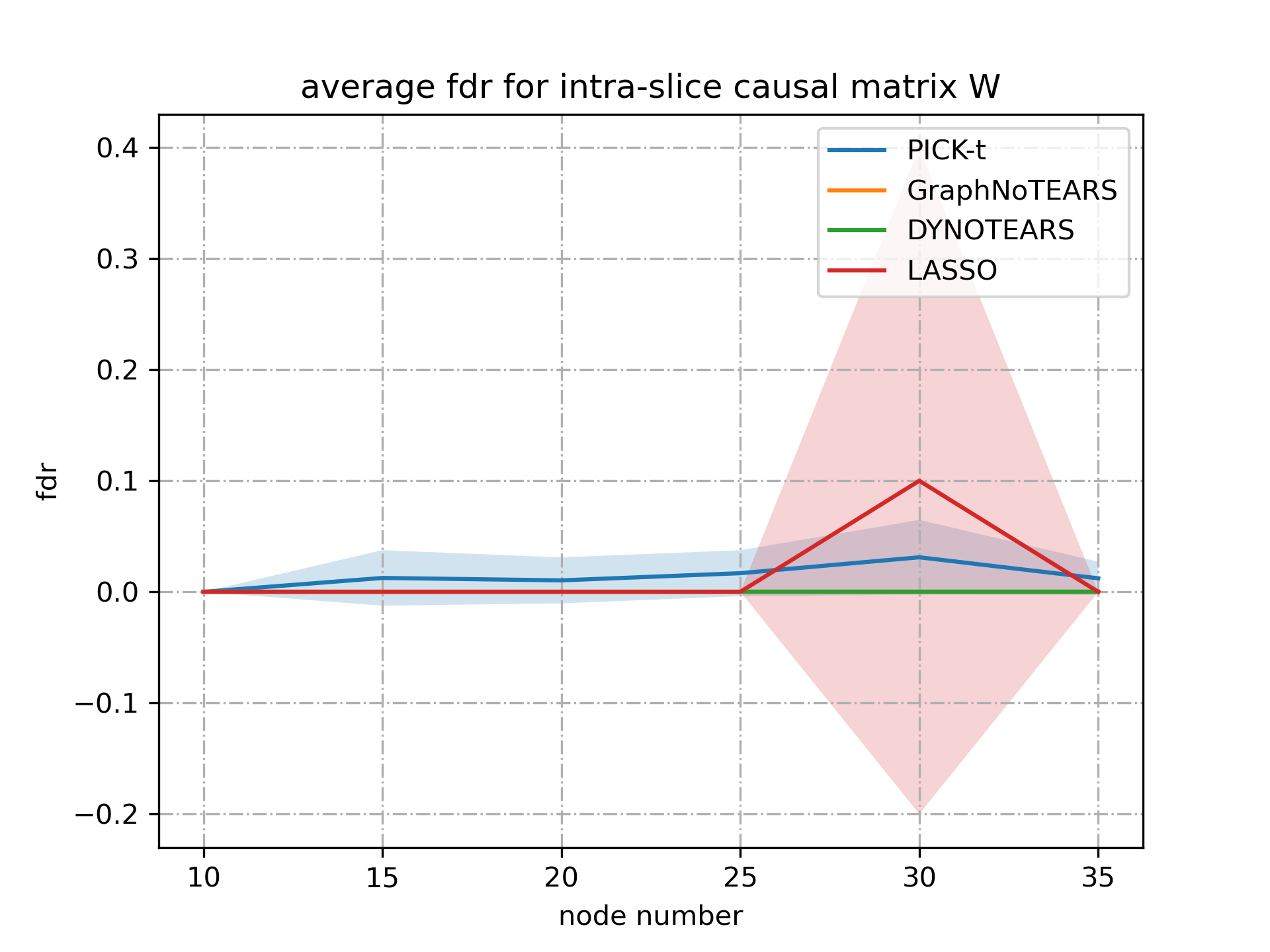}
  }
  \subfloat[ER2]
  {      \label{fig:dy-gp-fdr-w-subfig2}\includegraphics[width=0.3\linewidth]{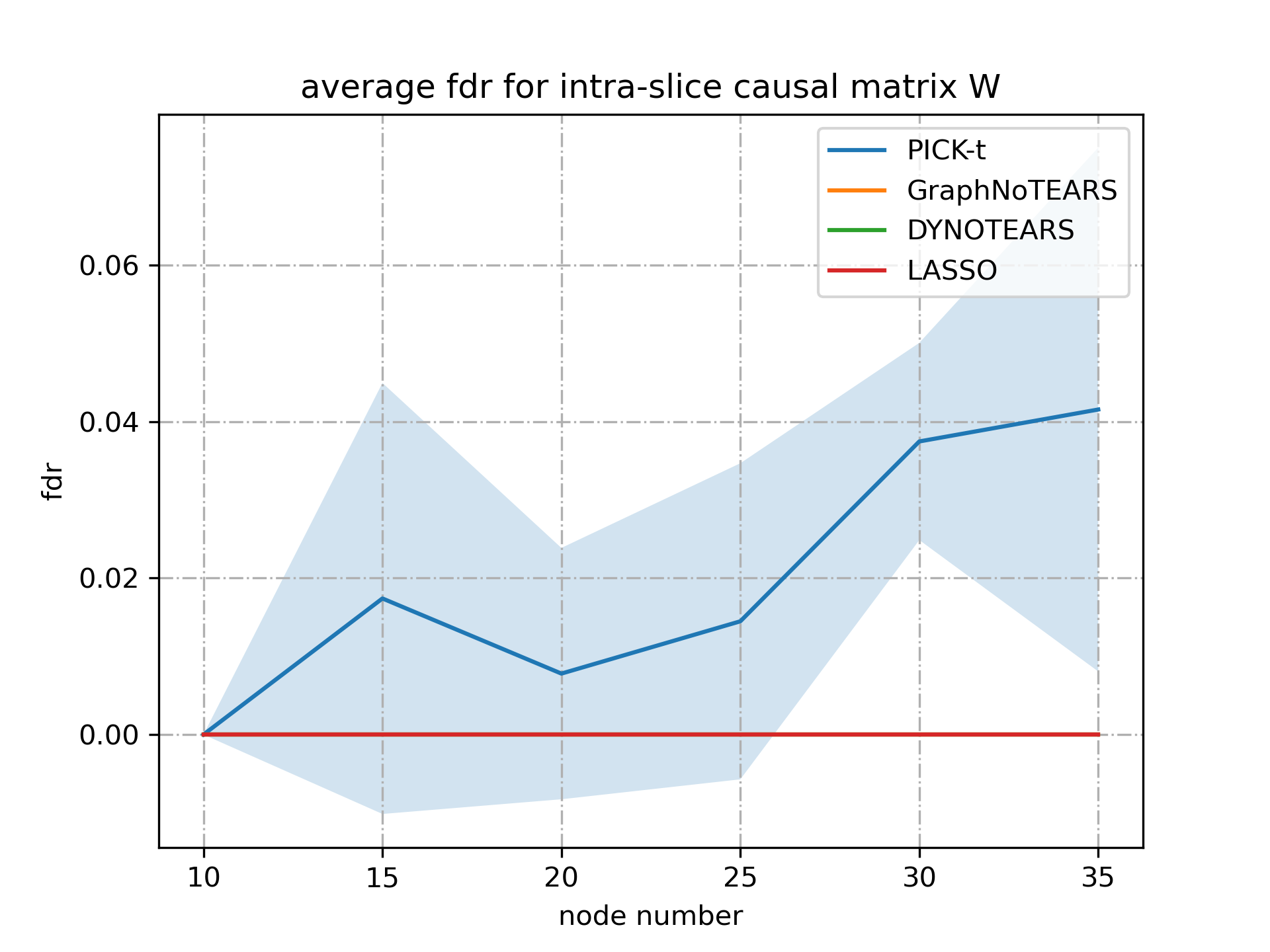}
  }
    \subfloat[ER4]
  {      \label{fig:dy-gp-fdr-w-subfig3}\includegraphics[width=0.3\linewidth]{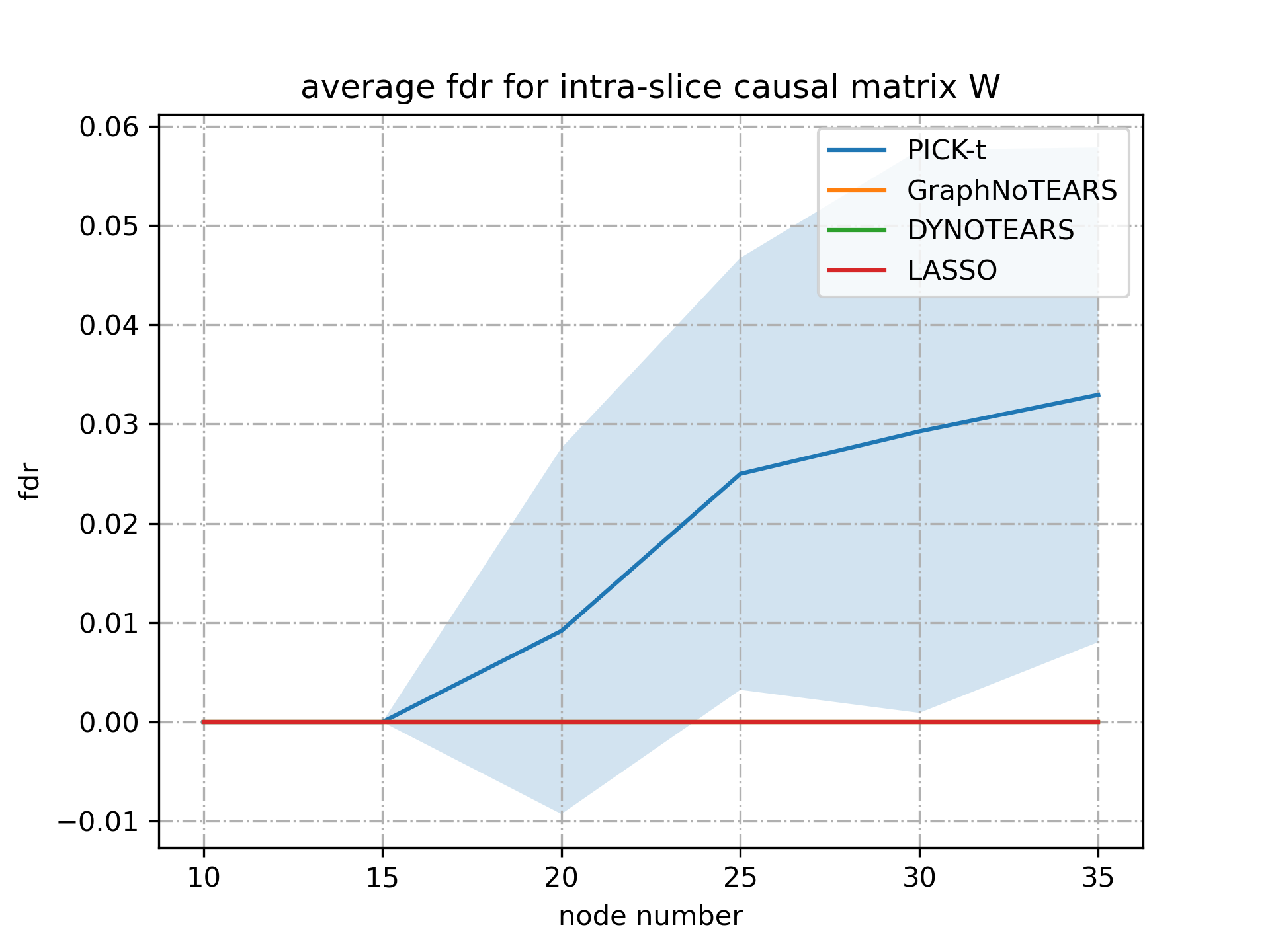}
  }
    \caption{FDR for predicted inter-snapshot causal graph and ground truth inter-snapshot causal graph with link function generated by sampling Gaussian process with a unit bandwidth RBF kernel.}   
  \label{fig:dynamic-fdr-intra-snapshot-gp}          
\end{figure}

\begin{figure}[htbp]
  \centering           
  \subfloat[ER1]   
  {      \label{fig:dy-gp-fdr-p-subfig1}\includegraphics[width=0.3\linewidth]{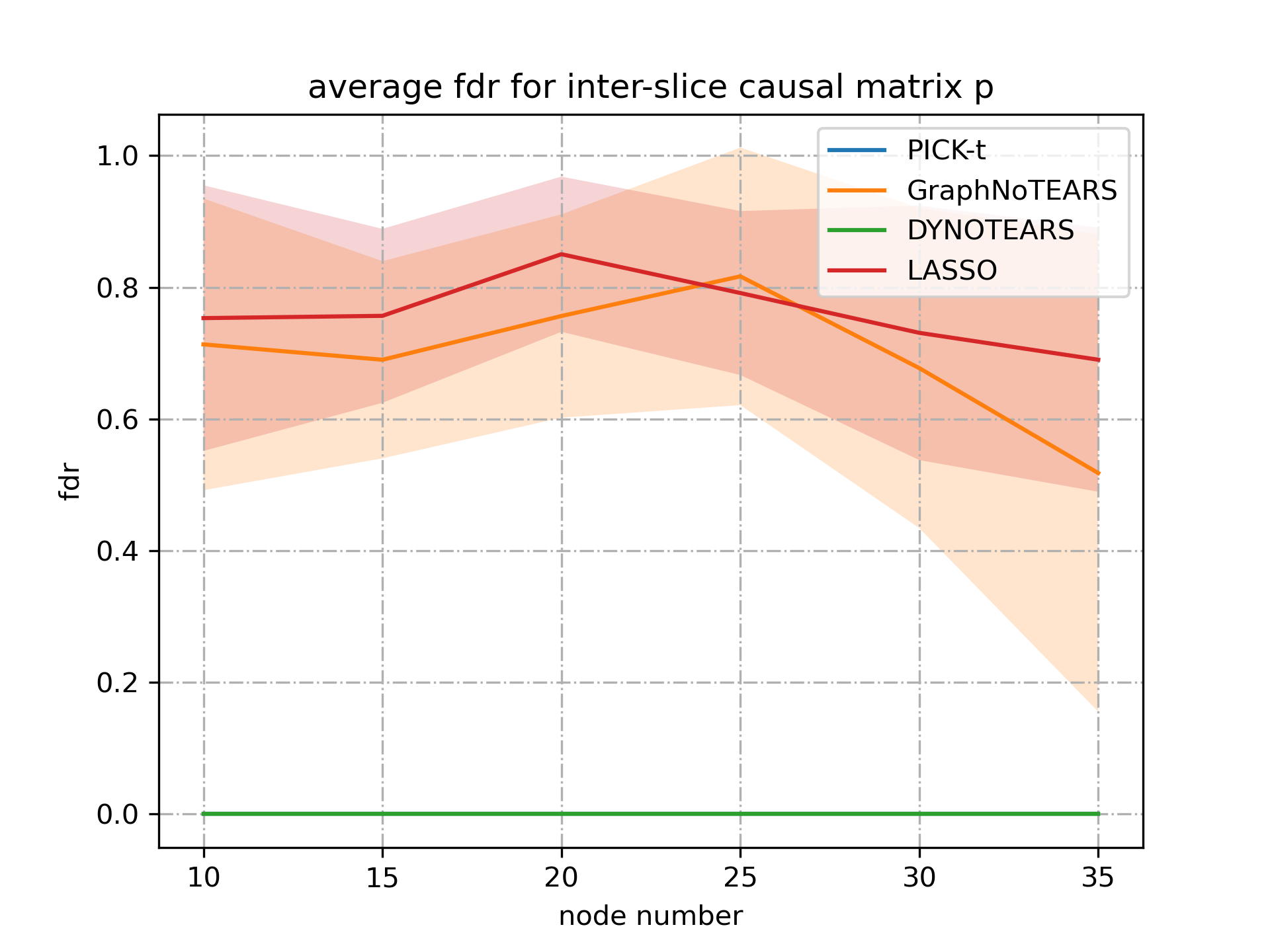}
  }
  \subfloat[ER2]
  {      \label{fig:dy-gp-fdr-p-subfig2}\includegraphics[width=0.3\linewidth]{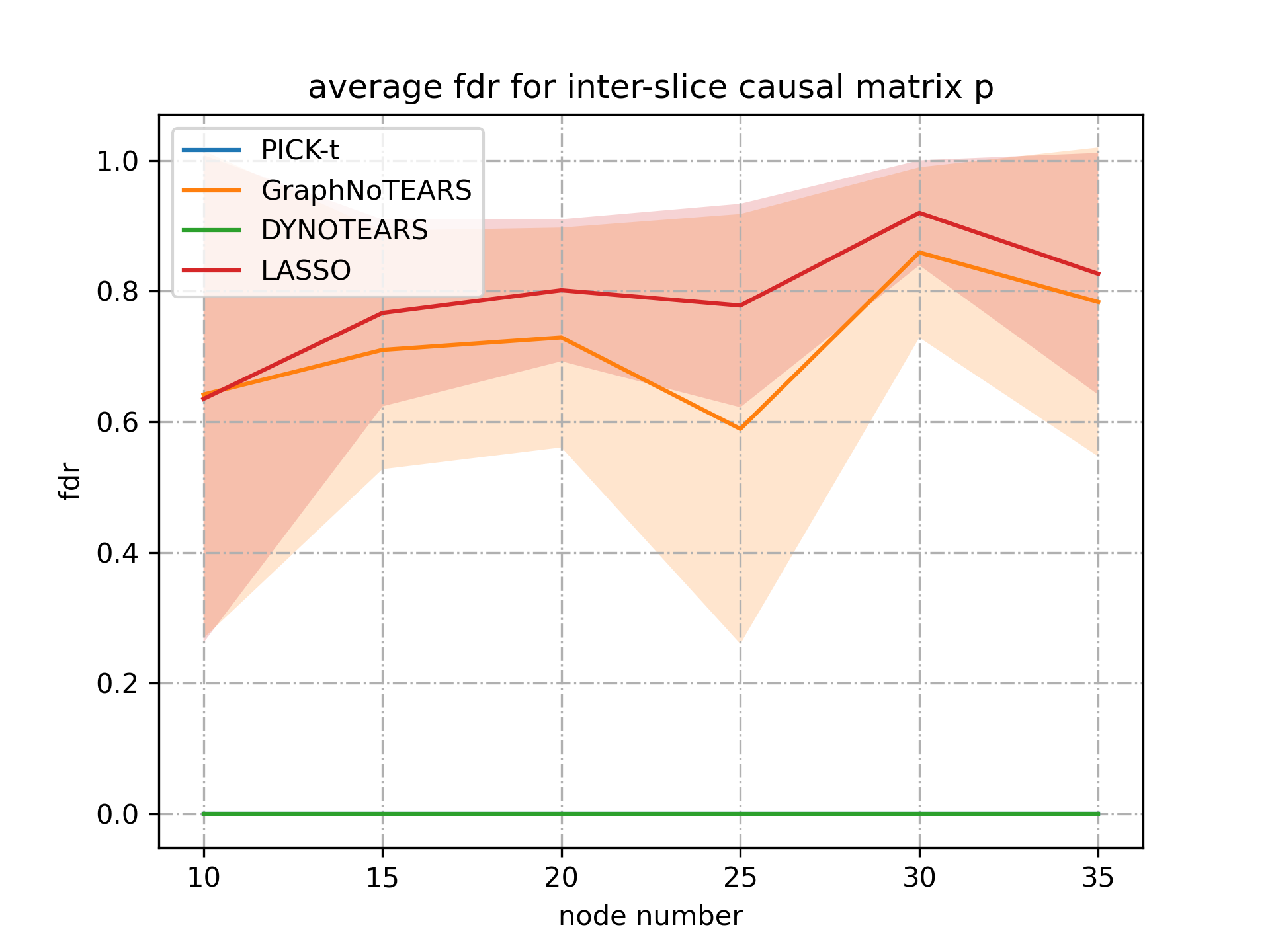}
  }
    \subfloat[ER4]
  {      \label{fig:dy-gp-fdr-p-subfig3}\includegraphics[width=0.3\linewidth]{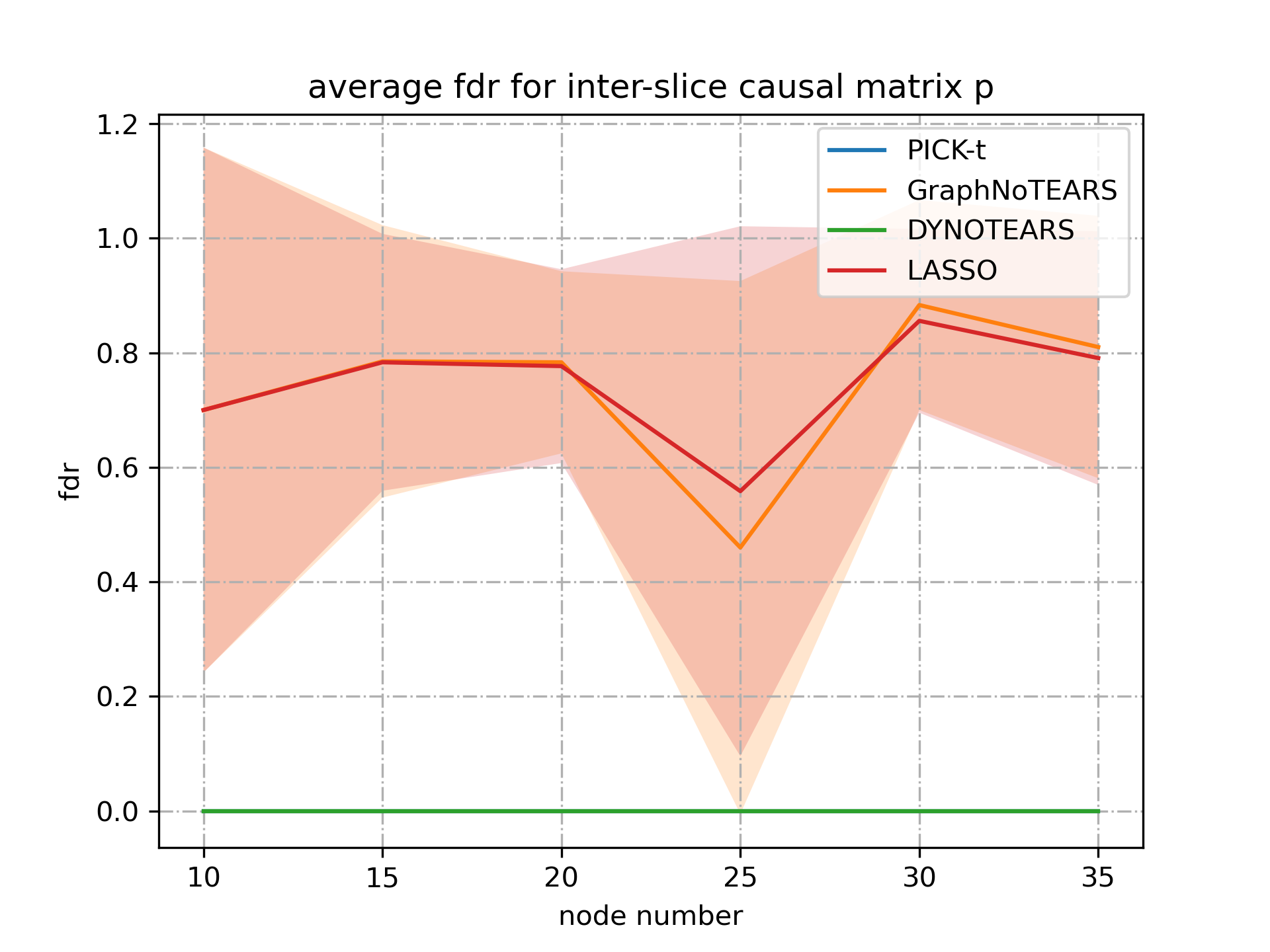}
  }
    \caption{FDR for predicted inter-snapshot causal graph and ground truth inter-snapshot causal graph with link function generated by sampling Gaussian process with a unit bandwidth RBF kernel.}   
  \label{fig:dynamic-fdr-inter-snapshot-gp}          
\end{figure}

\begin{figure}[htbp]
  \centering           
  \subfloat[ER1]   
  {      \label{fig:dy-gp-tpr-w-subfig1}\includegraphics[width=0.3\linewidth]{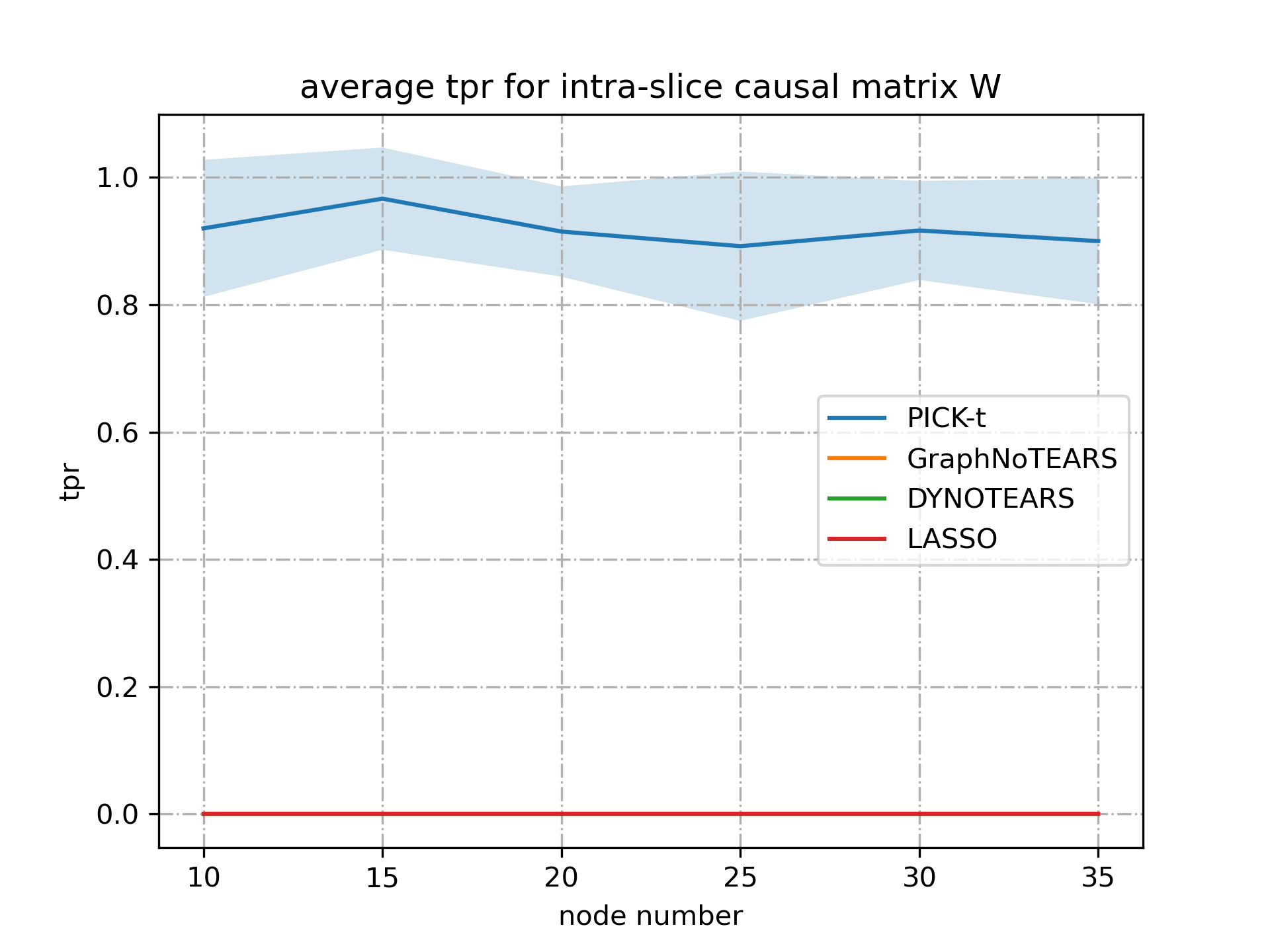}
  }
  \subfloat[ER2]
  {      \label{fig:dy-gp-tpr-w-subfig2}\includegraphics[width=0.3\linewidth]{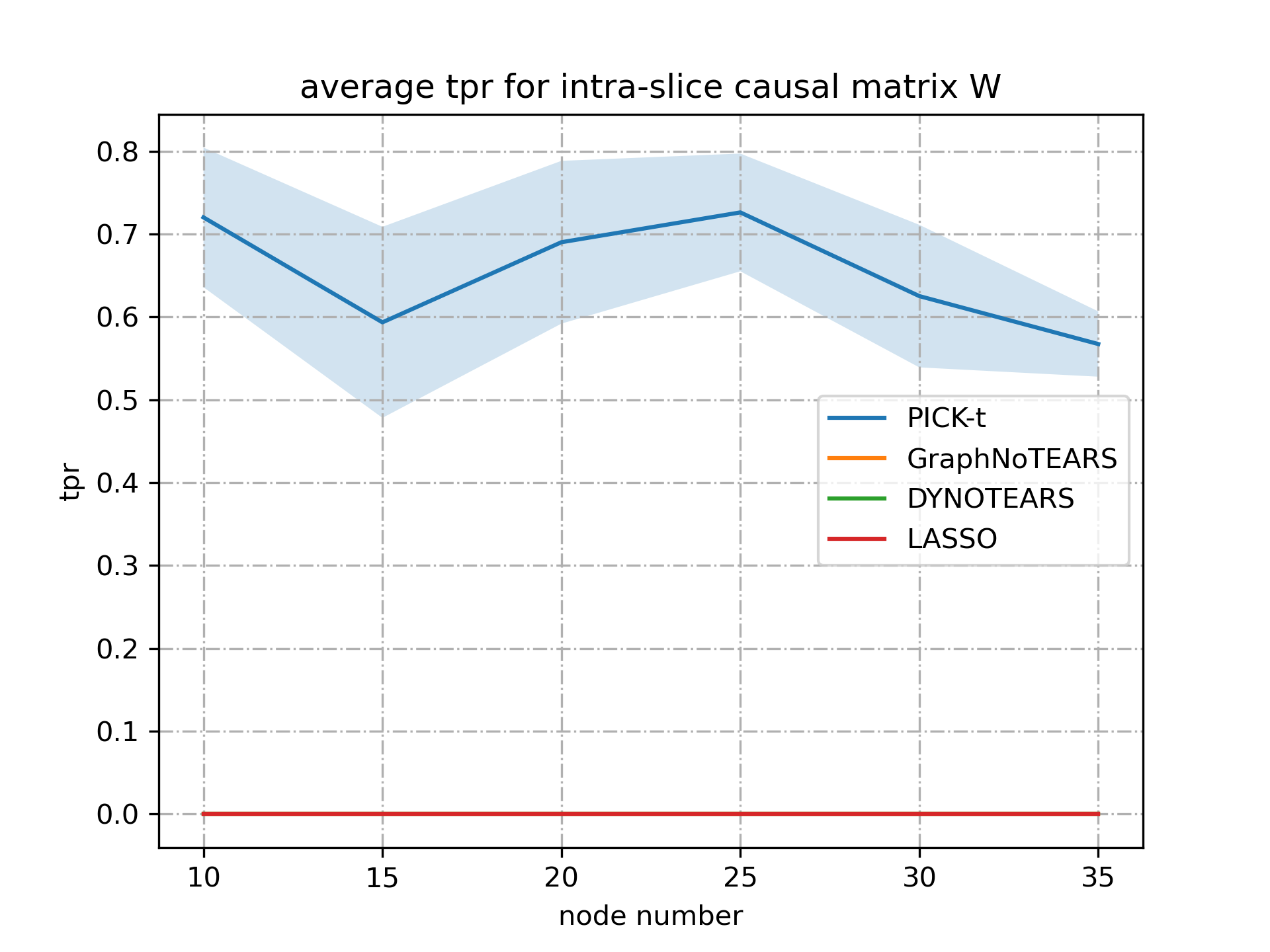}
  }
    \subfloat[ER4]
  {      \label{fig:dy-gp-tpr-w-subfig3}\includegraphics[width=0.3\linewidth]{new_figures/temporal/gp/fdr-for-W-p=1-s0=4d-noisetype=gpdag=ERlag_type=ERd=10-35.png}
  }
    \caption{TPR for predicted inter-snapshot causal graph and ground truth intra-snapshot causal graph with link function generated by sampling Gaussian process with a unit bandwidth RBF kernel.}   
  \label{fig:dynamic-tpr-intra-snapshot-gp}          
\end{figure}

\begin{figure}[htbp]
  \centering           
  \subfloat[ER1]   
  {      \label{fig:dy-gp-tpr-p-subfig1}\includegraphics[width=0.3\linewidth]{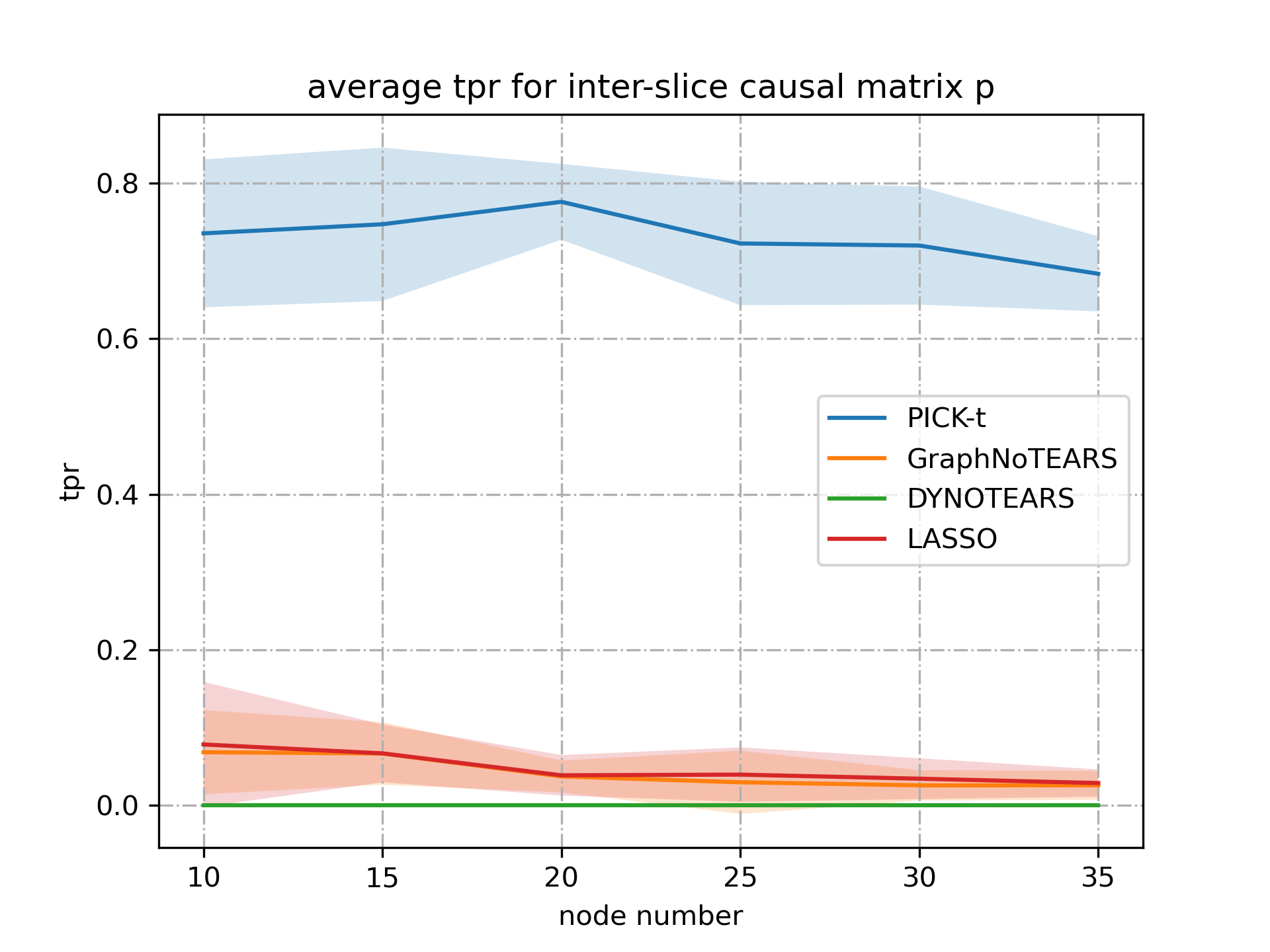}
  }
  \subfloat[ER2]
  {      \label{fig:dy-gp-tpr-p-subfig2}\includegraphics[width=0.3\linewidth]{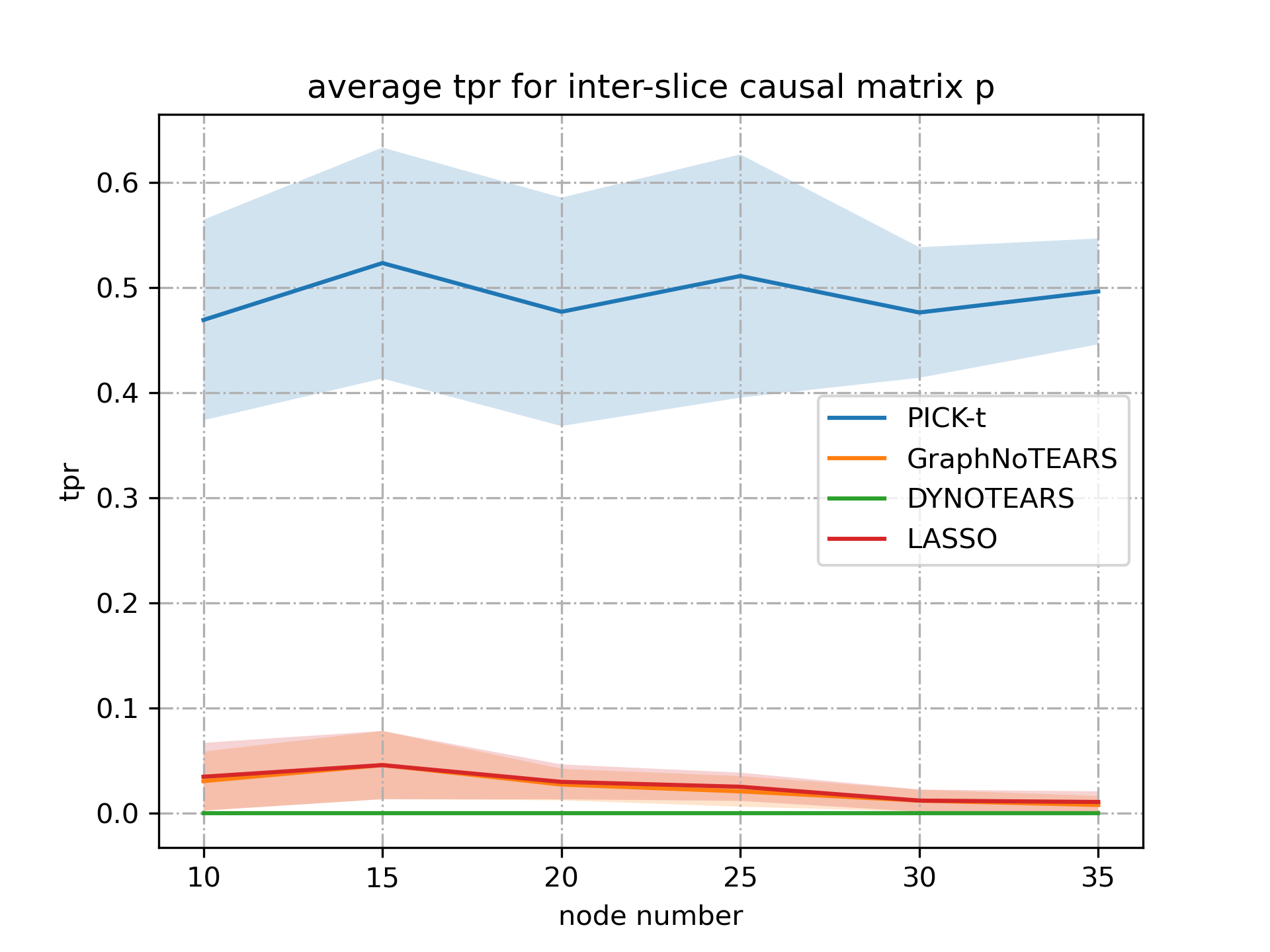}
  }
    \subfloat[ER4]
  {      \label{fig:dy-gp-tpr-p-subfig3}\includegraphics[width=0.3\linewidth]{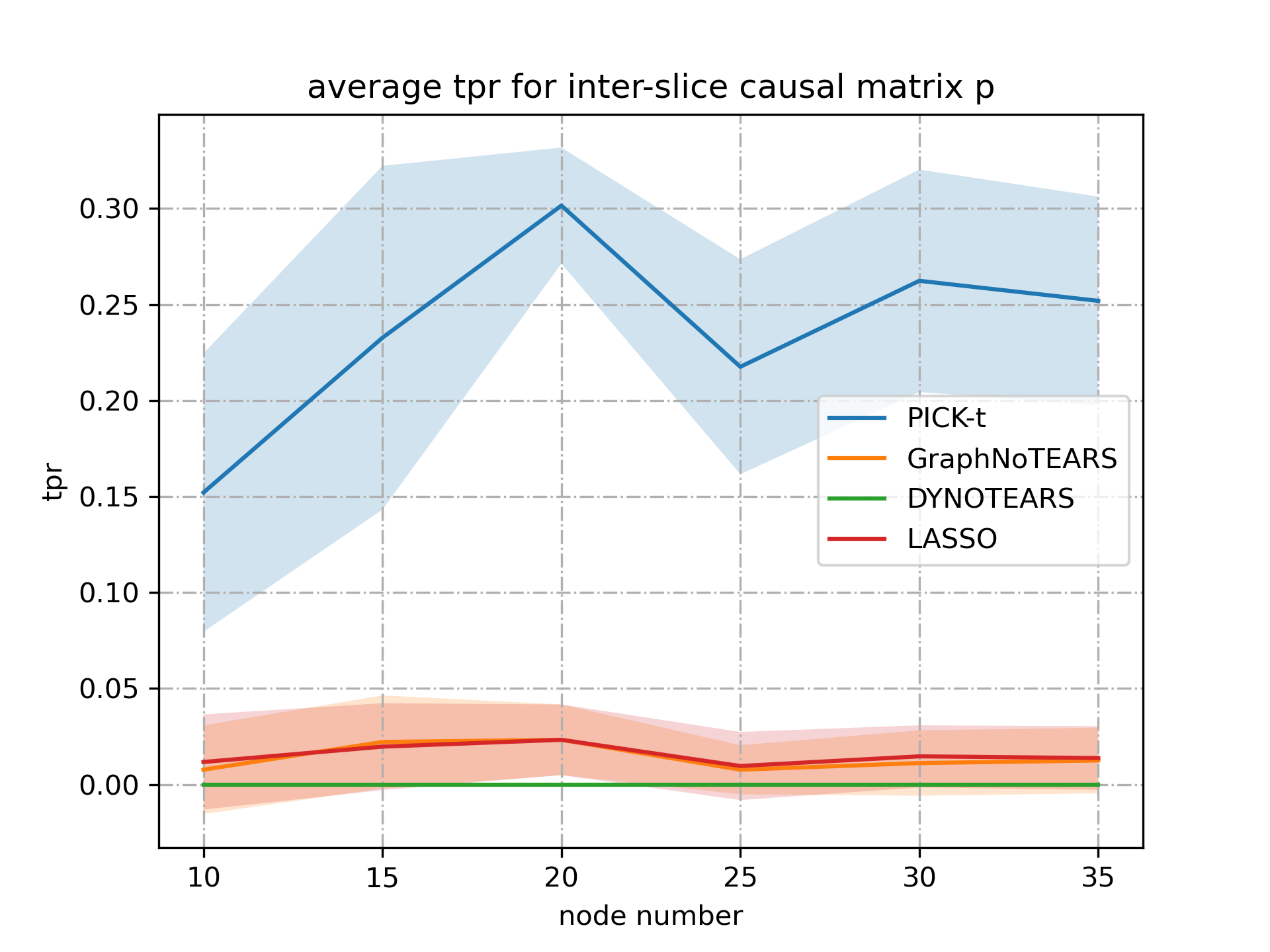}
  }
    \caption{TPR for predicted inter-snapshot causal graph and ground truth inter-snapshot causal graph with link function generated by sampling Gaussian process with a unit bandwidth RBF kernel.}   
  \label{fig:dynamic-tpr-inter-snapshot-gp}          
\end{figure}

\end{document}